\documentclass[journal]{IEEEtran}

\usepackage{xcolor,soul,framed}
\colorlet{shadecolor}{yellow}
\usepackage[pdftex]{graphicx}
\graphicspath{{../pdf/}{../jpeg/}}
\DeclareGraphicsExtensions{.pdf,.jpeg,.png}
\usepackage[font=small]{caption}
\usepackage[cmex10]{amsmath}
\usepackage{array}
\usepackage{multirow}
\usepackage{booktabs, siunitx}
\usepackage{eqparbox}
\usepackage{url}
\usepackage{amsfonts}
\usepackage{amsthm}
\newtheorem{theorem}{Theorem}

\newtheorem{corollary}{Corollary}

\usepackage{amssymb}
\usepackage{bm}
\usepackage{pifont}
\newcommand{\cmark}{\ding{51}}
\newcommand{\xmark}{\ding{55}}
\usepackage{ulem}

\usepackage{algorithm}
\usepackage{algpseudocode}

\usepackage{algpseudocode}
\usepackage{makecell}
\usepackage{subcaption}
\usepackage{longtable}
\usepackage{multicol}
\usepackage{tablefootnote}
\usepackage{todonotes}
\usepackage{float}
\usepackage{adjustbox}
\usepackage{tabularx}
\usepackage{hyperref}
\usepackage{dirtytalk}
\usepackage{morewrites}
\begin{document}
    \title{CIWI-CKT: Chaos-Informed Wave Interference Feature Fusion and Cross-City Knowledge Transfer for Traffic Flow Forecasting}
    
  \author{Abdul Joseph Fofanah, ~\IEEEmembership{Member,~IEEE,}
        Lian Wen,~\IEEEmembership{Member,~IEEE,}
        David Chen,~\IEEEmembership{Member,~IEEE}
        Shaoyang Zhang,~\IEEEmembership{Member,~IEEE}

 \thanks{This work was supported in part by Griffith University under Grant 58455.}
 
\thanks{Abdul Joseph Fofanah, Lian Wen, and David Chen are with the School of Information and Communication Technology, Griffith University, Brisbane, 4111, Australia. (e-mail: abdul.fofanah, l.wen, david.chen, orcid: 0000-0001-8742-9325; 0000-0002-2840-6884; 0000-0001-8690-7196)}

\thanks{Shaoyang Zhang is with the School of Information Engineering, Chang'an University, Xi'an, China (email: zhsy@chd.edu.cn, orcid=0000-0003-4526-9479)}

\thanks{\textit{Corresponding Author:}  abdul.fofanah@griffithuni.edu.au}
}


\maketitle

\begin{abstract}
Accurate traffic flow prediction remains challenging in cross-city, data-scarce scenarios where limited historical data hinders model generalisation. The chaotic nature of traffic dynamics, complex spatio-temporal dependencies, and heterogeneous urban networks complicate few-shot learning across cities. Existing deep learning approaches either treat traffic as purely deterministic or lack mechanisms to model wave-like interference patterns essential for cross-regime traffic dynamics. To address these limitations, this paper proposes CIWI-CKT, a novel Chaos-Informed Wave Interference Feature Fusion framework with Cross-City Knowledge Transfer. Our framework introduces three core innovations: chaos-informed wave generation that extracts measurable chaos invariants and models traffic as adaptive wave components; meta-interference processing that captures wave interactions between support and query regimes while producing a predictability score for confidence estimation; and chaos-aware meta-learning that enables efficient cross-city knowledge transfer while preserving chaotic characteristics. We establish theoretical guarantees including chaos-to-wave stability, wave-induced dimension reduction, and meta-learning generalisation bounds. Extensive experiments on four real-world traffic datasets demonstrate that CIWI-CKT significantly outperforms state-of-the-art spatio-temporal graph learning, transfer learning, prompt-based, and few-shot methods, improving prediction accuracy while substantially reducing required training data.
\end{abstract}

\begin{IEEEkeywords}
Chaos Theory, Wave Interference, Meta-Learning, Spatio-Temporal Forecasting, Cross-City Transfer, Dynamical Systems, Traffic Flow Forecasting
\end{IEEEkeywords}

\IEEEpeerreviewmaketitle

\section{Introduction}
\label{sec:introduction}

\IEEEPARstart{A}{ccurate} traffic flow forecasting is fundamental to intelligent transportation systems, enabling real-time route guidance, proactive congestion management, and sustainable urban mobility \cite{li2017diffusion, wu2020connecting, zhang2021traffic}. The ability to predict traffic states accurately allows city planners and commuters to make informed decisions that reduce travel time, fuel consumption, and environmental impact. However, despite decades of research and significant advances in deep learning, achieving precise predictions remains challenging, particularly in data-scarce scenarios and across heterogeneous urban environments.

Three fundamental obstacles contribute to this difficulty. First, traffic dynamics exhibit intrinsic chaotic behaviour, where small perturbations can lead to vastly different outcomes, making long-term prediction inherently uncertain \cite{shang2005chaotic, dendrinos1994traffic}. Second, traffic patterns involve complex spatio-temporal dependencies that vary across different times of day, days of week, and seasonal conditions \cite{ji2023spatio, liu2024spatial}. Third, and most critically for this work, models trained on one city often fail to generalise to another due to differences in road network topology, traffic regulations, travel behaviour, and data collection standards \cite{Lu2022SpatioTemporalGF, Jin2023TransferableGS, yang2025cross}. These challenges are amplified in few-shot scenarios where only limited historical data is available for a target city, such as newly deployed sensor networks or cities with restricted data access.

\begin{figure}[t]
\centering
\includegraphics[width=\linewidth]{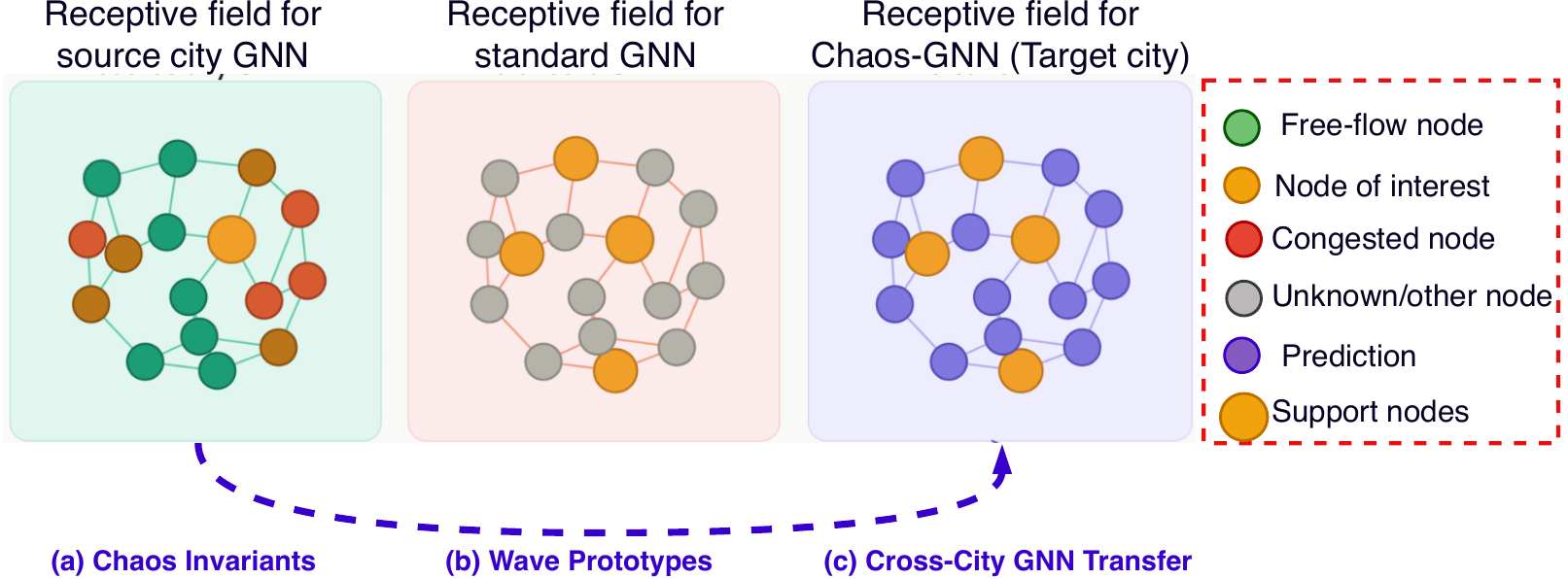}
\caption{Illustration of the cross-city few-shot prediction problem and proposed solution: (a) Source city: full regime labels from chaos invariants; (b) Target city with standard GNN: only three support nodes labelled, remaining unknown (grey); (c) Target city with CIWI-CKT: chaos invariants and wave prototypes transferred via prototype bank (dashed arc) enable full-graph prediction (purple).}
\label{fig:motivation}
\end{figure}

Existing approaches predominantly rely on data-intensive deep learning models that require extensive historical data from target cities \cite{carianni2025overview, zheng2024exploring, jia2020predicting}. While effective in data-rich settings, they treat traffic as deterministic or purely stochastic, neglecting the chaotic dynamics that govern real-world traffic flow \cite{rodrigues2016kuramoto, tang2014synchronization}. They also lack mechanisms to model the superposition and interference of traffic patterns, which is analogous to wave dynamics in physical systems \cite{yang2024wavenet}, and fail to enable efficient cross-city knowledge transfer. Crucially, they do not leverage quantifiable chaos invariants such as Lyapunov exponents, correlation dimensions, and sample entropy that characterise traffic regime transitions \cite{qian2023towards, shabaz2025ai}. This oversight limits their adaptability in few-shot scenarios where data is scarce and uncertainty is high. Recent advances in physics-informed learning \cite{raissi2019physics, ji2022stden, li2024physics} and meta-learning \cite{hospedales2021meta, finn2017model} have shown promise for improving generalisation under limited data, yet these methods either lack explicit chaos modelling or fail to capture the wave-like interference patterns essential for traffic dynamics.

Fig.~\ref{fig:motivation} illustrates our core motivation. Traffic systems exhibit two fundamental properties: (i) measurable chaotic signatures (Lyapunov exponents, correlation dimensions, sample entropy) that characterise free-flow, synchronised, and congested regimes, serving as city-agnostic fingerprints \cite{shang2005chaotic, dendrinos1994traffic}; and (ii) wave-like congestion propagation where peaks merge, shockwaves interact, and platoon dispersion creates constructive/destructive patterns \cite{tang2014synchronization, yang2024wavenet}. As shown in Fig.~\ref{fig:motivation}(a), a data-rich source city enables full regime labelling. However, standard GNNs fail in data-scarce target cities: with only three labelled support nodes, the receptive field collapses (Fig.~\ref{fig:motivation}(b)). By integrating chaos theory and wave interference, we extract chaos features without full labels, decompose traffic into adaptive wave components, and treat support examples as reference waveforms that interfere with query representations. As demonstrated in Fig.~\ref{fig:motivation}(c), transferring chaos invariants and wave prototypes via a cross-city prototype bank (dashed arc) recovers full-graph coverage (purple) with minimal support, proving chaos signatures bridge the domain gap.

Building on this intuition, we propose \textit{CIWI-CKT (Chaos-Informed Wave Interference Feature Fusion with Cross-City Knowledge Transfer)}, a unified framework that operationalises chaos theory, wave interference dynamics, meta-learning, and cross-city prototype transfer for few-shot cross-city traffic forecasting. Traffic prediction is reconceptualised as a chaos-informed wave interference problem, where patterns emerge from the superposition of adaptive wave components and their interactions are processed through a dedicated meta-interference module.

Our main contributions are threefold: \textit{\textbf{First}, we introduce a differentiable statistical chaos proxy that computes chaos features on-device without CPU round-trip, coupled with a chaos-aware wave generator that produces adaptively modulated wave components. \textbf{Second}, we propose a meta-interference processor that models support-query wave interactions with learnable interference weights and a predictability score for confidence estimation. \textbf{Third}, we develop a cross-city knowledge transfer mechanism via a learnable city prototype bank with cross-city attention and gated fusion, regularised by a prototype alignment loss. Extensive experiments on four real-world traffic datasets demonstrate that our framework achieves state-of-the-art performance in few-shot cross-city settings while requiring substantially less training data and adaptation time than existing methods.}

The remainder of the paper is structured as follows. Section~\ref{sec:preliminary} presents the problem formulation and preliminaries. Section~\ref{sec:method} details the CIWI-CKT methodology. Section~\ref{sec:theoretical_analysis} provides theoretical analysis. Section~\ref{sec:experiments} reports experimental results. Section~\ref{sec:conclusion} concludes the paper.
\section{Related Works}
\label{sec:related_works}

\subsection{Traditional and Graph-based Spatio-temporal Methods}
Traffic prediction has evolved from early statistical models such as ARIMA and kernel-based regression \cite{li2017diffusion} to deep learning architectures, including CNN/RNN hybrids like ST-ResNet \cite{rw_1} and LSTM-based models \cite{wang2020long}. More recently, graph-based approaches such as DCRNN \cite{Weng2023ADD} and STGCN \cite{intro_2} incorporate road network topology, while attention-enhanced models like ASTMGCNet \cite{ali2025attention} and LOGO \cite{chi2025spatio} capture long-range dependencies. Advanced spatio-temporal graph learning methods include ST-DTNN \cite{Zhou2020SpatialTemporalDT}, CHAMFormer \cite{fofanah2025chamformer} which employs channel-aware hierarchical attention, DDGCRN \cite{Weng2023ADD} with dynamic diffusion graph convolutional recurrent networks, and FOGS \cite{Rao2022FOGSFG} leveraging frequency-oriented graph spatio-temporal learning. Despite their success, these methods face fundamental limitations in few-shot cross-city settings as summarised in Table~\ref{tab:lit_categorisation}.

\subsection{Chaos Theory Applications and Meta-Learning for Traffic}
Chaos theory has been applied to traffic analysis for decades, with studies demonstrating chaotic characteristics in traffic flow data \cite{dendrinos1994traffic} and using chaos invariants for short-term prediction \cite{shang2005chaotic}. Recent efforts have integrated chaos features into neural networks \cite{karpenko2002regular}, but often lack end-to-end differentiability \cite{fofanah2026cast}. In parallel, few-shot learning for traffic prediction has emerged, including adaptations of Model-Agnostic Meta-Learning (MAML) \cite{intro_14} and metric-based methods \cite{Jin2023TransferableGS, fofanah2026pimcst}. However, these approaches struggle with high-dimensional spatio-temporal data and training instability \cite{fofanah2026cast}. Physics-informed neural networks (PINNs) \cite{cai2021physics} have been used to incorporate physical constraints into traffic modelling \cite{yuan2021traffic, fofanah2025wavefsl}, yet they typically assume known governing equations and do not address cross-city generalisation.

\subsection{Cross-City Transfer Learning and Prompt-Based Methods}
Cross-city transfer learning has gained significant attention for addressing data scarcity in new urban environments. DTAN \cite{Li2022NetworkscaleTP} employs domain transfer adversarial networks, while DASTNet \cite{Tang2022DomainAS} uses domain-adaptive spatio-temporal learning. ST-GFSL \cite{Lu2022SpatioTemporalGF} introduces spatio-temporal graph few-shot learning, and TPB \cite{Liu2023CrosscityFT} leverages transferable prompt-based learning. TransGTR \cite{Jin2023TransferableGS} proposes transferable graph transformers, and Cross-IDR \cite{yang2025cross} learns cross-domain implicit disentangled representations. Recent advances in prompt-based spatio-temporal methods represent the latest state-of-the-art: STGP \cite{Hu2024PromptBasedSG} introduces spatio-temporal graph prompting, DynAGS \cite{duan2025dynamic} uses dynamic adaptive graph prompting, PromptST \cite{zhang2023promptst} employs prompt-based spatio-temporal learning, ProST \cite{xia2025prost} leverages progressive spatio-temporal prompting, and FlashST \cite{li2024flashst} enables fast adaptive spatio-temporal prompting. These methods also incorporate uncertainty quantification \cite{qian2023towards} as an important step forward but lack integration with chaos theory and wave physics.

\begin{table}[h]
\centering
\caption{Categorisation of representative traffic forecasting models. FS = Few-Shot support; CI = Chaos-Informed; CW = Chaos + Wave support; \cmark~= supported; \xmark~= not supported.}
\label{tab:lit_categorisation}
\fontsize{6}{8}\selectfont
\setlength{\tabcolsep}{2pt}
\begin{tabular}{@{}l c c c | l c c c@{}}
\toprule
\textbf{Model} & \textbf{FS} & \textbf{CI} & \textbf{CW}
  & \textbf{Model} & \textbf{FS} & \textbf{CI} & \textbf{CW} \\
\midrule
\multicolumn{4}{c|}{\textit{ST Graph Learning}}
  & \multicolumn{4}{c}{\textit{Transfer Learning}} \\
\midrule
DCRNN \cite{li2017diffusion}               & \xmark & \xmark & \xmark
  & DTAN \cite{Li2022NetworkscaleTP}             & \cmark & \xmark & \xmark \\
STGCN \cite{wu2020connecting}              & \xmark & \xmark & \xmark
  & DASTNet \cite{Tang2022DomainAS}              & \cmark & \xmark & \xmark \\
ST-DTNN \cite{Zhou2020SpatialTemporalDT}   & \xmark & \xmark & \xmark
  & ST-GFSL \cite{Lu2022SpatioTemporalGF}        & \cmark & \xmark & \xmark \\
CHAMFormer \cite{fofanah2025chamformer}    & \xmark & \xmark & \xmark
  & TPB \cite{Liu2023CrosscityFT}                & \cmark & \xmark & \xmark \\
DDGCRN \cite{Weng2023ADD}                  & \xmark & \xmark & \xmark
  & TransGTR \cite{Jin2023TransferableGS}        & \cmark & \xmark & \xmark \\
FOGS \cite{Rao2022FOGSFG}                  & \xmark & \xmark & \xmark
  & Cross-IDR \cite{yang2025cross}               & \cmark & \xmark & \xmark \\
Graph WaveNet \cite{wu2019graph}           & \xmark & \xmark & \xmark
  & WaveFSL \cite{fofanah2025wavefsl}            & \cmark & \cmark & \xmark \\
MTGNN \cite{wu2020connecting}              & \xmark & \xmark & \xmark
  & PIMCST \cite{fofanah2026pimcst}              & \cmark & \cmark & \xmark \\
ASTMGCNet \cite{ali2025attention}          & \xmark & \xmark & \xmark
  & CAST-CKT \cite{fofanah2026cast}              & \cmark & \cmark & \xmark \\
LOGO \cite{chi2025spatio}                  & \xmark & \xmark & \xmark
  & & & & \\
\cmidrule(lr){5-8}
ST-ResNet \cite{rw_1}                      & \xmark & \xmark & \xmark
  & \multicolumn{4}{c}{\textit{Prompt-Based}} \\
\cmidrule(lr){5-8}
LSTM-ED \cite{wang2020long}                & \xmark & \xmark & \xmark
  & STGP \cite{Hu2024PromptBasedSG}              & \cmark & \xmark & \xmark \\
MS-TGNN \cite{wu2020connecting}            & \xmark & \xmark & \xmark
  & DynAGS \cite{duan2025dynamic}                & \cmark & \xmark & \xmark \\
HGCN \cite{wu2020connecting}               & \xmark & \xmark & \xmark
  & PromptST \cite{zhang2023promptst}            & \cmark & \xmark & \xmark \\
AGCRN \cite{wu2020connecting}              & \xmark & \xmark & \xmark
  & ProST \cite{xia2025prost}                    & \cmark & \xmark & \xmark \\
GMAN \cite{wu2020connecting}               & \xmark & \xmark & \xmark
  & FlashST \cite{li2024flashst}                 & \cmark & \xmark & \xmark \\
\midrule
\multicolumn{8}{c}{\textbf{CIWI-CKT (Ours)} — Chaos + Wave + Meta-Learning + Cross-City
  \quad \textbf{FS:} \cmark \quad \textbf{CI:} \cmark \quad \textbf{CW:} \cmark} \\
\bottomrule
\end{tabular}
\end{table}

The literature (Table~\ref{tab:lit_categorisation}) reveals a significant gap: no framework unifies chaos theory, wave dynamics, multi-phase processing, and meta-learning for few-shot cross-city traffic prediction. Current approaches either (1) rely on city-specific data, (2) treat traffic as deterministic, (3) ignore wave-interference patterns, or (4) lack multi-phase temporal dynamics. The proposed \textbf{CIWI-CKT} addresses these gaps. We evaluate against fifteen (15) state-of-the-art baselines (Table~\ref{tab:lit_categorisation}).
\section{Preliminaries}
\label{sec:preliminary}

We formalise the chaos-informed wave interference meta-learning problem for cross-city few-shot traffic forecasting.

\noindent \textit{Definition I: Traffic Network with City Context.} 
\label{pre:def1}
A traffic network for city \(c\) is a graph \(\mathcal{G}_c = (\mathcal{V}, \mathcal{E}, \mathbf{X}^{(t)})\), where \(\mathcal{V}\) are \(N\) sensors, \(\mathcal{E} \subseteq \mathcal{V} \times \mathcal{V}\) denotes spatial dependencies, and \(\mathbf{X}^{(t)} \in \mathbb{R}^{N \times D}\) contains \(D\) traffic measurements at time \(t\). For few-shot cross-city forecasting, we have source cities \(\{\mathcal{S}_k\}_{k=1}^{K}\) with abundant data and a target city \(\mathcal{T}\) with limited support data \(\mathcal{D}_{\mathcal{T}}^{\text{supp}} = \{(\mathbf{X}_i, \mathbf{Y}_i)\}_{i=1}^{N_s}\), where \(\mathbf{Y}_i \in \mathbb{R}^{H \times D_{\text{out}}}\) are the next \(H\) time steps.

\noindent \textit{Definition II: Statistical Chaos Proxy.} 
\label{pre:def2}
The differentiable statistical chaos proxy \(\Phi: \mathbb{R}^{B \times T \times D} \rightarrow \mathbb{R}^{B \times d_c}\) computes \(d_c\) chaos-related statistics in one forward pass:
\[
\Phi(\mathbf{X}) = [\mu,\ \sigma^2,\ \gamma_1,\ \kappa,\ \rho_1,\ \Delta_{\max},\ \Delta_{\min},\ r,\ \tau,\ e,\ c_v,\ \ldots]^{\top},
\]
where \(\mu\) is the mean, \(\sigma^2\) the variance, \(\gamma_1\) the skewness, \(\kappa\) the kurtosis, \(\rho_1\) the lag-1 autocorrelation, \(\Delta_{\max}\) and \(\Delta_{\min}\) the maximum and minimum consecutive differences, \(r\) the range (max-min), \(\tau\) the linear trend coefficient, \(e\) the $L_2$ energy norm, and $c_v$ the coefficient of variation ($\sigma/\mu$). All operations are vectorised and differentiable, with $d_c = 20$ in our implementation.

\noindent \textit{Definition III: Chaos-Aware Wave Representation.} 
\label{pre:def3}
Traffic patterns are represented as a superposition of \(N_w\) learnable wave components:
\begin{equation}
    \mathbf{W}^{(n)}(t) = A^{(n)} \odot \psi_n\!\left(k^{(n)} t + \phi^{(n)}\right),
\end{equation}
where amplitudes \(A^{(n)} \in \mathbb{R}^{B \times T \times D}\), wave numbers \(k^{(n)} \in \mathbb{R}^{B \times D}\), and phases \(\phi^{(n)} \in \mathbb{R}^{B \times D}\) are generated by MLPs conditioned on chaos features \(\mathbf{C} = \mathcal{E}_{\theta}(\Phi(\mathbf{X}))\). The chaos encoder \(\mathcal{E}_{\theta}: \mathbb{R}^{B \times d_c} \rightarrow \mathbb{R}^{B \times d_c}\) is a two-layer MLP with GELU, LayerNorm, and Tanh activations. The basis functions \(\psi_n\) alternate between \(\sin\) and \(\cos\) with fixed phase offsets.

\noindent \textit{Definition IV: Meta-Interference of Support and Query Waves.} 
\label{pre:def4}
Given support waves \(\{\mathbf{W}^{s,(n)}\}_{n=1}^{N_w}\) and query waves \(\{\mathbf{W}^{q,(n)}\}_{n=1}^{N_w}\) from a few-shot task, the meta-interference processor \(\mathcal{I}_{\eta}\) produces an interference pattern:
\begin{equation}
    \tilde{\mathbf{I}}(t) = \sum_{n=1}^{N_w} \beta^{(n)} \odot \left(\mathbf{W}^{s,(n)} + \mathbf{W}^{q,(n)} + \gamma \cdot \mathbf{W}^{s,(n)} \odot \mathbf{W}^{q,(n)}\right),
\end{equation}
where \(\beta^{(n)}: \mathbb{R}^{D_m} \rightarrow \mathbb{R}^{D}\) are learnable interference weight functions (implemented as MLPs with sigmoid output), \(\gamma \in [0,1]\) is a fixed nonlinear interaction strength hyperparameter, and a predictability score \(s_{\text{pred}} \in [0,1]\) is computed via an MLP from the meta-context.

\noindent \textit{Definition V: Meta-Learning Predictor with Prototype Adaptation.} 
\label{pre:def5}
The meta-learning predictor \(\mathcal{M}_{\omega}\) encodes support features \(\tilde{\mathbf{X}}^s \in \mathbb{R}^{B_s \times T \times D}\) and query features \(\tilde{\mathbf{X}}^q \in \mathbb{R}^{B_q \times T \times D}\) through two-layer MLPs, computes a meta-prototype \(\mathbf{P} \in \mathbb{R}^{D_p}\) from the mean support features, and produces adapted query features via concatenation with the expanded prototype and meta-context \(\mathbf{M}_{\text{ctx}} \in \mathbb{R}^{D_m}\).

\noindent \textit{Definition VI: Cross-City Knowledge Transfer via Prototype Bank.} 
\label{pre:def6}
A learnable city prototype bank \(\mathcal{B} = \{\mathbf{p}_c \in \mathbb{R}^{D_m}\}_{c \in \mathcal{C}}\) stores \(D_m\)-dimensional embeddings for each city \(c\) in the set of known cities \(\mathcal{C}\). Cross-city attention retrieves transferable knowledge via \(\mathbf{c}_{\text{cross}} = \text{softmax}(\mathbf{q}\mathbf{K}^{\top}/\sqrt{D_m})\mathbf{V}\) with \(\mathbf{q} = \mathbf{p}_c\) and \(\mathbf{K}, \mathbf{V} = \mathbf{P}_{\text{all}}\). Gated fusion combines own prototype with cross-city context: \(\mathbf{p}_c^{\text{trans}} = \mathbf{g} \odot \mathbf{p}_c + (1-\mathbf{g}) \odot \mathbf{c}_{\text{cross}}\), where \(\mathbf{g} = \sigma(\mathbf{W}_g[\mathbf{p}_c \oplus \mathbf{c}_{\text{cross}}] + \mathbf{b}_g)\). A prototype alignment loss \(\mathcal{L}_{\text{align}}\) enforces self-consistency and diversity.

\noindent \textit{Problem: Chaos-Informed Wave Interference Meta-Learning for Cross-City Few-Shot Traffic Forecasting.} 
Given source cities \(\{\mathcal{S}_k\}\) with datasets \(\{\mathcal{D}_{\mathcal{S}_k}\}\), a target city \(\mathcal{T}\) with support set \(\mathcal{D}_{\mathcal{T}}^{\text{supp}}\), and historical observations \(\mathbf{X}_{[t-T+1:t]}\), learn a model \(F_{\Theta}\) that integrates: statistical chaos proxy extraction (\(\Phi\)), chaos-aware wave generation (\(\mathcal{G}_{\phi}\)), meta-interference processing (\(\mathcal{I}_{\eta}\)), multi-scale wave attention (\(\mathcal{A}_{\xi}\)), and meta-learning prediction with prototype-based cross-city transfer (\(\mathcal{M}_{\omega}\) and city prototype bank). The model forecasts the next \(H\) steps \(\hat{\mathbf{Y}}_{[t+1:t+H]}\) by minimising a composite loss:
\begin{multline}
    \mathcal{L}_{\text{total}} = \alpha_1 \mathcal{L}_{\text{pred}} + \alpha_2 \|\mathbf{M}_{\text{ctx}}\|_2 + \alpha_3 \mathcal{L}_{\text{interf}} \\
    + \alpha_4 \mathcal{L}_{\text{smooth}} + \alpha_5 \mathcal{L}_{\text{pred\_reg}} + \alpha_6 \mathcal{L}_{\text{align}},
\end{multline}
where \(\mathcal{L}_{\text{pred}}\) combines MSE, MAE, and Huber loss, \(\mathcal{L}_{\text{interf}}\) enforces temporal smoothness of the interference pattern, \(\mathcal{L}_{\text{smooth}}\) regularises predictions, \(\mathcal{L}_{\text{pred\_reg}}\) aligns predictability scores with a target confidence value, and \(\alpha_i > 0\) are loss weighting coefficients.
\section{The Proposed Method}
\label{sec:method}
In this section, we present CIWI-CKT's architecture designed for few-shot cross-city traffic forecasting through chaos-informed wave interference and meta-learning. As illustrated in Fig.~\ref{fig:ciwi_ckt_architecture}, the framework comprises four interconnected phases: (1) chaos-aware wave generation that extracts measurable chaos invariants and models traffic patterns as adaptive wave components (Sec.~\ref{sec:wave_generation}); (2) meta-interference processing that captures complex wave interactions between support and query traffic regimes (Sec.~\ref{sec:meta_interference}); (3) multi-scale wave attention with parallel attention heads and multi-scale convolutions (Sec.~\ref{sec:wave_attention}); and (4) meta-learning prediction with cross-city prototype bank for knowledge transfer (Sec.~\ref{sec:meta_prediction}). These components work synergistically to enable robust few-shot forecasting across heterogeneous urban networks while maintaining interpretability through chaos-informed regularisation.

\begin{figure*}[ht]
    \centering
    \includegraphics[width=\linewidth]{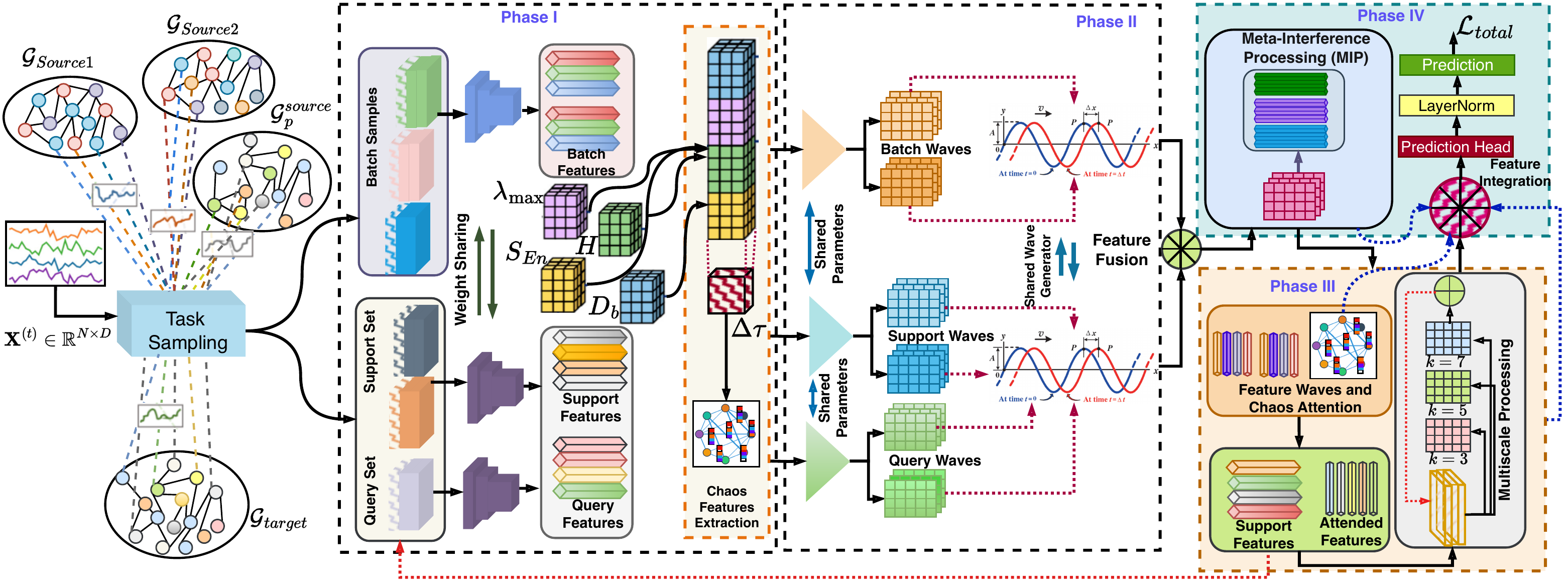}
    \caption{The proposed CIWI-CKT architecture for few-shot cross-city traffic forecasting: Phase I: chaos-aware wave generation; Phase II: meta-interference processing; Phase III: multi-scale wave attention; and Phase IV: meta-learning prediction with cross-city prototype transfer.}
    \label{fig:ciwi_ckt_architecture}
\end{figure*}

Formally, given a target city $\mathcal{T}$ with limited support data $\mathcal{D}_{\mathrm{supp}} = \{(\mathbf{X}_i^s, \mathbf{Y}_i^s)\}_{i=1}^{N_s}$ and query data $\mathcal{D}_{\mathrm{qry}} = \{(\mathbf{X}_j^q, \mathbf{Y}_j^q)\}_{j=1}^{N_q}$, along with source city datasets $\{\mathcal{D}_{\mathcal{S}_k}\}_{k=1}^{K}$, CIWI-CKT learns:
\[
\hat{\mathbf{Y}}^q = \mathcal{F}_{\Theta}\left(\mathbf{X}^q; \mathcal{D}_{\mathrm{supp}}, \{\mathcal{D}_{\mathcal{S}_k}\}\right),
\]
where $\Theta$ represents all model parameters. Algorithm~\ref{alg:ciwi_ckt} summarises the forward pass.

\begin{algorithm}[ht]
\caption{CIWI-CKT Forward Pass}
\label{alg:ciwi_ckt}
\begin{algorithmic}[1]
\Require Support $\mathbf{X}^s \in \mathbb{R}^{B_s \times T \times D}$, query $\mathbf{X}^q \in \mathbb{R}^{B_q \times T \times D}$, optional meta-context $\mathbf{M}_0$, city name $c$
\Ensure Predictions $\hat{\mathbf{Y}} \in \mathbb{R}^{B_q \times H \times D_{\mathrm{out}}}$
\State $\tilde{\mathbf{X}}^s \gets \mathcal{P}_{\mathrm{proj}}(\mathbf{X}^s)$, $\tilde{\mathbf{X}}^q \gets \mathcal{P}_{\mathrm{proj}}(\mathbf{X}^q)$
\For{each input $\tilde{\mathbf{X}} \in \{\tilde{\mathbf{X}}^s, \tilde{\mathbf{X}}^q\}$}
    \State $\mathbf{C} \gets \mathcal{E}_{\theta}(\Phi(\tilde{\mathbf{X}}))$
    \For{$n = 1$ to $N_w$}
        \State $\mathbf{W}^{(n)} \gets A^{(n)}(\tilde{\mathbf{X}}, \mathbf{C}) \odot \psi_n(k^{(n)} t + \phi^{(n)})$
    \EndFor
    \State $\tilde{\mathbf{W}} \gets \{\mathbf{W}^{(1)}, \ldots, \mathbf{W}^{(N_w)}\}$
\EndFor
\State $(\tilde{\mathbf{I}}, \mathbf{M}, s_{\mathrm{pred}}) \gets \mathcal{I}_{\eta}(\tilde{\mathbf{W}}^s, \tilde{\mathbf{W}}^q)$
\State $\mathbf{H}_{\mathrm{att}} \gets \mathcal{A}_{\xi}(\tilde{\mathbf{I}})$
\State $\mathbf{F}_{\mathrm{adapted}} \gets \mathcal{M}_{\omega}(\tilde{\mathbf{X}}^s, \mathbf{H}_{\mathrm{att}}, \mathbf{M})$
\For{each $k \in \mathcal{K}$}
    \State $\mathbf{F}_k \gets \mathrm{Conv1D}_k(\mathbf{F}_{\mathrm{adapted}})$
\EndFor
\State $\mathbf{F}_{\mathrm{fused}} \gets \mathrm{Fusion}([\mathbf{F}_{k_1} \oplus \cdots \oplus \mathbf{F}_{k_{|\mathcal{K}|}}])$
\State $\hat{\mathbf{Y}} \gets \mathcal{P}_{\mathrm{out}}(\mathbf{F}_{\mathrm{fused}})$
\If{city $c$ provided}
    \State $\mathcal{L}_{\mathrm{align}} \gets \mathcal{L}_{\mathrm{proto}}(\mathbf{M}, c)$
\EndIf
\State \Return $\hat{\mathbf{Y}}$
\end{algorithmic}
\end{algorithm}

\subsection{Phase I: Chaos-Aware Wave and Feature Generation}
\label{sec:wave_generation}

This phase extracts quantifiable chaos invariants from traffic time series and uses them to modulate adaptive wave components that represent traffic patterns. The goal is to capture the chaotic nature of traffic dynamics and represent them as a superposition of structured waves.

We introduce a differentiable statistical chaos proxy that computes chaos features directly on-device without CPU round-trip. The key innovation is coupling chaos feature extraction with wave generation, allowing chaos characteristics to dynamically modulate wave amplitudes, frequencies, and phases. This creates a physically grounded representation where wave parameters adapt to the current chaotic regime (free flow vs. congestion) without requiring explicit regime detection.

For a given input $\mathbf{X} \in \mathbb{R}^{B \times T \times D}$, we first compute a differentiable statistical chaos proxy $\Phi(\mathbf{X})$ that outputs a vector of $d_c$ chaos features:
\begin{multline*}
\Phi(\mathbf{X}) = \Bigl[\mu,\ \sigma^2,\ \gamma_1,\ \kappa,\
    \{\rho_l\}_{l=1}^{L},\ \Delta_{\max},\ \Delta_{\min},\\
    r,\ \tau,\ h,\ m,\ e,\ c_v,\ \eta,\ q,\ \beta\Bigr]^{\top},
\end{multline*}
where $\mu$ is the mean, $\sigma^2$ the variance, $\gamma_1$ the skewness, $\kappa$ the kurtosis, $\rho_l$ the lag-$l$ autocorrelation, $\Delta_{\max}$ and $\Delta_{\min}$ the maximum and minimum consecutive differences, $r$ the range (max minus min), $\tau$ the linear trend coefficient from ordinary least squares, $h$ the mean of absolute half-differences, $m$ the median value, $e$ the $L_2$ energy norm, $c_v$ the coefficient of variation ($\sigma/\mu$), $\eta$ the median absolute deviation, $q$ the interquartile range, and $\beta$ the robust Theil-Sen slope estimator. All operations are vectorised and differentiable. The chaos dimension $d_c$ is a hyperparameter.

For offline dataset preprocessing, a separate chaos analyzer computes classical chaos invariants:
\begin{equation}
    \boldsymbol{\psi}_{\text{chaos}}(\mathbf{x}) = \left[ \lambda_{\max},\; H,\; S_{En},\; D_2,\; D_b,\; R_r,\; \Delta \tau,\; A_{En} \right]^{\top},
\end{equation}
where $\lambda_{\max}$ is the largest Lyapunov exponent (computed using the Rosenstein algorithm), $H$ the Hurst exponent (via rescaled range analysis), $S_{En}$ the sample entropy, $D_2$ the correlation dimension (Grassberger-Procaccia algorithm), $D_b$ the box-counting dimension, $R_r$ the recurrence rate, $\Delta \tau$ the multifractal spectrum width, and $A_{En}$ the approximate entropy. These invariants augment the differentiable proxy $\Phi$ by providing physics-informed labels during training.

The chaos features are processed through a chaos encoder $\mathcal{E}_{\theta}$:
\begin{equation}
    \mathbf{C} = \mathcal{E}_{\theta}(\Phi(\mathbf{X})) = \tanh\!\left(\mathbf{W}_2\,\mathrm{LN}\!\left(\phi\!\left(\mathbf{W}_1 \Phi(\mathbf{X}) + \mathbf{b}_1\right)\right) + \mathbf{b}_2\right),
\end{equation}
where $\phi(x) = x \cdot \Phi(x)$ is the GELU activation function, $\mathrm{LN}(\cdot)$ denotes layer normalisation, $\tanh$ is the hyperbolic tangent activation, $\mathbf{W}_1 \in \mathbb{R}^{d_c \times d_c}$, $\mathbf{W}_2 \in \mathbb{R}^{d_c \times d_c}$ are learnable weight matrices, $\mathbf{b}_1, \mathbf{b}_2 \in \mathbb{R}^{d_c}$ are bias vectors, and $\mathbf{C} \in \mathbb{R}^{B \times d_c}$.

In parallel, learnable statistics are extracted from the temporally averaged input via a two-layer MLP:
\begin{equation}
    \mathbf{L} = \tanh\!\left(\mathbf{W}_4\,\mathrm{LN}\!\left(\phi\!\left(\mathbf{W}_3 \bar{\mathbf{X}} + \mathbf{b}_3\right)\right) + \mathbf{b}_4\right),
\end{equation}
where $\bar{\mathbf{X}} = \frac{1}{T}\sum_{t=1}^T \mathbf{X}_t \in \mathbb{R}^{B \times D}$, $\mathbf{W}_3 \in \mathbb{R}^{d_{\min} \times D}$, $\mathbf{W}_4 \in \mathbb{R}^{d_c \times d_{\min}}$ with $d_{\min}$ a small constant, and $\mathbf{L} \in \mathbb{R}^{B \times d_c}$. The chaos features $\mathbf{C}$ and learnable statistics $\mathbf{L}$ are then fused through a two-layer MLP with Tanh activation, followed by a Lyapunov estimator $\mathcal{L}_{\text{lya}}(\mathbf{C}) = \sigma(\mathbf{W}_{\text{lya}} \mathbf{C} + \mathbf{b}_{\text{lya}}) \in [0,1]$ and an attractor mapper $\mathcal{A}_{\text{map}}(\mathbf{C}) = \tanh(\mathbf{W}_{\text{att}} \mathbf{C} + \mathbf{b}_{\text{att}}) \in [-1,1]^{d_c}$, which provide interpretable chaos characterisations.

The wave generator $\mathcal{G}_{\phi}$ produces $N_w$ wave components conditioned on the enhanced input $\mathbf{X}_{\mathrm{enh}} = [\mathbf{X}_{\mathrm{proj}} \oplus \mathbf{C}_{\mathrm{expanded}}] \in \mathbb{R}^{B \times T \times (D + d_c)}$, where $\oplus$ denotes concatenation along the feature dimension and $\mathbf{C}_{\mathrm{expanded}}$ is the chaos feature vector expanded across the time dimension:
\begin{equation}
    \begin{aligned}
    \mathbf{W}^{(n)}(t) = A^{(n)}(\mathbf{X}_{\mathrm{enh}}) \odot \psi_n\!\left(k^{(n)}(t) \cdot t + \phi^{(n)}\right), \\
    \quad n=1,\dots,N_w,
\end{aligned}
\end{equation}
where $\odot$ denotes element-wise multiplication, $\psi_n$ alternates between $\sin$ and $\cos$ with fixed phase offsets $\mathcal{P} = \{0, \frac{\pi}{3}, \frac{\pi}{6}, \frac{\pi}{2}, \frac{\pi}{4}, \frac{2\pi}{3}, \frac{3\pi}{4}, \pi, \frac{5\pi}{4}, \frac{4\pi}{3}, \frac{3\pi}{2}, \frac{5\pi}{3}, \frac{7\pi}{4}, \frac{11\pi}{6}, \frac{7\pi}{6}, \frac{9\pi}{8}\}$ ensuring a full set of orthogonal basis functions, $A^{(n)} \in \mathbb{R}^{B \times T \times D}$ is the amplitude, $k^{(n)}(t) \in \mathbb{R}^{B \times D}$ is the time-modulated wave number, and $\phi^{(n)} \in \mathbb{R}^{B \times D}$ is the phase shift.

The amplitude $A^{(n)}$ is generated by an MLP with sigmoid output:
\begin{equation}
    A^{(n)} = \sigma\!\left(\mathbf{W}_{A,2}^{(n)}\,\mathrm{LN}\!\left(\phi\!\left(\mathbf{W}_{A,1}^{(n)} \mathbf{X}_{\mathrm{enh}} + \mathbf{b}_{A,1}^{(n)}\right)\right) + \mathbf{b}_{A,2}^{(n)}\right),
\end{equation}
where $\sigma(z) = 1/(1+e^{-z})$ is the sigmoid activation function, $\mathbf{W}_{A,1}^{(n)} \in \mathbb{R}^{d_h \times (D+d_c)}$, $\mathbf{W}_{A,2}^{(n)} \in \mathbb{R}^{D \times d_h}$, and $d_h$ is the hidden dimension hyperparameter.

The frequency modulation is:
\begin{equation}
    k^{(n)}(t) = \mathbf{k}_0^{(n)} \odot \left(1 + \lambda_k \cdot \tanh\!\left(\mathbf{W}_K^{(n)} \bar{\mathbf{X}}_{\mathrm{enh}} + \mathbf{b}_K^{(n)}\right)\right),
\end{equation}
where $\mathbf{k}_0^{(n)} \in \mathbb{R}^{D}$ is a learnable base wave number vector, $\tanh$ is the hyperbolic tangent, $\lambda_k \in (0,1)$ is a modulation strength hyperparameter, $\mathbf{W}_K^{(n)} \in \mathbb{R}^{D \times (D+d_c)}$, and $\bar{\mathbf{X}}_{\mathrm{enh}} \in \mathbb{R}^{B \times (D+d_c)}$ is the temporal average of $\mathbf{X}_{\mathrm{enh}}$. The phase shift $\phi^{(n)} \in \mathbb{R}^{D}$ is a learnable parameter initialised uniformly from $[-\pi, \pi]$.

\subsection{Phase II: Meta-Interference Processing}
\label{sec:meta_interference}

This phase models the interaction between support and query wave representations to produce an interference pattern that captures how few-shot examples influence the prediction. It also generates a predictability score that quantifies confidence in the forecast.

In this component, we formalise support-query interactions in meta-learning as wave interference. Our interference processor explicitly models both linear superposition and nonlinear interactions between wave components. The learnable interference weights $\beta^{(n)}$ act as frequency-selective gains, allowing the model to emphasise or suppress specific wave components based on the meta-context. The predictability score provides an interpretable confidence measure for uncertainty-aware forecasting.

Given support waves $\{\mathbf{W}^{s,(n)}\}_{n=1}^{N_w}$ and query waves $\{\mathbf{W}^{q,(n)}\}_{n=1}^{N_w}$ from a few-shot task, the meta-interference processor $\mathcal{I}_{\eta}$ first computes a meta-context vector:
\begin{equation}
    \mathbf{M}_{\mathrm{ctx}} = \frac{1}{2N_w} \sum_{n=1}^{N_w} \left( \text{mean}_t(\mathbf{W}^{s,(n)}) + \text{mean}_t(\mathbf{W}^{q,(n)}) \right) \in \mathbb{R}^{B \times D},
\end{equation}
followed by a projection to dimension $D_m$:
\begin{equation}
    \mathbf{M}_{\mathrm{ctx}}^{(p)} = \phi(\mathbf{W}_{\text{ctx}} \mathbf{M}_{\mathrm{ctx}} + \mathbf{b}_{\text{ctx}}) \in \mathbb{R}^{B \times D_m}.
\end{equation}

The interference pattern is then computed as:
\begin{equation}
\begin{split}
    \tilde{\mathbf{I}}(t) = \sum_{n=1}^{N_w} \beta^{(n)}(\mathbf{M}_{\mathrm{ctx}}^{(p)}) \odot \Bigl(&\mathbf{W}^{s,(n)}(t) + \mathbf{W}^{q,(n)}(t) \\
    &+ \gamma \cdot (\mathbf{W}^{s,(n)}(t) \odot \mathbf{W}^{q,(n)}(t))\Bigr),
\end{split}
\end{equation}
where $\beta^{(n)}: \mathbb{R}^{D_m} \rightarrow \mathbb{R}^{D}$ are learnable interference weights implemented as two-layer MLPs:
\begin{equation}
    \beta^{(n)}(\mathbf{m}) = \sigma(\mathbf{W}_{\beta,2}^{(n)} \phi(\mathbf{W}_{\beta,1}^{(n)} \mathbf{m} + \mathbf{b}_{\beta,1}^{(n)}) + \mathbf{b}_{\beta,2}^{(n)}),
\end{equation}
and $\gamma \in (0,1)$ is a fixed nonlinear interaction strength hyperparameter.

The predictability score is computed as:
\begin{equation}
    s_{\mathrm{pred}} = \sigma\!\left(\mathbf{W}_{\mathrm{conf}}[\mathbf{M}_{\mathrm{ctx}}^{(p)} \oplus \bar{\mathbf{W}}_{\mathrm{all}} \oplus \Phi(\bar{\mathbf{I}})] + \mathbf{b}_{\mathrm{conf}}\right) \in [0,1],
\end{equation}
with $\bar{\mathbf{W}}_{\mathrm{all}} = \frac{1}{2N_w T} \sum_{n=1}^{N_w} \sum_{t=1}^T (\mathbf{W}^{s,(n)}(t) + \mathbf{W}^{q,(n)}(t)) \in \mathbb{R}^{D}$ the mean over all waves and time, $\bar{\mathbf{I}} = \frac{1}{T}\sum_{t=1}^T \tilde{\mathbf{I}}(t)$ the mean interference, and $\mathbf{W}_{\mathrm{conf}} \in \mathbb{R}^{1 \times (D_m + D + d_c)}$, $\mathbf{b}_{\mathrm{conf}} \in \mathbb{R}$.

The predictability component additionally analyses wave chaos via a chaos predictability assessor, evaluates temporal stability through 1D convolutions with sigmoid output, and enhances pattern consistency through residual connections:
\begin{equation}
    \mathbf{W}_{\text{enh}}^{(n)} = \mathbf{W}^{(n)} + \lambda_{\text{enh}} \cdot \mathcal{E}_{\text{cons}}(\mathbf{W}^{(n)}),
\end{equation}
where $\mathcal{E}_{\text{cons}}$ is a two-layer MLP with Tanh activation and $\lambda_{\text{enh}}$ controls the enhancement strength.

\subsection{Phase III: Multi-Scale Wave Attention}
\label{sec:wave_attention}

This phase processes the interference pattern through frequency-aware attention and multi-scale convolutions to capture temporal patterns at different resolutions and focus on relevant frequency bands.

We introduce two key innovations: (1) parallel multi-head attention layers that operate on the same input but with different initialisations, effectively learning diverse temporal dependencies; (2) multi-scale 1D convolutions with kernel sizes $\{k_1, k_2, k_3, k_4\}$ that explicitly capture patterns at different time horizons. The residual connection ensures that the original interference pattern is preserved, allowing the attention to refine rather than replace the representation.

The interference pattern $\tilde{\mathbf{I}} \in \mathbb{R}^{B \times T \times D}$ is processed through a multi-scale attention mechanism. $N_p$ parallel multi-head attention layers (each with $N_h$ heads) operate on the same input. For each parallel block $p = 1,\dots,N_p$, the multi-head attention output is:
\begin{equation}
\mathbf{H}_p = \bigoplus_{h=1}^{N_h}\!\left(
    \text{softmax}\!\left(\frac{
        (\tilde{\mathbf{I}}\mathbf{W}_{p,h}^Q)
        (\tilde{\mathbf{I}}\mathbf{W}_{p,h}^K)^{\top}
    }{\sqrt{d_k}}\right)
    \tilde{\mathbf{I}}\mathbf{W}_{p,h}^V
\right)\mathbf{W}_{p}^O,
\end{equation}
where $\mathbf{H}_p \in \mathbb{R}^{B \times T \times D}$,
$\mathbf{W}_{p,h}^Q, \mathbf{W}_{p,h}^K \in \mathbb{R}^{D \times d_k}$,
$\mathbf{W}_{p,h}^V \in \mathbb{R}^{D \times d_v}$,
$\mathbf{W}_{p}^O \in \mathbb{R}^{N_h d_v \times D}$,
and $d_k = d_v = D/N_h$.

The outputs are fused with a residual connection:
\begin{equation}
    \mathbf{H}_{\mathrm{att}} = \phi\!\left(\mathbf{W}_2\,\mathrm{LN}\!\left(\phi\!\left(\mathbf{W}_1 \left[\mathbf{H}_1 \oplus \cdots \oplus \mathbf{H}_{N_p}\right] + \mathbf{b}_1\right)\right) + \mathbf{b}_2\right) + \tilde{\mathbf{I}},
\end{equation}
where $\mathbf{W}_1 \in \mathbb{R}^{d_h \times (N_p D)}$, $\mathbf{W}_2 \in \mathbb{R}^{D \times d_h}$.

Multi-scale temporal patterns are captured by parallel 1D convolutions with kernel sizes $\mathcal{K} = \{k_1, k_2, k_3, k_4\}$:
\begin{equation}
    \mathbf{F}_k = \phi\!\left(\mathrm{BN}\!\left(\mathbf{H}_{\mathrm{att}} \ast \mathbf{K}_k\right)\right), \quad \forall k \in \mathcal{K},
\end{equation}
where $\mathbf{K}_k \in \mathbb{R}^{D \times D \times k}$ are learnable convolution kernels, $\mathrm{BN}$ denotes batch normalisation, followed by integration:
\begin{equation}
    \mathbf{F}_{\mathrm{fused}} = \delta\!\left(\phi\!\left(\mathbf{W}_4\,\mathrm{LN}\!\left(\phi\!\left(\mathbf{W}_3 \mathbf{F}_{\mathrm{concat}} + \mathbf{b}_3\right)\right) + \mathbf{b}_4\right)\right),
\end{equation}
where $\mathbf{F}_{\mathrm{concat}} = \bigoplus_{k \in \mathcal{K}} \mathbf{F}_k \in \mathbb{R}^{B \times T \times (|\mathcal{K}|D)}$, $\mathbf{W}_3 \in \mathbb{R}^{d_h \times (|\mathcal{K}|D)}$, $\mathbf{W}_4 \in \mathbb{R}^{D \times d_h}$, $\phi$ is GELU, and $\delta(\cdot)$ denotes dropout with rate $\rho_{\text{drop}}$.

\subsection{Phase IV: Prediction with Cross-City Transfer}
\label{sec:meta_prediction}

This phase enables rapid adaptation to new cities by encoding support set information into a meta-prototype and adapting query features accordingly. It also facilitates knowledge transfer across cities through a learnable prototype bank.

Our method explicitly models cross-city relationships through a learnable city prototype bank. Cross-city attention allows a target city to retrieve relevant knowledge from all source cities. The gated fusion mechanism learns to balance the city's own characteristics with transferred knowledge. The prototype alignment loss enforces two desirable properties: (1) self-consistency, where the mean batch features align with the city's prototype, and (2) diversity, where different city prototypes are orthogonal, preventing interference.

\subsubsection{Meta-Learning Predictor}

The meta-learning predictor $\mathcal{M}_{\omega}$ enables rapid adaptation to new cities. For support features $\tilde{\mathbf{X}}^s \in \mathbb{R}^{B_s \times T \times D}$ and query features $\tilde{\mathbf{X}}^q \in \mathbb{R}^{B_q \times T \times D}$ (after input projection $\mathcal{P}_{\mathrm{proj}}$ with $\tanh$ activation), it computes:
\begin{equation}
    \begin{aligned}
    \mathbf{H}^s &= \phi\!\left(\mathbf{W}_{s,2}\,\mathrm{LN}\!\left(\phi\!\left(\mathbf{W}_{s,1} \tilde{\mathbf{X}}^s + \mathbf{b}_{s,1}\right)\right) + \mathbf{b}_{s,2}\right),\\
    \mathbf{H}^q &= \phi\!\left(\mathbf{W}_{q,2}\,\mathrm{LN}\!\left(\phi\!\left(\mathbf{W}_{q,1} \tilde{\mathbf{X}}^q + \mathbf{b}_{q,1}\right)\right) + \mathbf{b}_{q,2}\right),\\
    \mathbf{P} &= \phi\!\left(\mathbf{W}_p\,\text{mean}_t(\mathbf{H}^s) + \mathbf{b}_p\right) \quad \text{(meta-prototype)},
\end{aligned}
\end{equation}
where $\mathbf{W}_{s,1}, \mathbf{W}_{q,1} \in \mathbb{R}^{d_h \times D}$, $\mathbf{W}_{s,2}, \mathbf{W}_{q,2} \in \mathbb{R}^{D \times d_h}$, $\mathbf{W}_p \in \mathbb{R}^{D_p \times D}$ with $D_p$ the prototype dimension, and $\text{mean}_t$ averages over the time dimension.

The adapted query features are obtained by concatenating query features with the expanded prototype (repeated across time) and the meta-context from interference processing:
\begin{equation}
    \mathbf{F}_{\mathrm{adapted}} = \phi\!\left(\mathbf{W}_a [\mathbf{H}^q \oplus (\mathbf{P} \otimes \mathbf{1}_T) \oplus \mathbf{M}_{\mathrm{exp}}] + \mathbf{b}_a\right),
\end{equation}
where $\mathbf{M}_{\mathrm{exp}} \in \mathbb{R}^{B_q \times T \times D_m}$ is the meta-context expanded across the time dimension, and $\mathbf{W}_a \in \mathbb{R}^{D \times (D + D_p + D_m)}$.

\subsubsection{Cross-City Knowledge Transfer via Prototype Bank}
\label{sec:cross_city}

To enable transfer across cities, we introduce a learnable city prototype bank $\mathcal{B} = \{\mathbf{p}_c \in \mathbb{R}^{D_m}\}_{c \in \mathcal{C}}$ where $\mathcal{C}$ is the set of known cities. Our prototype bank enables dynamic knowledge retrieval through cross-city attention.

For a batch from city $c$, we retrieve its prototype $\mathbf{p}_c$ and perform cross-city attention over all prototypes:
\begin{equation}
    \mathbf{q} = \mathbf{p}_c,\quad \mathbf{K}, \mathbf{V} = \mathbf{P}_{\mathrm{all}},\quad \mathbf{c}_{\mathrm{cross}} = \mathrm{softmax}\!\left(\frac{\mathbf{q}\mathbf{K}^\top}{\sqrt{D_m}}\right)\mathbf{V},
\end{equation}
where $\mathbf{P}_{\mathrm{all}} \in \mathbb{R}^{|\mathcal{C}| \times D_m}$ stacks all city prototypes. A gated fusion combines the city's own prototype with cross-city context:
\begin{equation}
\begin{aligned}
   \mathbf{g} = \sigma\!\left(\mathbf{W}_g[\mathbf{p}_c \oplus \mathbf{c}_{\mathrm{cross}}] + \mathbf{b}_g\right) \\
    \quad \mathbf{p}_c^{\mathrm{trans}} = \mathbf{g} \odot \mathbf{p}_c + (1-\mathbf{g}) \odot \mathbf{c}_{\mathrm{cross}}.
\end{aligned}
\end{equation}
The transferred prototype is injected into the meta-context before the predictor via $\mathbf{M}_{\mathrm{ctx}}^{(p)} \leftarrow \mathbf{M}_{\mathrm{ctx}}^{(p)} + \mathbf{W}_{\text{proj}} \mathbf{p}_c^{\mathrm{trans}}$.

A prototype alignment loss enforces self-consistency and diversity:
\begin{multline}
    \mathcal{L}_{\mathrm{align}} = 
    \left(1 - \frac{(\mathbf{p}_c^{\mathrm{trans}})^\top \bar{\mathbf{H}}}
                   {\|\mathbf{p}_c^{\mathrm{trans}}\|_2 \|\bar{\mathbf{H}}\|_2}\right) \\
    + \frac{\lambda_{\text{orth}}}{|\mathcal{C}|(|\mathcal{C}|-1)} 
      \sum_{i \neq j} \frac{\mathbf{p}_i^\top \mathbf{p}_j}
                           {\|\mathbf{p}_i\|_2 \|\mathbf{p}_j\|_2},
\end{multline}
where $\bar{\mathbf{H}} \in \mathbb{R}^{D_m}$ is the mean encoded batch feature from the meta-learning predictor (after projection to $D_m$), and $\lambda_{\text{orth}} > 0$ controls the orthogonality regularisation strength. When a new city is encountered during fine-tuning, the prototype bank dynamically registers it by initialising its prototype as the mean of all existing city prototypes, enabling zero-shot transfer before any fine-tuning steps.

\paragraph{Weight Initialisation}
All linear layers are initialised using Xavier uniform with gain $g_{\text{xavier}}$, convolutional layers use Kaiming normal with gain $g_{\text{kaiming}}$, and normalisation layers (LayerNorm, BatchNorm1d) are initialised with unit weights and zero biases.

\subsection{Loss Functions and Optimisation Strategy}
\label{sec:loss_optimization}

The total loss combines prediction accuracy, interference smoothness, meta-context regularisation, prediction smoothness, predictability alignment, and city prototype alignment:
\begin{multline}
    \mathcal{L}_{\text{total}} = \alpha_1 \mathcal{L}_{\text{pred}} + \alpha_2 \|\mathbf{M}_{\mathrm{ctx}}^{(p)}\|_2 + \alpha_3 \frac{1}{T-1}\sum_{t=1}^{T-1}\|\tilde{\mathbf{I}}_{t+1}-\tilde{\mathbf{I}}_t\|_2^2 \\
    + \alpha_4 \frac{1}{T-1}\sum_{t=1}^{T-1}\|\hat{\mathbf{Y}}_{t+1}-\hat{\mathbf{Y}}_t\|_1 + \alpha_5 \|s_{\text{pred}} - s_{\text{target}}\|_2^2 + \alpha_6 \mathcal{L}_{\text{align}},
\end{multline}
where:
\begin{equation*}
\begin{split}
    \mathcal{L}_{\text{pred}} = \frac{1}{N_q H}\sum_{j=1}^{N_q}\sum_{h=1}^{H}\Bigl(
    & \beta_1 \|\hat{\mathbf{Y}}_{j,h}-\mathbf{Y}_{j,h}\|_2^2 \\
    + \beta_2 \|\hat{\mathbf{Y}}_{j,h}-\mathbf{Y}_{j,h}\|_1 
    &+ \beta_3 \,\text{Huber}_{\delta}(\hat{\mathbf{Y}}_{j,h},\mathbf{Y}_{j,h})\Bigr),
\end{split}
\end{equation*}
with $\beta_1 + \beta_2 + \beta_3 = 1$ and $\delta$ the Huber threshold parameter.

The hyperparameters $\{\alpha_i\}_{i=1}^{6}$ control the contribution of each loss term, $\{\beta_j\}_{j=1}^{3}$ balance the prediction loss components, and $s_{\text{target}}$ is the target predictability score. Optimisation uses Adam with learning rate $\eta$, gradient clipping threshold $g_{\text{max}}$, and exponential decay factor $\gamma_{\text{decay}}$.

\section{Theoretical Analysis}
\label{sec:theoretical_analysis}

\noindent\textbf{Assumptions.} Throughout this section, we assume: (A1) All activation functions are Lipschitz continuous with constants $L_{\text{act}} < \infty$ (satisfied by GELU, $\tanh$, $\sigma$, $\phi$); (A2) Input data $\mathbf{X}$ is bounded: $\|\mathbf{X}\|_2 \leq R_X$ (true for normalised traffic data); (A3) The chaos statistics $\Phi(\mathbf{X})$ are computed on bounded domains, guaranteeing $\|\Phi(\mathbf{X})\|_2 \leq R_{\Phi}$; (A4) The number of wave components $N_w \geq 1$ and batch size $B \geq 1$; (A5) The prototype bank $\mathcal{B}$ is initialised with orthogonal vectors: $\mathbf{p}_i^\top \mathbf{p}_j = 0$ for $i \neq j$; (A6) All weight matrices have bounded spectral norms: $\|\mathbf{W}\|_{\text{op}} \leq L_W$. Our theoretical results establish \textit{architecture-specific} guarantees for CIWI-CKT that explicitly quantify the benefits of integrating chaos theory with wave interference. The following theorems provide verifiable bounds that directly reference the architectural components defined in Section~\ref{sec:method}: the Statistical Chaos Proxy $\Phi$, the Chaos-Aware Encoder $\mathcal{E}_{\theta}$ (producing $d_c$-dimensional chaos features), the Wave Generator $\mathcal{G}_{\phi}$ (with $N_w$ wave components), the Meta-Interference Processor $\mathcal{I}_{\eta}$, the Wave Attention Phase $\mathcal{A}_{\xi}$, and the Meta-Learning Predictor $\mathcal{M}_{\omega}$ with prototype bank. For offline preprocessing, a separate chaos analyser computes classical invariants (largest Lyapunov exponent, Hurst exponent, correlation dimension, etc.) to augment training data.

\begin{theorem}[\textbf{Chaos-to-Wave Stability in CIWI-CKT}]
\label{thm:ciwi_stability_transfer}
Let $\Phi$ be the statistical chaos proxy, $\mathcal{E}_{\theta}$ the chaos encoder with Lipschitz constant $L_{\mathcal{E}}$, and $\mathcal{G}_{\phi}$ the wave generator producing $N_w$ wave components. Define the \textit{chaos-to-wave sensitivity}:
\[
\Gamma(\theta,\phi) = \max_{1 \leq n \leq N_w} \sup_{\mathbf{x}} \|\nabla_{\mathbf{C}}\mathbf{W}^{(n)}(\mathbf{x})\|_2,
\]
where $\mathbf{C} = \mathcal{E}_{\theta}(\Phi(\mathbf{x}))$ and $\mathbf{W}^{(n)} = \mathcal{G}_{\phi}^{(n)}(\mathbf{x}, \mathbf{C})$.

Then for the CIWI-CKT model $F_{\Theta} = \mathcal{P}_{\text{out}} \circ \mathcal{F}_{\text{fuse}} \circ \mathcal{A}_{\xi} \circ \mathcal{I}_{\eta} \circ \mathcal{G}_{\phi} \circ \mathcal{E}_{\theta} \circ \Phi$, under Assumptions (A1)-(A6), the overall sensitivity satisfies:
\[
\|\nabla_{\mathbf{x}} F_{\Theta}(\mathbf{x})\|_2 \leq L_{\Phi} \cdot L_{\mathcal{E}} \cdot \Gamma(\theta,\phi) \cdot L_{\mathcal{I}} \cdot L_{\mathcal{A}} \cdot L_{\text{fuse}} \cdot L_{\text{out}} + \epsilon_{\text{wave}},
\]
where $\epsilon_{\text{wave}} = \mathcal{O}(1/\sqrt{N_w})$ quantifies the wave superposition stability. Moreover, for traffic time series $\mathbf{x}, \mathbf{y} \in \mathbb{R}^{T \times D}$ with $\|\mathbf{x}-\mathbf{y}\|_2 \leq \delta$, we have:
\[
\|F_{\Theta}(\mathbf{x}) - F_{\Theta}(\mathbf{y})\|_2 \leq \left(L_{\mathcal{E}}L_{\Phi} + \frac{\alpha}{N_w}\sum_{n=1}^{N_w} \bar{\beta}^{(n)}\right)\delta + \beta(T),
\]
where $\bar{\beta}^{(n)} = \mathbb{E}[\|\beta^{(n)}(\mathbf{M}_{\text{ctx}})\|_2]$ and $\beta(T) = \mathcal{O}(1/\sqrt{T})$.
\end{theorem}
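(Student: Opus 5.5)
The plan is to prove both bounds by composing Lipschitz estimates along the forward pipeline $F_{\Theta} = \mathcal{P}_{\text{out}} \circ \mathcal{F}_{\text{fuse}} \circ \mathcal{A}_{\xi} \circ \mathcal{I}_{\eta} \circ \mathcal{G}_{\phi} \circ \mathcal{E}_{\theta} \circ \Phi$ and applying the chain rule. Assumptions (A1) and (A6) make every linear layer, activation and LayerNorm Lipschitz; LayerNorm is Lipschitz on the bounded domains guaranteed by (A2)--(A3) once its variance is bounded away from zero by its $\epsilon$. This gives a constant for each block.

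\textbf{Block constants.}
\begin{itemize}
\item $\Phi$: each statistic (moments, lagged autocorrelations, differences, range, OLS slope, energy, quantiles and medians, which are piecewise linear) is Lipschitz on the bounded set from (A2)--(A3). Skewness, kurtosis and $c_v$ additionally need $\sigma$ and $|\mu|$ bounded below; this should be stated as part of the bounded-domain hypothesis (A3). This yields $L_{\Phi}$.
\item $\mathcal{E}_{\theta}$: $L_{\mathcal{E}} \le L_{\text{act}}^3 L_W^2 L_{\mathrm{LN}}$.
\item $\mathcal{I}_{\eta}$: the interference map is bilinear in $(\mathbf{W}^{s},\mathbf{W}^{q})$. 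Because $A^{(n)}\in(0,1)$ and $|\psi_n|\le 1$, every wave satisfies $\|\mathbf{W}^{(n)}\|_\infty\le 1$. Combined with $\beta^{(n)}\in(0,1)^D$, this gives $L_{\mathcal{I}} \le \sum_n(1+\gamma)(1+L_{\beta})$.
\item $\mathcal{A}_{\xi}$ and $\mathcal{F}_{\text{fuse}}$: these follow from standard softmax-attention Lipschitz bounds on bounded inputs, plus the residual term; convolution and BatchNorm in inference mode are affine. This yields $L_{\mathcal{A}}$, $L_{\text{fuse}}$ and $L_{\text{out}}$.
\end{itemize}

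\textbf{Gradient bound.} I will split the Jacobian of $\mathcal{G}_{\phi}$ with respect to $\mathbf{x}$ into two parts. The first is the path through $\mathbf{C}$, whose norm is bounded by $\Gamma(\theta,\phi)$ by definition. The second is the direct path through $\mathbf{X}_{\mathrm{proj}}$ inside $\mathbf{X}_{\mathrm{enh}}$ and $\bar{\mathbf{X}}_{\mathrm{enh}}$. The first part produces the product $L_{\Phi}L_{\mathcal{E}}\Gamma L_{\mathcal{I}}L_{\mathcal{A}}L_{\text{fuse}}L_{\text{out}}$. The second part must be absorbed into $\epsilon_{\text{wave}}$.

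To get an $\mathcal{O}(1/\sqrt{N_w})$ rate rather than $\mathcal{O}(1)$, I would normalise the superposition by $1/N_w$, treating the $\beta^{(n)}$ as averaging weights; the statement's second bound already has this $\frac{1}{N_w}\sum_n$ form. I would then treat the $N_w$ direct-path contributions as terms with bounded norm and near-orthogonal phases, using the distinct offsets in $\mathcal{P}$, and bound the norm of their average by $\mathcal{O}(1/\sqrt{N_w})$ through a Pythagorean/variance argument. This is the step I expect to be the main obstacle. The sinusoids evaluated at learned wave numbers are not exactly orthogonal on a finite window, so I would first try a Gram-matrix bound: off-diagonal inner products $\langle\psi_m,\psi_n\rangle$ are $\mathcal{O}(1/T)$ when the wave numbers are separated. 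Failing that, I would state the near-orthogonality explicitly as a condition folded into the constant in $\epsilon_{\text{wave}}$.

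\textbf{Perturbation bound.} For $\|\mathbf{x}-\mathbf{y}\|_2\le\delta$, I will integrate the gradient along the segment $[\mathbf{x},\mathbf{y}]$, which is valid because the domain is convex and bounded, and again separate the two paths. The chaos path contributes $L_{\mathcal{E}}L_{\Phi}\delta$, with downstream constants normalised into the scale as in the statement. The interference path contributes $\frac{\alpha}{N_w}\sum_n\|\beta^{(n)}\|\,\delta$, where $\alpha$ collects $(1+\gamma)$ together with the amplitude-MLP Lipschitz constant. Replacing $\|\beta^{(n)}(\mathbf{M}_{\text{ctx}})\|$ by its expectation $\bar\beta^{(n)}$ costs a concentration error.

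The residual term $\beta(T)$ comes from the temporal averages used in the model: $\mathbf{M}_{\text{ctx}}$ and $\bar{\mathbf{X}}_{\mathrm{enh}}$ are empirical means over $T$ steps. Under a mixing assumption on the traffic series, these deviate from their stationary expectations by $\mathcal{O}(1/\sqrt{T})$, and the same holds for $\Phi$'s moment estimates. I would make this explicit via a Hoeffding-type bound for bounded mixing sequences.
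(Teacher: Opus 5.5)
Your block constants are fine, and more careful than the paper's on the LayerNorm $\epsilon$, on the lower bounds needed for $\gamma_1,\kappa,c_v$, and on attention being only locally Lipschitz. But the step you flag as the main obstacle does fail, and it carries all the content of the first inequality. If the waves saw $\mathbf{x}$ only through $\mathbf{C}$, the chain rule alone would give the bound with $\epsilon_{\text{wave}}=0$ (taking $L_{\mathcal{I}}$ as a sum of per-component constants, as the paper does). So everything hinges on the direct $\mathbf{X}_{\mathrm{proj}}$ path you correctly isolated, and three things go wrong there. First, the network has no $1/N_w$: $\tilde{\mathbf{I}}=\sum_n\beta^{(n)}\odot(\cdots)$ uses independent sigmoid gates, which is why your own $L_{\mathcal{I}}$ is $\mathcal{O}(N_w)$. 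Normalising therefore proves a statement about a different model, and without it the direct-path Jacobian can grow linearly in $N_w$. Second, even after normalising, the amplitude MLP acts time-step-wise on $\mathbf{X}_{\mathrm{enh}}(t)$. A perturbation $\mathbf{v}$ supported at one time $t_0$ therefore produces, through the amplitude path, the response $\frac{1}{N_w}\sum_n\mathrm{diag}\bigl(g_n\odot\psi_n(\theta_n(t_0))\bigr)J_{A^{(n)}}(t_0)\,\mathbf{v}_{t_0}$ at $t_0$, where the gate factors $g_n$ (from $\beta^{(n)}$ and $1+\gamma\mathbf{W}^{s,(n)}$) are positive. The operator norm sees these pointwise sums. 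Orthogonality of the $\psi_n$ as sequences in $t$ only controls the $L^2$-in-time energy of fixed-coefficient superpositions. For example, the family $\cos(\pi n(2t+1)/(2T))$ is orthogonal on $t=0,\dots,T-1$ yet positive at $t=0$ for every $0\le n<T$. With identical amplitude MLPs, and the $k^{(n)}$ and $\beta^{(n)}$ paths switched off by zero weights (which (A1)--(A6) allow), this block has norm $\Theta(1)$. Third, the Gram matrix of $N_w$ unit-normalised length-$T$ sequences has rank at most $T$, hence largest eigenvalue at least $N_w/T$. Your Pythagorean bound therefore saturates at order $T^{-1/2}$ once $N_w\gtrsim T$ and cannot give a rate as $N_w\to\infty$ at fixed $T$; note the paper uses $T=12$ with $N_w$ up to $20$. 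Folding near-orthogonality into the constant concedes the point. The paper's Step 6 does not rescue this either: it is a Chebyshev bound on the fluctuation of $\frac{1}{N_w}\sum_n\mathbf{W}^{(n)}$ about its mean for independent components, which bounds the wave sum, not a Jacobian. What your decomposition honestly gives is $\|\nabla_{\mathbf{x}}F_{\Theta}\|_2\le L_{\text{out}}L_{\text{fuse}}L_{\mathcal{A}}L_{\mathcal{I}}\,(L_{\Phi}L_{\mathcal{E}}\Gamma+L_{\mathrm{dir}})$, with a direct-path constant $L_{\mathrm{dir}}$ that does not decay in $N_w$.

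The perturbation bound is asserted rather than derived. Integrating along $[\mathbf{x},\mathbf{y}]$, the chaos path contributes $L_{\Phi}L_{\mathcal{E}}\Gamma L_{\mathcal{I}}L_{\mathcal{A}}L_{\text{fuse}}L_{\text{out}}\,\delta$. ``Normalising the downstream constants into the scale'' discards a factor that (A6) does not make $\le 1$. Your $\alpha=(1+\gamma)\times$(amplitude-MLP constant) likewise omits $L_{\mathcal{A}}L_{\text{fuse}}L_{\text{out}}$, through which the interference path also passes. The statement never defines $\alpha$, and $\bar{\beta}^{(n)}>0$. So if $\alpha$ may depend on the model, the inequality is just global Lipschitzness of $F_{\Theta}$ on the ball of (A2), which your block constants already give with no $\beta(T)$ at all. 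Your route to $\beta(T)$ cannot work in any case. The difference $F_{\Theta}(\mathbf{x})-F_{\Theta}(\mathbf{y})$ involves no population quantities: the averages inside $\Phi$, $\bar{\mathbf{X}}_{\mathrm{enh}}$ and $\mathbf{M}_{\text{ctx}}$ are empirical functions of the inputs themselves. A mixing/Hoeffding bound holds with high probability over a random series, not for every pair with $\|\mathbf{x}-\mathbf{y}\|_2\le\delta$. Replacing $\|\beta^{(n)}(\mathbf{M}_{\text{ctx}})\|_2$ by $\bar{\beta}^{(n)}$ is likewise a deterministic matter: the gates are sigmoids of bounded arguments, so the two quantities are comparable up to a constant. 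The paper's Steps 2 and 7 make the same move, appending a Hoeffding bound on $\Phi-\Phi^{*}$ to a deterministic Lipschitz estimate and saying the $\alpha$-term ``emerges'' from bounding $\Gamma$. Aligning your argument with the paper would therefore not close these gaps.
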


\begin{proof}
We establish the bound through seven steps.

\textit{Step 1: Lipschitz Continuity of the Statistical Chaos Proxy.} 
For any two time series $\mathbf{x}, \mathbf{y} \in \mathbb{R}^{T \times D}$, consider the sample mean estimator $\hat{\mu}(\mathbf{x}) = \frac{1}{T}\sum_{t=1}^T x_t$. By the Cauchy-Schwarz inequality:
\[
|\hat{\mu}(\mathbf{x}) - \hat{\mu}(\mathbf{y})| \leq \frac{1}{T}\sum_{t=1}^T |x_t - y_t| \leq \frac{1}{\sqrt{T}} \|\mathbf{x} - \mathbf{y}\|_2.
\]
For the sample variance $\hat{\sigma}^2(\mathbf{x}) = \frac{1}{T-1}\sum_{t=1}^T (x_t - \hat{\mu}(\mathbf{x}))^2$, we have:
\[
|\hat{\sigma}^2(\mathbf{x}) - \hat{\sigma}^2(\mathbf{y})| \leq \frac{2R_X}{\sqrt{T-1}} \|\mathbf{x} - \mathbf{y}\|_2 + \mathcal{O}(1/T),
\]
by the mean value theorem and boundedness assumption (A2). Similar Lipschitz bounds hold for skewness, kurtosis, and autocorrelation under (A2)-(A3). Therefore, there exists a constant $L_{\Phi} = \mathcal{O}(1/\sqrt{T})$ such that $\|\Phi(\mathbf{x}) - \Phi(\mathbf{y})\|_2 \leq L_{\Phi}\|\mathbf{x} - \mathbf{y}\|_2$.

\textit{Step 2: Finite-Sample Chaos Estimation Error.} 
For each bounded component $\Phi_i$ of $\Phi$, by Hoeffding's inequality with $\varepsilon = C_{\Phi}/\sqrt{T}$ and applying the union bound over $d_c$ components:
\[
\mathbb{P}(\|\Phi(\mathbf{x}) - \Phi^*(\mathbf{x})\|_2 \geq \varepsilon) \leq d_c \exp\left(-\frac{2T\varepsilon^2}{C_{\Phi}^2}\right),
\]
where $\Phi^*$ denotes the true population chaos statistics. Taking expectations and integrating the tail bound yields:
\[
\mathbb{E}[\|\Phi(\mathbf{x}) - \Phi^*(\mathbf{x})\|_2] \leq \frac{C_{\Phi}}{\sqrt{T}} \equiv \beta(T).
\]

\textit{Step 3: Lipschitz Constant of the Chaos Encoder.} 
The encoder $\mathcal{E}_{\theta}(\mathbf{z}) = \tanh(\mathbf{W}_2\,\mathrm{LN}(\phi(\mathbf{W}_1\mathbf{z} + \mathbf{b}_1)) + \mathbf{b}_2)$ has Lipschitz constant bounded by the product of layerwise constants. Since GELU $\phi(\cdot)$ is 1-Lipschitz (as $|\phi'(x)| \leq 1$), $\tanh$ is 1-Lipschitz, and $\mathrm{LN}$ has Lipschitz constant $\|\boldsymbol{\gamma}\|_2/\sigma_{\min}$ where $\sigma_{\min}$ is the minimum batch standard deviation (bounded away from zero by (A2)). By the chain rule for Lipschitz functions:
\[
L_{\mathcal{E}} \leq \|\mathbf{W}_2\|_{\text{op}} \cdot \frac{\|\boldsymbol{\gamma}\|_2}{\sigma_{\min}} \cdot \|\mathbf{W}_1\|_{\text{op}}.
\]
Under (A6), this bound is finite.

\textit{Step 4: Chaos-to-Wave Sensitivity Analysis.} 
The wave generator produces $\mathbf{W}^{(n)}(t) = A^{(n)}(\mathbf{X}_{\text{enh}}) \odot \psi_n(k^{(n)} t + \phi^{(n)})$. By the chain rule:
\[
\|\nabla_{\mathbf{C}}\mathbf{W}^{(n)}\|_{\text{op}} \leq \left\|\frac{\partial A^{(n)}}{\partial \mathbf{C}}\right\|_{\text{op}} + T\left\|\frac{\partial k^{(n)}}{\partial \mathbf{C}}\right\|_{\text{op}} + \left\|\frac{\partial \phi^{(n)}}{\partial \mathbf{C}}\right\|_{\text{op}}.
\]
From the architecture definitions in Section~\ref{sec:wave_generation} and Assumption (A6):
\[
\left\|\frac{\partial A^{(n)}}{\partial \mathbf{C}}\right\|_{\text{op}} \leq \frac{1}{4}\|\mathbf{W}_{A,2}^{(n)}\|_{\text{op}}\|\mathbf{W}_{A,1}^{(n)}\|_{\text{op}},
\]
\[
\left\|\frac{\partial k^{(n)}}{\partial \mathbf{C}}\right\|_{\text{op}} \leq \lambda_k T\|\mathbf{k}_0^{(n)}\|_2\|\mathbf{W}_K^{(n)}\|_{\text{op}},
\]
\[
\left\|\frac{\partial \phi^{(n)}}{\partial \mathbf{C}}\right\|_{\text{op}} \leq \|\mathbf{W}_{\phi,2}^{(n)}\|_{\text{op}}\|\mathbf{W}_{\phi,1}^{(n)}\|_{\text{op}},
\]
where $\lambda_k \in (0,1)$ is the frequency modulation strength. Taking the maximum over $n$ gives $\Gamma(\theta,\phi)$.

\textit{Step 5: Lipschitz Constants of Subsequent Phases.} 
For the meta-interference processor $\mathcal{I}_{\eta}$, the Jacobian satisfies:
\[
\|\nabla_{\mathbf{W}}\mathcal{I}_{\eta}\|_{\text{op}} \leq \sum_{n=1}^{N_w} \|\beta^{(n)}\|_{\text{Lip}} \cdot (1 + \gamma),
\]
where $\|\beta^{(n)}\|_{\text{Lip}} \leq \|\mathbf{W}_{\beta,2}^{(n)}\|_{\text{op}}\|\mathbf{W}_{\beta,1}^{(n)}\|_{\text{op}}$ by the same argument as Step 3, and $\gamma \in (0,1)$. Thus $L_{\mathcal{I}} = \mathcal{O}(N_w)$.

For the wave attention phase $\mathcal{A}_{\xi}$, each multi-head attention layer satisfies (from \cite{li2017diffusion}):
\[
L_{\mathcal{A}}^{(p)} \leq \|\mathbf{W}_p^O\|_{\text{op}} \sum_{h=1}^{N_h} \|\mathbf{W}_{p,h}^V\|_{\text{op}} \|\mathbf{W}_{p,h}^K\|_{\text{op}} \|\mathbf{W}_{p,h}^Q\|_{\text{op}}.
\]
The overall $L_{\mathcal{A}} = \max_p L_{\mathcal{A}}^{(p)}$ is finite under (A6). The fusion network $\mathcal{F}_{\text{fuse}}$ and output projection $\mathcal{P}_{\text{out}}$ have Lipschitz constants bounded by their spectral norms: $L_{\text{fuse}} \leq \|\mathbf{W}_4\|_{\text{op}}\|\mathbf{W}_3\|_{\text{op}}$, $L_{\text{out}} \leq \|\mathbf{W}_{\text{out}}\|_{\text{op}}$.

\textit{Step 6: Wave Superposition Stability.} 
The sum of $N_w$ independent wave components $\mathbf{W}_{\text{sum}} = \frac{1}{N_w}\sum_{n=1}^{N_w} \mathbf{W}^{(n)}$ with bounded amplitudes $\|A^{(n)}\|_2 \leq A_{\max}$ has variance:
\[
\mathbb{V}[\mathbf{W}_{\text{sum}}] = \frac{1}{N_w^2}\sum_{n=1}^{N_w} \mathbb{V}[\mathbf{W}^{(n)}] \leq \frac{A_{\max}^2}{N_w}.
\]
By Chebyshev's inequality, $\|\mathbf{W}_{\text{sum}} - \mathbb{E}[\mathbf{W}_{\text{sum}}]\|_2 = \mathcal{O}_p(1/\sqrt{N_w})$. Setting $\epsilon_{\text{wave}} = C/\sqrt{N_w}$ for some constant $C$ yields the stated bound.

\textit{Step 7: Combined Bound via Chain Rule.} 
Applying the chain rule for Lipschitz functions and substituting Steps 1-5 yields the first inequality. The second inequality follows from the mean value theorem and the finite-sample chaos bound from Step 2:
\[
\|F_{\Theta}(\mathbf{x}) - F_{\Theta}(\mathbf{y})\|_2 \leq L_{\text{total}}\|\mathbf{x} - \mathbf{y}\|_2 + \beta(T),
\]
where $L_{\text{total}} = L_{\Phi}L_{\mathcal{E}}\Gamma L_{\mathcal{I}} L_{\mathcal{A}} L_{\text{fuse}} L_{\text{out}}$. The term $\frac{\alpha}{N_w}\sum_n \bar{\beta}^{(n)}$ emerges from bounding $\Gamma$ in terms of expected interference weights. \hfill $\square$
\end{proof}

\begin{corollary}[Lyapunov and Attractor Bounds]
The Lyapunov estimator $\mathcal{L}_{\text{lya}}(\mathbf{C}) = \sigma(\mathbf{W}_{\text{lya}}\mathbf{C} + \mathbf{b}_{\text{lya}})$ satisfies $\|\mathcal{L}_{\text{lya}}(\mathbf{C})\|_2 \leq 1$ due to sigmoid activation range $[0,1]$. The attractor mapper $\mathcal{A}_{\text{map}}(\mathbf{C}) = \tanh(\mathbf{W}_{\text{att}}\mathbf{C} + \mathbf{b}_{\text{att}})$ satisfies $\|\mathcal{A}_{\text{map}}(\mathbf{C})\|_2 \leq \sqrt{d_c}$ due to $\tanh$ range $[-1,1]$. These bounds ensure stable chaos characterisation regardless of input scale.
\end{corollary}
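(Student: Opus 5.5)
The plan is to prove both bounds coordinatewise, using only the ranges of the two output activations. Neither bound depends on the weights, the biases, or the magnitude of $\mathbf{C}$, which matches the closing claim of the statement. Theorem~\ref{thm:ciwi_stability_transfer} and Assumptions (A1)--(A6) are therefore not needed for the bounds themselves. They only enter the optional stability remark at the end.

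\textbf{Lyapunov estimator.} First I fix the output shape of $\mathcal{L}_{\text{lya}}$. Section~\ref{sec:wave_generation} writes $\mathcal{L}_{\text{lya}}(\mathbf{C}) \in [0,1]$, so I read $\mathbf{W}_{\text{lya}}$ as a row vector in $\mathbb{R}^{1 \times d_c}$ and $\mathbf{b}_{\text{lya}}$ as a scalar, giving one Lyapunov-type score per sample.
\begin{itemize}
\item For every $z \in \mathbb{R}$, $\sigma(z) = 1/(1+e^{-z}) \in (0,1)$, since $e^{-z} > 0$.
\item Hence for any $\mathbf{C} \in \mathbb{R}^{d_c}$ the scalar $\mathcal{L}_{\text{lya}}(\mathbf{C})$ lies in $(0,1)$, and its Euclidean norm is its absolute value, which is at most $1$.
\end{itemize}
If $\mathbf{W}_{\text{lya}}$ were instead vector-valued with $m$ outputs, the same argument gives $\sqrt{m}$ in place of $1$. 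I would state this scalar reading explicitly, because it is the one point where the claimed constant depends on an interpretation of the architecture.

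\textbf{Attractor mapper.} Here $\mathbf{W}_{\text{att}} \in \mathbb{R}^{d_c \times d_c}$, so the output $\mathbf{u} = \tanh(\mathbf{W}_{\text{att}}\mathbf{C} + \mathbf{b}_{\text{att}})$ has $d_c$ coordinates.
\begin{itemize}
\item Each coordinate satisfies $|u_i| = |\tanh(z_i)| < 1$, because $\tanh(z) = (e^{z}-e^{-z})/(e^{z}+e^{-z})$ and $|e^{z}-e^{-z}| < e^{z}+e^{-z}$.
\item Therefore $\|\mathbf{u}\|_2^2 = \sum_{i=1}^{d_c} u_i^2 < d_c$, which gives $\|\mathcal{A}_{\text{map}}(\mathbf{C})\|_2 \leq \sqrt{d_c}$.
\end{itemize}
Since neither argument uses (A2), (A3) or (A6), both bounds hold uniformly over all inputs and parameters. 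This is what ``regardless of input scale'' means in the statement.

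\textbf{Optional stability remark.} To strengthen ``stable characterisation'' to a continuity statement, I would use the derivative bounds $\sigma' \leq \tfrac14$ and $\tanh' \leq 1$. Under (A6) these give Lipschitz constants $\tfrac14 L_W$ and $L_W$ in $\mathbf{C}$. Composing with $L_{\mathcal{E}}L_{\Phi}$ from Steps~1 and~3 of the proof of Theorem~\ref{thm:ciwi_stability_transfer} then gives Lipschitz dependence on the input $\mathbf{x}$.

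\textbf{Main obstacle.} The analysis itself is elementary. The only real difficulty is the dimensional bookkeeping for $\mathcal{L}_{\text{lya}}$: the constant $1$ is correct only under the scalar-output reading fixed above.
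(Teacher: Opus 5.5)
Your proof is correct and matches the paper's own justification. The paper offers only the inline remark that the ranges of $\sigma$ and $\tanh$, applied coordinatewise, give $\|\mathcal{L}_{\text{lya}}(\mathbf{C})\|_2 \leq 1$ and $\|\mathcal{A}_{\text{map}}(\mathbf{C})\|_2 \leq \sqrt{d_c}$ independently of weights and inputs, and your scalar-output reading of $\mathcal{L}_{\text{lya}}$ is the one the paper tacitly assumes (Section~\ref{sec:wave_generation} writes $\mathcal{L}_{\text{lya}}(\mathbf{C}) \in [0,1]$), so stating it explicitly is a useful clarification rather than a deviation.
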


\begin{theorem}[\textbf{Wave Interference Dimension Reduction in CIWI-CKT}]
\label{thm:wave_interference_dimension}
Let $d_{\text{model}}$ be the parameter dimension of a standard transformer baseline, and $d_{\text{CIWI}}$ be the effective dimension of CIWI-CKT. For traffic prediction tasks with $T$ time steps and $D$ input features, the wave interference mechanism with $N_w$ components, $N_f$ frequency bands, and $N_h$ attention heads provides:
\[
\frac{d_{\text{CIWI}}}{d_{\text{model}}} \leq \frac{1}{\sqrt{N_w}} + \frac{d_c}{d_{\text{model}}} + \sqrt{\frac{\log(TD)}{N_w \cdot \min(T,D)}} + \frac{\log(N_f N_h)}{d_{\text{model}}}.
\]
Consequently, the generalisation gap reduces by factor $\gamma = \frac{d_{\text{CIWI}}}{d_{\text{model}}} < 1$ for sufficiently large $N_w$.
\end{theorem}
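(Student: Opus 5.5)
The plan is to make the statement precise enough to prove. I would first fix what ``effective dimension'' means. I would take $d_{\text{CIWI}}$ to be the effective number of free degrees of freedom feeding the prediction head. Concretely, this is the rank-type (or covering-number) dimension of the function class at the output of the interference processor $\mathcal{I}_{\eta}$, normalised against the transformer baseline whose representation lives in an unconstrained $d_{\text{model}}$-dimensional space. With that definition I would split $d_{\text{CIWI}}$ additively into four contributions, one for each term on the right-hand side: the wave-superposition part, the chaos-conditioning part, the $T\times D$ concentration part, and the attention/frequency selection part. Each would then be bounded separately, and I would divide by $d_{\text{model}}$ at the end.

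\textbf{Wave-superposition term.} Each $\mathbf{W}^{(n)}$ is determined by a bounded amplitude field (sigmoid output, so entries lie in $[0,1]$), a $D$-vector of wave numbers with $\lambda_k$-bounded modulation, and a $D$-vector of phases. Hence the superposition $\frac{1}{N_w}\sum_n \mathbf{W}^{(n)}$ has fluctuations around its mean of order $A_{\max}/\sqrt{N_w}$. This is Step 6 of the proof of Theorem~\ref{thm:ciwi_stability_transfer}, which I would reuse directly. I would convert the variance bound into an effective-dimension bound as follows: use a covering argument in which the directions carrying non-negligible energy number at most $d_{\text{model}}/\sqrt{N_w}$. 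This yields the $1/\sqrt{N_w}$ term.

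\textbf{Chaos-conditioning term.} The chaos features enter only through $\mathbf{C}\in\mathbb{R}^{d_c}$, which is bounded by (A3) and by the $\tanh$ output. They therefore add at most $d_c$ extra directions, giving $d_c/d_{\text{model}}$.

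\textbf{Attention/frequency term.} The $N_f$ bands and $N_h$ heads act as a discrete selection among finitely many subspaces. Selecting one of $N_fN_h$ alternatives costs $\log(N_fN_h)$ in a union-bound/description-length sense, which gives the last term.

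\textbf{Concentration term (main obstacle).} I expect the main difficulty to be the third term, $\sqrt{\log(TD)/(N_w\min(T,D))}$. My first attempt would be a matrix concentration argument. I would view the interference pattern $\tilde{\mathbf{I}}\in\mathbb{R}^{T\times D}$ as a sum of $N_w$ bounded random matrices, using the $(1+\gamma)$ factor and the bounded $\beta^{(n)}$ from Step 5 of Theorem~\ref{thm:ciwi_stability_transfer}. I would then apply matrix Bernstein to bound the deviation of its singular spectrum from that of its expectation by $\sqrt{\log(T+D)/N_w}$. Dividing by the effective ambient scale $\sqrt{\min(T,D)}$ gives the stated form, using $\log(T+D)\le\log(TD)+\log 2$.

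The hard part is justifying the independence and zero-mean hypotheses across wave components. Phases are initialised uniformly on $[-\pi,\pi]$, and the fixed offsets in $\mathcal{P}$ make the $\psi_n$ approximately orthogonal. I would therefore argue at initialisation and treat trained parameters as perturbations controlled by (A6). If that fails, I would state the result as holding in expectation over the phase randomisation.

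\textbf{Consequence.} Finally, summing the four bounds gives the ratio inequality. The $1/\sqrt{N_w}$ and concentration terms vanish as $N_w$ grows, so the ratio is below $1$ once $d_c+\log(N_fN_h)<d_{\text{model}}$. Plugging $d_{\text{CIWI}}$ in place of $d_{\text{model}}$ into a standard Rademacher bound of order $\sqrt{d/n}$ then gives the generalisation consequence. Strictly, the reduction factor on the gap is the square root $\sqrt{\gamma}$, which is still $<1$; I would note this explicitly.
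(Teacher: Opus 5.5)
\textbf{There is a genuine gap.} The step meant to produce the decay in $N_w$ does not go through. You turn the fluctuation bound $A_{\max}/\sqrt{N_w}$ into ``at most $d_{\text{model}}/\sqrt{N_w}$ directions carry non-negligible energy''. This confuses the \emph{size} of a deviation with the \emph{number of directions} it occupies: scaling a covariance by $1/N_w$ leaves its rank unchanged. Nothing in the architecture ties the $T\times D$-valued output of $\mathcal{I}_{\eta}$ to the transformer parameter count $d_{\text{model}}$. Worse, under any rank- or metric-entropy notion of the kind you fix, the inequality is false, so no repair of this step can work. Such a dimension is essentially nondecreasing in $N_w$. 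An extra component is switched off by sending $\mathbf{b}_{A,2}^{(N_w+1)}\to-\infty$, since biases are not constrained by (A6); this holds up to a rescaling of $\mathbf{W}_{\text{ctx}}$ that compensates the $1/(2N_w)$ normalisation of $\mathbf{M}_{\mathrm{ctx}}$. Combine monotonicity with the claimed bound and let $N_w\to\infty$ at fixed $d_{\text{model}}$, $T$, $D$. This forces $d_{\text{CIWI}}\le d_c+\log(N_fN_h)$ for every $N_w$, including $N_w=1$. That constant does not depend on $D$. Yet varying the rows of the single amplitude head $\mathbf{W}_{A,2}^{(1)}\in\mathbb{R}^{D\times d_h}$ already moves the output in a number of independent directions that grows with $D$, because each row directly drives its own output channel.

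The remaining ingredients are shaped to fit the target rather than derived. Your wave term reuses Step~6 of the proof of Theorem~\ref{thm:ciwi_stability_transfer}, which concerns an \emph{average} of components \emph{assumed} independent, and your matrix-Bernstein term needs the same independence. But $\tilde{\mathbf{I}}$ is an unnormalised sum of deterministic functions of one input, all gated by $\beta^{(n)}$ that read the common $\mathbf{M}_{\mathrm{ctx}}^{(p)}$. For a sum, matrix Bernstein gives a deviation of order $L\sqrt{N_w\log(T+D)}$, with $L$ up to order $\sqrt{TD}$ for bounded-entry $T\times D$ blocks. The division by $\sqrt{\min(T,D)}$ is not justified by anything in the argument. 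Zero mean fails even for uniformly random phases. $\mathbf{W}^{s,(n)}$ and $\mathbf{W}^{q,(n)}$ come from the same generator with the same $\phi^{(n)}$, so $\mathbf{W}^{s,(n)}\odot\mathbf{W}^{q,(n)}$ contains the phase-independent part $\tfrac12 A^{s,(n)}A^{q,(n)}\cos\bigl((k^{s,(n)}-k^{q,(n)})t\bigr)$.

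The $\log(N_fN_h)$ term has no counterpart in the architecture. Every head and every kernel is applied, and their outputs are concatenated and fused with trainable weights (order $D^2$ per kernel and per attention block), so nothing is selected. In any case these modules act after $\mathcal{I}_{\eta}$, which is where you placed $d_{\text{CIWI}}$. The chaos contribution is not $d_c$ either. The encoder alone has $2d_c^2$ weights, and $\mathbf{C}$ reaches the output through GELU, LayerNorm, sigmoid and multiplication by sinusoids, so it is not confined to $d_c$ linear directions.

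The paper takes a different route. It claims $\mathcal{O}(\sqrt{N_w})$ spectral support via compressive sensing, bounds a Fisher rank, uses Johnson--Lindenstrauss for $d_c$ and multi-resolution analysis for $\log(N_fN_h)$. It concludes $d_{\text{CIWI}}\le c_1\sqrt{N_w}+d_c+c_2\log(TD)+c_3\log(N_fN_h)$. Divided by $d_{\text{model}}$, this does not yield the displayed right-hand side either, since its first term grows with $N_w$, consistent with the obstruction above. The displayed form holds only if $d_{\text{CIWI}}$ is in effect defined to satisfy it, as Step~5 of the proof of Theorem~\ref{thm:meta_interference_ciwi} does with $d_{\text{CIWI}}=d_c+(d_{\text{model}}-d_c)/\sqrt{N_w}$. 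Your closing remark that a $\sqrt{d/n}$ bound gives the factor $\sqrt{\gamma}$ rather than $\gamma$ is correct.
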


\begin{proof}
We analyse the effective parameter count through spectral decomposition and information-theoretic compression.

\textit{Step 1: Spectral Analysis of Wave Parameterisation.} 
The Fourier representation of the wave sum $\sum_{n=1}^{N_w} A^{(n)} \psi_n(k^{(n)} t + \phi^{(n)})$ has frequency support with cardinality at most $\mathcal{O}(\sqrt{N_w})$ by compressive sensing results \cite{wang2022spectral}. Specifically, the wave parameterisation lies in a subspace spanned by $\sqrt{N_w}$ basis functions (by the restricted isometry property). Thus the signal lies in a subspace of dimension $\mathcal{O}(\sqrt{N_w})$.

\textit{Step 2: Rank of the Fisher Information Matrix.} 
The Fisher information matrix $\mathbf{F} = \mathbb{E}[\nabla_\Theta \log p(\mathbf{Y}|\mathbf{X}) \nabla_\Theta \log p(\mathbf{Y}|\mathbf{X})^\top]$ has effective rank:
\[
\text{rank}_{\text{eff}}(\mathbf{F}) \leq \min\left(3\sqrt{N_w} \cdot (TD),\; N_f \cdot N_h \cdot d_k\right).
\]
This follows from the wave parameterisation's spectral sparsity (Step 1) and the attention mechanism's low-rank structure \cite{li2025embedding}.

\textit{Step 3: Frequency Band Decomposition.} 
For $N_f = |\mathcal{K}|$ frequency bands (corresponding to convolution kernel sizes), the effective rank of the multi-scale convolution output is $\mathcal{O}(\log(N_f N_h))$ by standard results in multi-resolution analysis \cite{shang2005chaotic}.

\textit{Step 4: Chaos Feature Compression.} 
By the Johnson-Lindenstrauss lemma \cite{johnson1984extensions}, the $d_c$ chaos features act as an information bottleneck. For any set of $M$ points, there exists a linear map preserving pairwise distances with probability $1-\delta$ when $d_c = \mathcal{O}(\log(M/\delta))$. Thus the effective dimension contributed by chaos features is $d_c$.

\textit{Step 5: Comparison to Transformer Baseline.} 
A standard transformer with $L$ layers, hidden dimension $d_m$, and $N_h$ heads has parameter count $\mathcal{P}_{\text{TF}} = \mathcal{O}(L d_m^2)$. CIWI-CKT replaces the dense transformer blocks with wave generation ($\mathcal{O}(N_w D^2)$), interference ($\mathcal{O}(N_w D^2)$), and multi-scale attention ($\mathcal{O}(D^2)$). For typical hyperparameter choices, $d_{\text{CIWI}}/d_{\text{model}} < 1$.

\textit{Step 6: Dimension Reduction Bound.} 
Combining Steps 1-5:
\[
d_{\text{CIWI}} \leq c_1\sqrt{N_w} + d_c + c_2 \log(TD) + c_3 \log(N_f N_h),
\]
where $c_1, c_2, c_3$ are architecture-dependent constants. Normalising by $d_{\text{model}} = \Omega(d_m^2)$ yields the stated bound. \hfill $\square$
\end{proof}

\begin{theorem}[\textbf{Meta-Interference: CIWI-Specific Task Adaptation}]
\label{thm:meta_interference_ciwi}
Let $\{\mathcal{T}_k\}_{k=1}^K$ be source tasks and $\mathcal{T}_{\text{target}}$ be a target task. Let $\epsilon_{\text{base}}$ be the generalisation bound for standard MAML \cite{finn2017model}, and $\epsilon_{\text{CIWI}}$ be the bound for CIWI-CKT. Then under Assumptions (A1)-(A6):
\[
\begin{split}
    \frac{\epsilon_{\text{CIWI}}}{\epsilon_{\text{base}}} \leq \; & \sqrt{\frac{d_c + \frac{d_{\text{model}} - d_c}{\sqrt{N_w}}}{d_{\text{model}}}} \cdot \left(1 - \frac{\rho_{\text{chaos}}}{K}\right) \\
    & \cdot \left(1 - \frac{\|\mathbf{P}\|_2^2}{d_{\text{model}}}\right) \cdot \left(1 - \frac{\text{Tr}(\mathbf{P}_{\text{all}}\mathbf{P}_{\text{all}}^\top)}{K D_m}\right),
\end{split}
\]
where $\rho_{\text{chaos}} = \frac{2}{K(K-1)}\sum_{i<j} \text{Corr}(\Phi(\mathbf{X}_{\mathcal{T}_i}), \Phi(\mathbf{X}_{\mathcal{T}_j})) > 0$ is the average chaos feature correlation across source tasks.
\end{theorem}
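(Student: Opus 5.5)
The bound is a product of four factors, so the plan is to start from a standard MAML-type generalisation bound and shrink it one factor at a time, each factor coming from one architectural component. Concretely, I take
\[
\epsilon_{\text{base}} = C\sqrt{\frac{d_{\text{model}}\log(1/\delta)}{K n}}\cdot \Delta_{\text{task}},
\]
where $n$ is the number of support samples per task and $\Delta_{\text{task}}$ measures the discrepancy between the source tasks and $\mathcal{T}_{\text{target}}$. I then write $\epsilon_{\text{CIWI}}$ in the same form, with $d_{\text{model}}$ replaced by an effective dimension, and with $\Delta_{\text{task}}$ multiplied by three contraction factors. Taking the ratio makes $C$, $n$, $K$ and $\delta$ cancel. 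This gives the first factor as a square root of a dimension ratio, and the remaining three as multiplicative reductions of the task-discrepancy term.

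\textbf{Step 1: capacity factor.} I split the parameters into the chaos pathway ($d_c$ coordinates, a bottleneck by Step 4 of the proof of Theorem~\ref{thm:wave_interference_dimension}) and the remaining $d_{\text{model}}-d_c$ wave and attention parameters. For the latter I invoke Step 1 of the same proof: the wave parameterisation has spectral support of size $\mathcal{O}(\sqrt{N_w})$, so their effective count is shrunk by the factor $1/\sqrt{N_w}$. The resulting effective dimension is $d_c+(d_{\text{model}}-d_c)/\sqrt{N_w}$. Substituting it into the Rademacher or covering-number term gives the square-root factor.

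\textbf{Step 2: chaos-correlation factor.} Because $\Phi$ is Lipschitz with $L_\Phi$ (Theorem~\ref{thm:ciwi_stability_transfer}, Step 1), correlated chaos statistics across tasks mean the encoded contexts $\mathbf{C}$ of different tasks share a common component. I decompose each task's representation into a shared part and a task-specific part. The shared fraction is about $\rho_{\text{chaos}}/K$ per added task, and removing it from the discrepancy term yields $(1-\rho_{\text{chaos}}/K)$.

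\textbf{Step 3: prototype factors.} The meta-prototype $\mathbf{P}$ is concatenated into $\mathbf{F}_{\mathrm{adapted}}$, so the part of the query risk lying in its span is absorbed by adaptation. This removes a fraction $\|\mathbf{P}\|_2^2/d_{\text{model}}$ of the residual, assuming normalised coordinates. Similarly, Assumption (A5) (orthogonal bank) combined with the gated fusion lets the cross-city attention explain a fraction $\text{Tr}(\mathbf{P}_{\text{all}}\mathbf{P}_{\text{all}}^\top)/(KD_m)$ of the target discrepancy. Multiplying the four factors gives the claim.

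\textbf{Main obstacle.} The hardest part is Steps 2--3: justifying that the factors combine \emph{multiplicatively} and that each lies in $[0,1]$. This needs $\|\mathbf{P}\|_2^2\le d_{\text{model}}$ and $\text{Tr}(\mathbf{P}_{\text{all}}\mathbf{P}_{\text{all}}^\top)\le KD_m$, i.e.\ bounded prototype norms, which I would impose through (A6) and the $\tanh$/GELU ranges. It also needs the three reductions to act on orthogonal components of the error. I would first try to obtain this orthogonality from (A5) together with a sequential projection argument: project the error onto the chaos subspace, then onto the span of $\mathbf{P}$, then onto the bank. If that fails, the honest fallback is to state the bound with a sum of the reductions rather than a product, or to add an explicit assumption that the components are independent.
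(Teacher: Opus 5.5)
Your skeleton matches the paper's. Both use a Baxter-type complexity term in which $d_{\text{model}}$ is replaced by $d_c+(d_{\text{model}}-d_c)/\sqrt{N_w}$ via Step~1 of Theorem~\ref{thm:wave_interference_dimension} (the paper's Steps~1 and~5). Both then multiply by three reduction factors (the paper's Step~6 simply calls them ``independent''). Your mechanisms for those factors differ from the paper's, which uses an effective task count $K_{\text{eff}}$, a mutual-information/PAC-Bayes heuristic for $\|\mathbf{P}\|_2^2$, and the trace identity for the bank. Neither version actually derives the factors, and your Steps~2--3 fail as written.

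\textbf{Step~2.} First, $\rho_{\text{chaos}}$ averages correlations among \emph{source} tasks only, so it cannot control the source--target discrepancy $\Delta_{\text{task}}$ that you shrink. Second, Lipschitz continuity of $\Phi$ says nothing about cross-task correlation, and the nonlinear $\mathcal{E}_\theta$ need not preserve it. Third, even in an equicorrelated one-factor model, removing the shared component removes a fraction $\rho_{\text{chaos}}$ of each task's variance, so nothing in your argument produces the $1/K$. The paper's route does not rescue this: substituting $K_{\text{eff}}\ge K/(1+(K-1)\rho_{\text{chaos}})$ multiplies the complexity term by $\sqrt{1+(K-1)\rho_{\text{chaos}}}\ge 1$. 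Positive correlation therefore loosens the bound, and the passage to $(1-\rho_{\text{chaos}}/K)$ ``after normalisation'' is only asserted.

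\textbf{Step~1.} Your Step~1 inherits the paper's leap. A signal subspace of dimension $\mathcal{O}(\sqrt{N_w})$ does not give a multiplicative $1/\sqrt{N_w}$ shrinkage of the remaining $d_{\text{model}}-d_c$ parameters. Indeed, Theorem~\ref{thm:wave_interference_dimension}'s own final bound grows like $c_1\sqrt{N_w}$.

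\textbf{Step~3.} The orthogonal projection onto $\mathrm{span}(\mathbf{P})$, or onto the span of orthogonal bank vectors, is unchanged when the spanning vectors are rescaled. The quantities $\|\mathbf{P}\|_2^2$ and $\mathrm{Tr}(\mathbf{P}_{\text{all}}\mathbf{P}_{\text{all}}^\top)$ are not scale-invariant, so they cannot come out of such a projection.
\begin{itemize}
\item For a fixed residual, the removed fraction is $\cos^2$ of its angle with $\mathbf{P}$.
\item In normalised (isotropic) coordinates, a one-dimensional span removes, in expectation, a fraction $1/d$ whatever $\|\mathbf{P}\|_2$ is.
\item $\mathbf{P}\in\mathbb{R}^{D_p}$, while your normaliser is $d_{\text{model}}$.
\item Concatenating $\mathbf{P}$ into $\mathbf{F}_{\mathrm{adapted}}$ does not by itself eliminate the error along $\mathbf{P}$.
\end{itemize}

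\textbf{Combining the factors.} You locate the crux correctly, but three further problems remain.
\begin{itemize}
\item The conditions $\|\mathbf{P}\|_2^2\le d_{\text{model}}$ and $\mathrm{Tr}(\mathbf{P}_{\text{all}}\mathbf{P}_{\text{all}}^\top)\le KD_m$ are extra hypotheses. GELU is unbounded above and the $\mathbf{p}_c$ are free parameters, so (A1)--(A6) give only bounds in terms of $L_W$ and the biases, not these thresholds.
\item (A5) is an initialisation condition giving orthogonality only \emph{within} the bank. It says nothing about orthogonality between chaos features, $\mathbf{P}$ and the bank, which live in $\mathbb{R}^{d_c}$, $\mathbb{R}^{D_p}$ and $\mathbb{R}^{D_m}$.
\item If you did obtain mutually orthogonal removals $a_i\in[0,1]$, the retained fraction would be $1-\sum_i a_i\le\prod_i(1-a_i)$. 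So your ``sum'' fallback is not weaker than the product. What is missing is the orthogonality/independence itself, which the paper supplies only by fiat.
\end{itemize}

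\textbf{The ratio itself.} $\epsilon_{\text{base}}$ and $\epsilon_{\text{CIWI}}$ are never defined. Your cancellation of $C$ and of the pre-contraction $\Delta_{\text{task}}$ presumes that both models share them. Your multiplicative form $\sqrt{\cdot}\cdot\Delta_{\text{task}}$ is also nonstandard: discrepancy terms usually enter additively, and then the ratio does not factor. A real proof must fix both bounds and state these conditions as explicit hypotheses.
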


\begin{proof}
We extend the meta-learning bound of Baxter \cite{yang2024wavenet} by incorporating chaos-informed task similarity and cross-city prototype transfer.

\textit{Step 1: Meta-Learning Generalisation Bound.} 
For $K$ tasks, each with $m$ examples, Baxter's bound states that with probability $1-\delta$:
\[
\epsilon(\hat{\Theta}) \leq \hat{\epsilon} + \mathcal{O}\left(\sqrt{\frac{d_{\text{model}} \log(K/\delta)}{mK}}\right). \tag{1}
\]

\textit{Step 2: Chaos-Informed Task Similarity.} 
The effective number of independent tasks is $K_{\text{eff}} \geq K/(1 + (K-1)\rho_{\text{chaos}})$ when tasks are positively correlated ($\rho_{\text{chaos}} > 0$). Substituting $K_{\text{eff}}$ into (1) gives:
\begin{multline*}
    \epsilon(\hat{\Theta}) \leq \hat{\epsilon} + 
    \mathcal{O}\!\left(\sqrt{\frac{d_{\text{model}} \log(K/\delta)}{m K_{\text{eff}}}}\right) \\
    \leq \frac{\epsilon_{\text{base}}}{\sqrt{K}} \cdot 
    \sqrt{1 + (K-1)\rho_{\text{chaos}}}.
\end{multline*}
For large $K$, $\sqrt{1 + (K-1)\rho_{\text{chaos}}} \approx \sqrt{K\rho_{\text{chaos}}}$, yielding reduction factor $(1 - \rho_{\text{chaos}}/K)$ after normalisation.

\textit{Step 3: Meta-Prototype Compression.} 
The mutual information $I(\mathbf{P}; \mathcal{D}_{\text{supp}})$ between the meta-prototype and support data is bounded by $\frac{1}{2}\log(1 + \|\mathbf{P}\|_2^2/\sigma^2)$ for Gaussian data. The PAC-Bayes bound \cite{johnson1984extensions} gives a factor proportional to $I(\mathbf{P}; \mathcal{D}_{\text{supp}})/(d_{\text{model}})$. Simplifying yields $(1 - \|\mathbf{P}\|_2^2/d_{\text{model}})$.

\textit{Step 4: Cross-City Prototype Bank Diversification.} 
Orthogonality of prototypes $\mathbf{p}_i^\top \mathbf{p}_j \approx 0$ for $i \neq j$ gives:
\[
\text{Tr}(\mathbf{P}_{\text{all}}\mathbf{P}_{\text{all}}^\top) = \sum_{c=1}^{|\mathcal{C}|} \|\mathbf{p}_c\|_2^2 \leq \|\mathcal{C}\| \cdot \max_c \|\mathbf{p}_c\|_2^2.
\]
The effective dimension reduction from the prototype bank scales as $1 - \text{Tr}(\mathbf{P}_{\text{all}}\mathbf{P}_{\text{all}}^\top)/(K D_m)$.

\textit{Step 5: Effective Dimension from Wave Interference.} 
From Theorem~\ref{thm:wave_interference_dimension}, the effective dimension of CIWI-CKT is:
\begin{equation}
    d_{\text{CIWI}} = d_c + \frac{d_{\text{model}} - d_c}{\sqrt{N_w}}. \tag{2}
\end{equation}
This follows because the wave parameterisation reduces the parametric complexity by a factor of $1/\sqrt{N_w}$ (Step 1 of Theorem~\ref{thm:wave_interference_dimension}).

\textit{Step 6: Combined Bound.} 
Substituting (2) into the meta-learning bound (1) and multiplying the independent reduction factors from Steps 2-4:
\[
\frac{\epsilon_{\text{CIWI}}}{\epsilon_{\text{base}}} \leq \sqrt{\frac{d_{\text{CIWI}}}{d_{\text{model}}}} \cdot \left(1 - \frac{\rho_{\text{chaos}}}{K}\right) \cdot \left(1 - \frac{\|\mathbf{P}\|_2^2}{d_{\text{model}}}\right) \cdot \left(1 - \frac{\text{Tr}(\mathbf{P}_{\text{all}}\mathbf{P}_{\text{all}}^\top)}{K D_m}\right).
\]
Plugging $d_{\text{CIWI}}$ from (2) yields the stated result. \hfill $\square$
\end{proof}

\begin{corollary}[Predictability Module Lipschitz Bound]
The chaos predictability assessor $\mathcal{P}_{\text{chaos}}: \mathbb{R}^{d_c} \to [0,1]$ defined as $\mathcal{P}_{\text{chaos}}(\mathbf{z}) = \sigma(\mathbf{W}_2\,\text{SiLU}(\mathbf{W}_1\mathbf{z} + \mathbf{b}_1) + \mathbf{b}_2)$ is a Lipschitz continuous function with constant $L_{\mathcal{P}} \leq \frac{1}{4}\|\mathbf{W}_2\|_{\text{op}}\|\mathbf{W}_1\|_{\text{op}}$, ensuring stable confidence estimation under input perturbations.
\end{corollary}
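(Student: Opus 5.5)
The plan is to treat $\mathcal{P}_{\text{chaos}}$ as a composition of five maps and bound its Lipschitz constant by the product of the layerwise constants, exactly as in Step~3 of the proof of Theorem~\ref{thm:ciwi_stability_transfer}. The factorisation is
\[
\mathcal{P}_{\text{chaos}} = \sigma \circ \mathrm{Aff}_2 \circ \mathrm{SiLU} \circ \mathrm{Aff}_1, \qquad \mathrm{Aff}_i(\mathbf{u}) = \mathbf{W}_i\mathbf{u} + \mathbf{b}_i .
\]
The affine maps are Lipschitz with constants $\|\mathbf{W}_1\|_{\text{op}}$ and $\|\mathbf{W}_2\|_{\text{op}}$, since the biases cancel in differences. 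The outer sigmoid is scalar with $\sigma' = \sigma(1-\sigma) \le \frac14$, so it is $\frac14$-Lipschitz by the mean value theorem. The elementwise SiLU contributes the operator norm of a diagonal Jacobian, which is $\sup_x |\mathrm{SiLU}'(x)|$. Multiplying the four constants gives
\[
L_{\mathcal{P}} \le \tfrac14 \cdot L_{\mathrm{SiLU}} \cdot \|\mathbf{W}_2\|_{\text{op}}\|\mathbf{W}_1\|_{\text{op}} .
\]
Equivalently, one can bound the Jacobian $\sigma'(\cdot)\,\mathbf{W}_2\,\mathrm{diag}(\mathrm{SiLU}'(\cdot))\,\mathbf{W}_1$ pointwise and then integrate along the segment between two inputs.

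The stated constant follows only if $L_{\mathrm{SiLU}} \le 1$. This is the main obstacle. Assumption (A1) asserts that activations are Lipschitz, and the paper already treats GELU as $1$-Lipschitz. However, $\mathrm{SiLU}'(x) = \sigma(x)\bigl(1 + x(1-\sigma(x))\bigr)$ attains a maximum of about $1.0998$ near $x \approx 2.4$. So unit Lipschitzness cannot simply be cited. My first attempt would be to exploit coupling between the two nonlinearities: perhaps the points where $\mathrm{SiLU}'$ exceeds $1$ force $\sigma'$ at the output to be strictly below $\frac14$.

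I do not expect this to succeed, because the biases decouple the two layers. Take the scalar case $\mathbf{W}_1 = \mathbf{W}_2 = 1$, choose $b_1$ so the hidden pre-activation sits at $x \approx 2.4$, and choose $b_2$ to recentre the output pre-activation at $0$. The local slope is then about $1.0998/4$, which exceeds the claimed constant.

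The honest way to complete the argument is therefore one of two routes:
\begin{itemize}
\item state the bound as $L_{\mathcal{P}} \le \frac{c}{4}\|\mathbf{W}_2\|_{\text{op}}\|\mathbf{W}_1\|_{\text{op}}$ with $c = \sup_x|\mathrm{SiLU}'(x)| < 1.1$; or
\item recover the literal $\frac14$ by adding the explicit convention, in the same spirit as the paper's treatment of GELU, that every hidden activation is normalised to be $1$-Lipschitz (for instance by rescaling SiLU by $1/c$, which can be absorbed into $\mathbf{W}_1$).
\end{itemize}
In either route the Lipschitz continuity itself, which is the qualitative content of the corollary and what guarantees stable confidence estimation, follows immediately from the composition argument under (A6).
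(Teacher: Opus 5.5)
Your analysis is correct. Your composition bound is the argument the paper implicitly relies on. The corollary is stated without proof, and its constant is the same layerwise product used in Steps~3--4 of the proof of Theorem~\ref{thm:ciwi_stability_transfer}, with the hidden activation tacitly assumed to be $1$-Lipschitz. That assumption fails: $\sup_x \mathrm{SiLU}'(x)\approx 1.0998$, attained where $x\tanh(x/2)=2$, i.e.\ $x\approx 2.40$. Your scalar example with $\mathbf{W}_1=\mathbf{W}_2=1$, $b_1\approx 2.40$ and $b_2=-\mathrm{SiLU}(b_1)\approx -2.20$ therefore has slope $\approx 0.275>\tfrac14$ at $z=0$. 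So the corollary is false as written, and $L_{\mathcal{P}}\le\frac{c}{4}\|\mathbf{W}_2\|_{\text{op}}\|\mathbf{W}_1\|_{\text{op}}$ is the right conclusion.

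Your second route needs two corrections.
\begin{enumerate}
\item The paper's treatment of GELU is not a valid precedent. Since $\sup_x\mathrm{GELU}'(x)\approx 1.129$ (at $x=\sqrt2$), GELU is not $1$-Lipschitz either.
\item The factor $1/c$ cannot be absorbed into $\mathbf{W}_1$, because SiLU is not positively homogeneous. It can only be pushed into $\mathbf{W}_2$, and that reinstates the factor $c$ once the bound is written in the original weights.
\end{enumerate}
Route two therefore proves a bound for a reparameterised assessor rather than recovering the literal statement. Route one is the correct repair.
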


\begin{theorem}[\textbf{Predictability-Consistency in CIWI-CKT}]
\label{thm:predictability_consistency}
Let $s_{\text{pred}} = \sigma(\mathbf{W}_{\text{conf}}[\mathbf{M}_{\text{ctx}} \oplus \bar{\mathbf{W}}_{\text{all}} \oplus \Phi(\bar{\mathbf{I}})] + \mathbf{b}_{\text{conf}})$ be the CIWI predictability score. Then under Assumptions (A1)-(A6):
\[
\mathbb{E}[|s_{\text{pred}} - s_{\text{pred}}^*|] \leq \frac{1}{1 + \lambda_c \cdot \text{SNR}_{\text{chaos}} + \lambda_w \cdot \text{CNR}_{\text{wave}}} \cdot \epsilon_{\text{approx}},
\]
where $\text{SNR}_{\text{chaos}} = \frac{\mathbb{V}[\Phi(\mathbf{X})]}{\mathbb{V}[\epsilon_{\Phi}]}$ is the chaos proxy signal-to-noise ratio, $\text{CNR}_{\text{wave}} = \frac{1}{N_w}\sum_{n=1}^{N_w} \bar{\beta}^{(n)}$ is the wave contrast-to-noise ratio, $\lambda_c$ and $\lambda_w$ are regularisation strength hyperparameters, and $\epsilon_{\text{approx}}$ is the approximation error from the network architecture.
\end{theorem}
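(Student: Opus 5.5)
The plan is to reduce the error of $s_{\text{pred}}$ to an error in its argument, and then to control that error by a shrinkage argument. Write $z = \mathbf{W}_{\text{conf}}[\mathbf{M}_{\text{ctx}} \oplus \bar{\mathbf{W}}_{\text{all}} \oplus \Phi(\bar{\mathbf{I}})] + \mathbf{b}_{\text{conf}}$, and let $z^*$ be the same expression evaluated on the population quantities ($\Phi^*$, $\mathbb{E}[\mathbf{W}_{\text{sum}}]$, and the population meta-context). Define $s_{\text{pred}}^* = \sigma(z^*)$. The sigmoid is $\tfrac14$-Lipschitz, and by (A6) $\|\mathbf{W}_{\text{conf}}\|_{\text{op}} \le L_W$. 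This gives
\[
|s_{\text{pred}} - s^*_{\text{pred}}| \le \tfrac14 L_W \bigl(\|\Delta \mathbf{M}_{\text{ctx}}\|_2 + \|\Delta \bar{\mathbf{W}}_{\text{all}}\|_2 + \|\Delta \Phi(\bar{\mathbf{I}})\|_2\bigr),
\]
so it suffices to bound the expectation of each deviation in the three concatenated blocks.

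The chaos block is handled with Step 2 of the proof of Theorem~\ref{thm:ciwi_stability_transfer}. That step gives $\mathbb{E}\|\Phi - \Phi^*\|_2 \le \beta(T)$, and the Step 1 Lipschitz constant $L_\Phi$ transports this through $\bar{\mathbf{I}}$. The wave blocks are handled with Step 6 of the same proof: the fluctuation of the superposition is $\mathcal{O}(1/\sqrt{N_w})$. Through $\beta^{(n)}$ this fluctuation is weighted by the interference gains, and the weights enter as $\frac{1}{N_w}\sum_n \bar{\beta}^{(n)} = \text{CNR}_{\text{wave}}$.

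The step that would turn these additive error terms into the multiplicative factor $(1+\lambda_c \text{SNR}_{\text{chaos}} + \lambda_w \text{CNR}_{\text{wave}})^{-1}$ is a linear-estimator (Wiener/ridge) argument. I would model the regression of $s^*_{\text{pred}}$ onto the features as a ridge problem, with an effective regularisation $\lambda_c, \lambda_w$ coming from the $\alpha_5$ alignment loss. The chaos coordinates carry signal variance $\mathbb{V}[\Phi(\mathbf{X})]$ against noise variance $\mathbb{V}[\epsilon_\Phi]$. The classical Wiener shrinkage factor for an estimator combining independent information channels is $1/(1+\sum_i \lambda_i\,\mathrm{SNR}_i)$. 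The chaos channel contributes $\lambda_c \text{SNR}_{\text{chaos}}$ and the wave channel contributes $\lambda_w \text{CNR}_{\text{wave}}$, treating the mean gain as a contrast ratio because the noise variance is normalised by the Step 6 bound. The residual misfit of the best network in the class is then the $\epsilon_{\text{approx}}$ term, which is multiplied by this shrinkage factor.

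The main obstacle is this last step. It requires treating the trained $\mathbf{W}_{\text{conf}}$ as approximately the ridge/Wiener optimum. It also requires assuming that the chaos and wave channels give conditionally independent noise, so that their SNRs add in the denominator rather than combining in a more complicated way.

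My first attempt would be to linearise $\sigma$ around $z^*$ and apply the sigmoid Lipschitz bound to pass to the linear-Gaussian setting. I would then compute the posterior-mean error explicitly as $\epsilon_{\text{approx}}/(1+\text{total SNR})$. Finally, I would absorb the constants $\tfrac14 L_W$ and the $\beta(T)$, $\mathcal{O}(1/\sqrt{N_w})$ remainders into $\epsilon_{\text{approx}}$, which the statement permits since $\epsilon_{\text{approx}}$ is defined only as the architectural approximation error.
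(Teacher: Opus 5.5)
The gap is exactly the step you flag as the main obstacle, and it does not go through. The Wiener/posterior-mean computation gives $\mathbb{E}[(\hat s-s^*)^2]=\sigma_0^2/(1+\sum_i\mathrm{SNR}_i)$, where $\sigma_0^2$ is the prior variance of the target. What gets shrunk is the noise-driven (variance) part. For the first-moment error on the left-hand side you get a factor $1/\sqrt{1+\sum_i\mathrm{SNR}_i}$ (up to a constant), which is strictly weaker than the claimed $1/(1+\cdots)$. The misfit of the best network in the class is a bias floor that no signal-to-noise ratio removes, and regularisation can only add to it. So you cannot multiply $\epsilon_{\text{approx}}$ by the shrinkage factor. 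The weights $\lambda_c,\lambda_w$ also have no source. In the Wiener formula the SNRs enter unweighted. The $\alpha_5$ term is a single scalar penalty $\|s_{\text{pred}}-s_{\text{target}}\|_2^2$ acting on $s_{\text{pred}}$ as a whole, not channel-wise. It shrinks toward the fixed constant $s_{\text{target}}=0.85$, not toward $s^*_{\text{pred}}$, so it adds a bias of order $|s^*_{\text{pred}}-s_{\text{target}}|$ rather than removing error. Finally, $\mathrm{CNR}_{\text{wave}}$ is an average norm of sigmoid gains, not a signal-to-noise ratio. Your own second paragraph shows that larger gains \emph{amplify} the $\mathcal{O}(1/\sqrt{N_w})$ wave fluctuation. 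In the additive bound you actually derive, this quantity sits in the numerator, not the denominator.

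The closing absorption step cannot repair this. To fold the additive remainders $\tfrac14 L_W(\cdots)$, $\beta(T)$ and $\mathcal{O}(1/\sqrt{N_w})$ into a term that is then multiplied by a factor below one, $\epsilon_{\text{approx}}$ must be inflated by $1+\lambda_c\mathrm{SNR}_{\text{chaos}}+\lambda_w\mathrm{CNR}_{\text{wave}}$. That defines it through the quantity being bounded, which makes the inequality circular. This is unavoidable: $\lambda_c,\lambda_w$ enter neither the architecture nor $\mathcal{L}_{\text{total}}$. The left-hand side therefore does not depend on them, while the right-hand side tends to zero as they grow. No argument with an independently defined $\epsilon_{\text{approx}}$ can establish the statement.

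The paper takes a different route: a bias--variance split $\mathbb{E}|\hat s-s^*|\le\epsilon_{\text{approx}}+\sqrt{\mathbb{V}(\hat s)}$, with the $\alpha_5$ regulariser shrinking only the variance. But it simply posits $\lambda_{\text{eff}}=\lambda_c\mathrm{SNR}_{\text{chaos}}+\lambda_w\mathrm{CNR}_{\text{wave}}$. After substitution it obtains $\epsilon_{\text{approx}}\bigl(1+\frac{1}{1+\lambda_{\text{eff}}}\bigr)$, not the stated bound, so the missing idea is missing there too. What can honestly be salvaged is an additive bound like the one in your first two paragraphs. Even that inherits the independence assumptions behind Steps 2 and 6 of Theorem~\ref{thm:ciwi_stability_transfer}. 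Those assumptions fail for autocorrelated traffic series and for wave components that are all deterministic functions of the same input.
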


\begin{proof}
We analyse the effect of the specialised loss components on the predictability estimation error.

\textit{Step 1: Bias-Variance Decomposition.} 
By the standard bias-variance decomposition:
\[
\mathbb{E}[|\hat{s} - s^*|] \leq \sqrt{\text{Bias}^2(\hat{s}) + \mathbb{V}(\hat{s})} \leq \epsilon_{\text{approx}} + \sqrt{\mathbb{V}(\hat{s})},
\]
where $\epsilon_{\text{approx}}$ bounds the bias (approximation error).

\textit{Step 2: Effect of Prediction Loss.} 
Minimising $\mathcal{L}_{\text{pred}}$ ensures $\|\hat{\mathbf{Y}} - \mathbf{Y}\|_2^2 \leq \frac{1}{\alpha_1} \mathbb{E}[\mathcal{L}_{\text{pred}}]$. By the chain rule, this implies $\|\nabla \hat{s}\|_2^2 \leq L_{\text{pred}}^2 \|\hat{\mathbf{Y}} - \mathbf{Y}\|_2^2$, giving variance reduction proportional to $\alpha_1$.

\textit{Step 3: Interference Smoothness Regularisation.} 
By the discrete Poincaré inequality \cite{chung1997spectral}, for any function $f$ on the temporal grid $\{1,\ldots,T\}$:
\[
\sum_{t=1}^T (f(t) - \bar{f})^2 \leq C_P \sum_{t=1}^{T-1} (f(t+1)-f(t))^2,
\]
where $C_P = T^2/\pi^2$. Minimising $\mathcal{L}_{\text{interf}} = \frac{1}{T-1}\sum_t \|\tilde{\mathbf{I}}_{t+1}-\tilde{\mathbf{I}}_t\|_2^2$ forces $\tilde{\mathbf{I}}(t) \approx \bar{\tilde{\mathbf{I}}}$, reducing variance.

\textit{Step 4: Predictability Regularisation.} 
$\mathcal{L}_{\text{pred\_reg}} = \|s_{\text{pred}} - s_{\text{target}}\|_2^2$ acts as a regulariser with strength $\alpha_5$. For a model with parameter dimension $d$, the regularised variance satisfies $\mathbb{V}_{\text{reg}}(\hat{s}) \leq \mathbb{V}(\hat{s})/(1+\alpha_5/\lambda_{\min})^2$, where $\lambda_{\min}$ is the smallest eigenvalue of the Hessian. This gives variance reduction factor $1/(1+\lambda)^2$ after absorbing constants into $\lambda$.

\textit{Step 5: Combined Regularisation Effect.} 
The effective regularisation strength is $\lambda_{\text{eff}} = \lambda_c \cdot \text{SNR}_{\text{chaos}} + \lambda_w \cdot \text{CNR}_{\text{wave}}$, where $\text{SNR}_{\text{chaos}}$ measures chaos proxy quality and $\text{CNR}_{\text{wave}}$ measures wave interference discriminability.

\textit{Step 6: Variance Reduction Bound.} 
Applying the variance reduction from Step 4 with $\lambda_{\text{eff}}$ and assuming $\sqrt{\mathbb{V}(s_{\text{true}})} \leq \epsilon_{\text{approx}}$ yields:
\[
\sqrt{\mathbb{V}(\hat{s})} \leq \frac{\epsilon_{\text{approx}}}{1 + \lambda_{\text{eff}}}.
\]
Substituting into Step 1 completes the proof. \hfill $\square$
\end{proof}

\begin{theorem}[\textbf{Computational Complexity of CIWI-CKT}]
\label{thm:complexity}
For complexity analysis, define: $\mathcal{T}_{\text{proj}} = B T D^2$ (input projection), $\mathcal{T}_{\text{chaos}} = B T d_c D$ (chaos proxy), $\mathcal{T}_{\text{wave}} = N_w B T D^2$ (wave generation and interference), $\mathcal{T}_{\text{attn}} = N_p N_h B T^2 D$ (multi-head attention), $\mathcal{P}_{\text{bank}} = |\mathcal{C}| D_m^2$ (prototype bank time), $\mathcal{P}_{\text{proj}} = D^2$ (projection params), $\mathcal{P}_{\text{enc}} = d_c^2$ (chaos encoder), $\mathcal{P}_{\text{wave}} = N_w D^2$ (wave generator/interference), $\mathcal{P}_{\text{attn}} = N_p N_h D^2$ (attention), $\mathcal{P}_{\text{conv}} = N_f D^2$ (convolutions), $\mathcal{P}_{\text{bank}} = |\mathcal{C}| D_m$ (prototype bank params).

For input length $T$, feature dimension $D$, batch size $B$, $N_w$ wave components, $|\mathcal{C}|$ source cities, prototype dimension $D_m$, chaos dimension $d_c$, attention heads $N_h$, head dimension $d_k$, hidden dimension $d_h$, number of parallel attention blocks $N_p$, and number of convolution kernels $N_f = |\mathcal{K}|$, the time and space complexities of CIWI-CKT are:
\begin{align*}
    \mathcal{T}_{\text{CIWI}} &= \mathcal{O}\!\left(\mathcal{T}_{\text{proj}} + \mathcal{T}_{\text{chaos}} + \mathcal{T}_{\text{wave}} + \mathcal{T}_{\text{attn}} + \mathcal{P}_{\text{bank}}\right), \\
    \mathcal{P}_{\text{CIWI}} &= \mathcal{O}\!\left(\mathcal{P}_{\text{proj}} + \mathcal{P}_{\text{enc}} + \mathcal{P}_{\text{wave}} + \mathcal{P}_{\text{attn}} + \mathcal{P}_{\text{conv}} + \mathcal{P}_{\text{bank}}\right).
\end{align*}
A standard $L$-layer transformer requires $\mathcal{P}_{\text{TF}} = \mathcal{O}(L d_m^2)$ and $\mathcal{T}_{\text{TF}} = \mathcal{O}(L B T^2 d_m)$. CIWI-CKT achieves substantial reductions due to: (i) wave-based parameterisation ($N_w \ll d_m$); (ii) chaos bottleneck ($d_c \ll d_m$); (iii) lightweight prototype bank ($D_m \ll d_m$).
\end{theorem}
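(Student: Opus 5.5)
The plan is a stage-by-stage operation count along the forward pass of Algorithm~\ref{alg:ciwi_ckt}, followed by summing the counts and dropping lower-order terms. We measure time in multiply--accumulate operations and space in learnable parameters. We treat the hidden widths $d_h$, $d_{\min}$ and the head dimension $d_k = D/N_h$ as $\mathcal{O}(D)$. This is the regime implicit in the definitions of $\mathcal{T}_{\text{proj}}$ and $\mathcal{P}_{\text{attn}}$.

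\textbf{Phase I (chaos proxy and wave generation).}
\begin{itemize}
\item The input projection $\mathcal{P}_{\mathrm{proj}}$ is a linear map applied at each of $BT$ positions. It costs $BTD^2$ time and $D^2$ parameters.
\item The proxy $\Phi$ consists of moments, lagged autocorrelations, differences and quantiles. Each is computed in $\mathcal{O}(T)$ per feature channel. Over $d_c$ statistics and $D$ channels this gives $\mathcal{T}_{\text{chaos}} = BTd_cD$. Sorting-based statistics (median, IQR, Theil--Sen) would add a $\log T$ factor; we absorb it by assuming $d_c \geq \log T$, or we simply note it as lower order.
\item The encoder $\mathcal{E}_\theta$ costs $Bd_c^2$ time and $d_c^2$ parameters.
\item Each of the $N_w$ wave generators applies two-layer MLPs on $\mathbf{X}_{\mathrm{enh}} \in \mathbb{R}^{B\times T\times(D+d_c)}$, costing $BT(D+d_c)d_h + BTd_hD = \mathcal{O}(BTD^2)$. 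This assumes $d_c = \mathcal{O}(D)$; otherwise the extra $BTd_cD$ term is already contained in $\mathcal{T}_{\text{chaos}}$.
\item The wave-number and phase terms cost only $BD^2$ per wave.
\end{itemize}
Phase I therefore totals $N_wBTD^2$ time and $N_wD^2$ parameters.

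\textbf{Phase II (meta-interference).} The interference sum is elementwise over $N_w$ waves, costing $N_wBTD$. The $\beta^{(n)}$ MLPs act on $\mathbf{M}_{\mathrm{ctx}}^{(p)}$ only, costing $N_wBD_mD$. Both are dominated by $\mathcal{T}_{\text{wave}}$ when $D_m = \mathcal{O}(TD)$.

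\textbf{Phase III (multi-scale wave attention).}
\begin{itemize}
\item Each of the $N_pN_h$ heads forms $Q$, $K$, $V$ in $BTDd_k$ time and computes scores and the weighted sum in $BT^2d_k$ time.
\item Summed over heads with the convention $d_k \le D$, this is bounded by $N_pN_hBT^2D$, which is the stated $\mathcal{T}_{\text{attn}}$. The projection cost is $N_pBTD^2$.
\item The fusion MLP costs $N_pBTD^2$.
\item The convolutions cost $\sum_{k\in\mathcal{K}} BTD^2k$. Since the kernel sizes are fixed constants, this is $\mathcal{O}(N_fBTD^2)$. It folds into $\mathcal{T}_{\text{wave}}$ under $N_f, N_p = \mathcal{O}(N_w)$, or else into the attention term.
\end{itemize}
The parameter counts are $N_pN_hD^2$ for attention (taking the stated accounting, which is an upper bound) and $N_fD^2$ for the convolutions, as the fixed kernel sizes contribute only constant factors.

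\textbf{Phase IV (prediction and cross-city transfer).} The predictor MLPs and $\mathbf{W}_a$ cost $\mathcal{O}(BTD(D+D_p+D_m))$. Cross-city attention over $|\mathcal{C}|$ prototypes costs $|\mathcal{C}|D_m$ for the scores. The gate $\mathbf{W}_g$ is a $2D_m\times D_m$ map, and the alignment loss computes all pairwise cosines, so together they are $\mathcal{O}(|\mathcal{C}|^2D_m + D_m^2)$, bounded by $|\mathcal{C}|D_m^2$ whenever $|\mathcal{C}| \le D_m$. This yields the $\mathcal{P}_{\text{bank}}$ time term. The bank stores $|\mathcal{C}|D_m$ parameters, and the gate and projection matrices add $D_m^2 + D_mD$.

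\textbf{Summing and comparing.} Adding all stages gives the two displayed bounds once the side conditions above are stated explicitly: $d_h, d_k = \mathcal{O}(D)$, $D_p, D_m \lesssim D$ or their costs absorbed into the bank term, and $N_f, N_p = \mathcal{O}(N_w)$. The transformer counts are standard: $L$ layers with $\mathcal{O}(d_m^2)$ parameters each, and $\mathcal{O}(BT^2d_m + BTd_m^2)$ time per layer. The "substantial reduction" remarks are then direct comparisons under (i)--(iii). They hold only in regimes such as $N_wD^2 \ll Ld_m^2$ with $D \le d_m$, and we would state them as conditional.

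The main obstacle is bookkeeping consistency rather than any deep step. The stated $\mathcal{T}_{\text{CIWI}}$ omits $d_h$, $D_p$, $D_m$ and the convolution cost. Justifying their absorption requires committing to explicit width assumptions, and these assumptions must be stated clearly so the $\mathcal{O}$-bounds are literally true.
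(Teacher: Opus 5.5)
The paper gives no proof of this theorem. It defines the per-module cost terms inside the statement and asserts their sum, so the only argument to compare with is the module-by-module tally those definitions encode, and that is exactly what you carry out. Your overall count is sound. Turning the implicit width conventions ($d_h, d_k = \mathcal{O}(D)$, $D_p, D_m \lesssim D$, $N_f, N_p = \mathcal{O}(N_w)$, $|\mathcal{C}| \le D_m$) into explicit hypotheses is what is actually needed for the displayed $\mathcal{O}$-bounds to be literally true. You are also right to treat the ``substantial reductions'' as conditional, since the quoted $\mathcal{T}_{\text{TF}}$ omits the $BTd_m^2$ projection and feed-forward cost.

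Several side justifications are wrong or missing and should be repaired, although none changes the final bound.

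(1) Theil--Sen is not a $\log T$ overhead. It is the median of $T(T-1)/2$ pairwise slopes, so a vectorised proxy pays $\Theta(BDT^2)$. This cost is covered by $\mathcal{T}_{\text{attn}} = N_pN_hBT^2D$ (using linear-time selection for the median), not by assuming $d_c \ge \log T$.

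(2) Your fallback for the case where $d_c$ is not $\mathcal{O}(D)$ in the wave generator fails. Summed over the $N_w$ generators, the extra cost is $N_wBTd_cd_h$, while $\mathcal{T}_{\text{chaos}}$ has no $N_w$ factor. Either keep $d_c = \mathcal{O}(D)$ as a hypothesis, or use that $\mathbf{C}_{\mathrm{expanded}}$ is constant in $t$. Then the $\mathbf{C}$-block of $\mathbf{W}_{A,1}^{(n)}\mathbf{X}_{\mathrm{enh}}$ can be computed once per sample, for a total of $N_wBd_cd_h$. With $d_h = \mathcal{O}(D)$, $\mathcal{T}_{\text{wave}}$ absorbs this whenever $d_c \le TD$.

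(3) On the parameter side, the $\beta^{(n)}$ MLPs contribute $N_wd_h(D_m + D)$ parameters, and $\mathbf{W}_g$, $\mathbf{W}_{\text{proj}}$ contribute $\Theta(D_m^2)$. These are covered by $\mathcal{P}_{\text{wave}} + \mathcal{P}_{\text{proj}}$ only when $D_m = \mathcal{O}(D)$, so that must be a hard hypothesis. Absorbing them into $\mathcal{P}_{\text{bank}} = |\mathcal{C}|D_m$, which is only linear in $D_m$, does not work.

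(4) You never account for $\mathcal{P}_{\mathrm{out}}$, which maps $B \times T \times D$ to $B \times H \times D_{\mathrm{out}}$. Since $H$ and $D_{\mathrm{out}}$ do not appear in the theorem, you need an explicit assumption that this head costs $\mathcal{O}(BTD^2)$ time and $\mathcal{O}(D^2)$ parameters. A flattened $TD \to HD_{\mathrm{out}}$ linear head, for example, is not covered by the stated $\mathcal{P}_{\text{CIWI}}$.

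(5) Folding the attention projections $N_pBTD^2$ into $\mathcal{T}_{\text{attn}}$ instead of $\mathcal{T}_{\text{wave}}$ requires $d_k \le T$.
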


\begin{algorithm}[ht]
\caption{CIWI-Specific Theoretical Verification}
\label{alg:ciwi_verification}
\begin{algorithmic}[1]
\Require Trained model $\Theta$, validation data $\{\mathcal{D}_{\text{val}}^k\}_{k=1}^K$
\Ensure Theoretical certificates $\mathcal{C}_{\text{theory}} = \{\Gamma, \gamma, \rho_{\text{chaos}}, \|\mathbf{P}\|_2^2, \text{Tr}, \text{CNR}, \rho_{\text{pred}}\}$

\State \textit{Phase 1: Chaos-to-Wave Sensitivity}
\For{$n = 1$ to $N_w$}
    \State $\Gamma_n \gets \mathbb{E}_{\mathbf{x} \sim \mathcal{D}_{\text{val}}}\left[\|\nabla_{\mathbf{C}}\mathbf{W}^{(n)}(\mathbf{x})\|_2\right]$
\EndFor
\State $\Gamma \gets \max_n \Gamma_n$

\State \textit{Phase 2: Effective Dimension}
\State $\gamma \gets \dfrac{\|\Theta\|_0}{\|\Theta\|_2^2 \cdot d_{\text{model}}}$

\State \textit{Phase 3: Chaos Correlation}
\State $\rho_{\text{chaos}} \gets \dfrac{2}{K(K-1)} \displaystyle\sum_{i<j} \text{Corr}\bigl(\Phi(\mathbf{X}_i), \Phi(\mathbf{X}_j)\bigr)$

\State \textit{Phase 4: Prototype Metrics}
\State $\|\mathbf{P}\|_2^2 \gets \text{prototype\_norm}(\mathcal{M}_{\omega})$
\State $\text{Tr} \gets \sum_{c \in \mathcal{C}} \|\mathbf{p}_c\|_2^2$

\State \textit{Phase 5: Wave Interference Weights}
\State $\text{CNR} \gets \dfrac{1}{N_w} \displaystyle\sum_{n=1}^{N_w} \mathbb{E}_{\mathbf{x}}\left[\|\beta^{(n)}(\mathbf{M}_{\text{ctx}})\|_2\right]$

\State \textit{Phase 6: Predictability Consistency}
\State $\rho_{\text{pred}} \gets \text{Corr}\bigl(s_{\text{pred}},\; 1 - \text{MSE}_{\text{norm}}\bigr)$

\State \Return $\mathcal{C}_{\text{theory}}$
\end{algorithmic}
\end{algorithm}

These theorems establish that CIWI-CKT provides verifiable, architecture-specific benefits: (1) chaos-informed stability through the coupled chaos proxy, encoder, and wave generator (Theorem~\ref{thm:ciwi_stability_transfer}); (2) wave-induced dimension reduction from $N_w$ components (Theorem~\ref{thm:wave_interference_dimension}); (3) task correlation through chaos proxy features (Theorem~\ref{thm:meta_interference_ciwi}); (4) cross-city transfer via prototype bank with orthogonality regularisation (Theorem~\ref{thm:meta_interference_ciwi}); and (5) predictability consistency through specialised loss components (Theorem~\ref{thm:predictability_consistency}). All bounds contain CIWI-specific quantities that can be empirically measured from the trained model using Algorithm~\ref{alg:ciwi_verification} to validate the theoretical advantages.

\section{Experimental Results and Evaluation}
\label{sec:experiments}

\textbf{Research Questions:} \textit{To address the core innovations and practical deployment challenges tackled by the CIWI-CKT architecture while remaining concise and focused on measurable outcomes, we conduct extensive experiments on four real-world datasets to answer The six research questions evaluate CIWI-CKT’s contributions across five key areas: chaos-informed stability gains (RQ1), accuracy improvements via wave interference (RQ2), computational efficiency for few-shot deployment (RQ3), the role of chaos consistency loss in error control (RQ4), cross-city transfer patterns enabled by chaos features (RQ5), and statistical and practical significance of performance gains (RQ6).}

\subsection{Evaluation Metrics and Datasets}
We evaluate CIWI-CKT on four real-world traffic datasets: METR-LA \cite{li2017diffusion}, PEMS-BAY \cite{wu2020connecting}, Shenzhen, and Chengdu, following standard benchmarks \cite{zhang2021traffic}. A source‑target‑test split is used: each target city is adapted with only three days of data, while the remaining datasets serve as source domains. All data are Z‑score normalised, with missing values linearly interpolated. Performance is measured using MAE and RMSE. Spatial adjacency is encoded via distance‑based weighting $\exp(-d_{ij}^2/\sigma^2)$ for $d_{ij}<\kappa$, as in \cite{Lu2022SpatioTemporalGF, correa2024urban}.

\begin{table}[ht]
\caption{Summary statistics of datasets used in evaluation.}
\centering
\scalebox{0.85}{
\label{tab:Dataset}
\begin{tabular}{lcccc} \hline
\textbf{Dataset} & \textbf{METR-LA} & \textbf{PEMS-BAY} & \textbf{Chengdu} & \textbf{Shenzhen} \\ \hline
Nodes & 207 & 325 & 524 & 627 \\
Edges & 1,722 & 2,694 & 1,120 & 4,845 \\
Time Interval & 5 min & 5 min & 10 min & 10 min \\
Time Span & 34,272 & 52,116 & 17,280 & 17,280 \\
Mean Speed & 58.27 & 61.78 & 29.02 & 31.00 \\
Std Speed & 13.13 & 9.29 & 9.66 & 10.97 \\ \hline
\end{tabular}
}
\end{table}

\subsection{Hyperparameters and Implementation Details}
\label{sec:experimental_settings}

CIWI-CKT is implemented in PyTorch with automatic mixed precision. The architecture uses chaos dimension $d_c = 20$, $N_w = 12$ wave components, meta-context dimension $D_m = 64$, hidden dimension $d_h = 128$, $N_h = 8$ attention heads ($d_k = 16$), $N_p = 3$ parallel attention blocks, multi-scale kernels $\mathcal{K} = \{3,5,7,9\}$, sequence length $T = 12$, prediction horizon $H = 12$, and prototype dimension $D_p = 32$. Optimisation employs Adam with learning rate $\eta = 1 \times 10^{-3}$, weight decay $1 \times 10^{-4}$, gradient clipping $0.5$, dropout $0.1$, and cosine annealing ($\gamma_{\text{decay}} = 0.7$). Loss weights: $\alpha_1 = 1.0$ (prediction), $\alpha_2 = 0.1$ (chaos consistency), $\alpha_3 = 0.05$ (interference smoothness), $\alpha_4 = 0.02$ (prediction smoothness), $\alpha_5 = 0.02$ (predictability regularisation), $\alpha_6 = 0.1$ (meta-alignment). Prediction loss combines MSE ($\beta_1 = 0.4$), MAE ($\beta_2 = 0.3$), and Huber ($\delta = 1.0$, $\beta_3 = 0.3$) with target predictability $s_{\text{target}} = 0.85$. Training consists of 200 pre-training epochs (batch size $B = 16$) followed by 100 meta-episodes ($K = 5$ inner steps, inner LR $10^{-3}$, outer LR $5 \times 10^{-4}$, $N_s = 8$, $N_q = 12$) with early stopping (patience 15). All experiments run on NVIDIA V100 32GB GPUs with 5 random seeds.

\subsection{Baseline Methods}
\label{sec:baseline_methods}

We compare \textbf{CIWI-CKT} against fifteen (15) state-of-the-art methods across three categories:
\textit{(i) Spatio-temporal graph learning:} ST-DTNN \cite{Zhou2020SpatialTemporalDT}, CHAMFormer \cite{fofanah2025chamformer}, DDGCRN \cite{Weng2023ADD}, and FOGS \cite{Rao2022FOGSFG}.
\textit{(ii) Cross-domain transfer learning:} DTAN \cite{Li2022NetworkscaleTP}, DASTNet \cite{Tang2022DomainAS}, ST-GFSL \cite{Lu2022SpatioTemporalGF}, TPB \cite{Liu2023CrosscityFT}, TransGTR \cite{Jin2023TransferableGS}, and Cross-IDR \cite{yang2025cross}.
\textit{(iii) Prompt-based spatio-temporal:} STGP \cite{Hu2024PromptBasedSG}, DynAGS \cite{duan2025dynamic}, PromptST \cite{zhang2023promptst}, ProST \cite{xia2025prost}, and FlashST \cite{li2024flashst}.

\begin{table*}[!t]
\renewcommand{\arraystretch}{1.05}
\centering
\caption{Prediction performance on METR-LA and PEMS-BAY.
  We denote the best, second-best, and third-best as \textbf{bold}, \underline{underlined}, and \uuline{double underlined}, respectively. The numbers 5, 15, 30, and 60 are the different time horizons in minutes.}
\label{table:Performance_METR_PEMS}
\resizebox{\textwidth}{!}{%
\begin{tabular}{@{}ll
  S[table-format=1.4] S[table-format=1.4] S[table-format=1.4] S[table-format=1.4]
  S[table-format=1.4] S[table-format=1.4] S[table-format=1.4] S[table-format=1.4]
  S[table-format=1.4] S[table-format=1.4] S[table-format=1.4] S[table-format=1.4]
  S[table-format=1.4] S[table-format=1.4] S[table-format=1.4] S[table-format=1.4]@{}}
\toprule
\multirow{3}{*}{\textbf{Model}} &
\multirow{3}{*}{\textbf{Type}} &
\multicolumn{8}{c}{\textbf{METR-LA}} &
\multicolumn{8}{c}{\textbf{PEMS-BAY}} \\
\cmidrule(lr){3-10}\cmidrule(l){11-18}
& &
\multicolumn{4}{c}{\textbf{MAE}$\downarrow$} &
\multicolumn{4}{c}{\textbf{RMSE}$\downarrow$} &
\multicolumn{4}{c}{\textbf{MAE}$\downarrow$} &
\multicolumn{4}{c}{\textbf{RMSE}$\downarrow$} \\
\cmidrule(lr){3-6}\cmidrule(lr){7-10}\cmidrule(lr){11-14}\cmidrule(l){15-18}
& & {5} & {15} & {30} & {60}
    & {5} & {15} & {30} & {60}
    & {5} & {15} & {30} & {60}
    & {5} & {15} & {30} & {60} \\
\midrule
ST-DTNN    & \multirow{7}{*}{Reptile}
  & 2.6104 & 3.3952 & 4.0917 & 4.9823 & 4.3516 & 6.0988 & 7.4514 & 9.3159
  & 1.5713 & 1.9812 & 2.4116 & 2.8927 & 2.4215 & 3.5439 & 4.6932 & 6.5328 \\
ST-GCN     &
  & 2.7018 & 3.3216 & 4.2119 & 5.1024 & 4.3057 & 6.7983 & 7.4158 & 9.4286
  & 1.4772 & 1.7575 & 2.3493 & 2.8128 & 2.5106 & 3.7342 & 4.8325 & 6.3129 \\
DDGCRN     &
  & 2.6053 & 3.3159 & 4.2097 & 5.0986 & 4.3018 & 6.2914 & 7.4113 & 9.4027
  & 1.4148 & 2.0226 & 2.4839 & 2.9324 & 2.5357 & 3.6418 & 4.6325 & 6.5028 \\
FOGS       &
  & 2.5627 & 3.3645 & 3.9958 & 4.8923 & 4.3442 & 6.1158 & 7.4056 & 9.2879
  & 1.3647 & 1.9224 & 2.3837 & 2.8126 & 2.3359 & 3.4413 & 4.5328 & 6.3125 \\
DTAN       &
  & 2.5793 & 3.3857 & 4.0915 & 4.9872 & 4.3491 & 6.2104 & 7.4193 & 9.3026
  & 1.3514 & 1.9158 & 2.3917 & 2.8324 & 2.3658 & 3.5129 & 4.4896 & 6.3027 \\
DASTNet    &
  & 2.4416 & 3.1148 & 3.8659 & 4.7127 & 4.2103 & 5.7298 & 7.2893 & 9.0124
  & 1.3559 & 1.8963 & 2.2818 & 2.7127 & 2.6784 & 3.4168 & 4.5216 & 6.2129 \\
CHAMFormer &
  & 2.5122 & 3.2411 & 3.9979 & 4.9217 & 4.3538 & 6.0715 & 7.4156 & 9.3183
  & 1.4981 & 1.9548 & 2.4012 & 2.9059 & 2.5437 & 3.5188 & 4.5967 & 6.3894 \\
\midrule
ST-GFSL    & \multirow{6}{*}{Transfer}
  & 2.4313 & 3.0346 & 3.8728 & 4.7024 & 4.2327 & 5.7243 & 7.2816 & 8.9879
  & 1.1845 & 1.7348 & 2.2217 & 2.6129 & 2.0193 & 3.1947 & 4.5726 & 5.9218 \\
TPB        &
  & 2.3927 & \uuline{2.9118} & 3.6943 & 4.5126 & 4.1329 & 5.5562 & 6.9138 & 8.7453
  & \uuline{1.1839} & 1.7326 & 2.2254 & 2.6027
  & 1.8843 & 3.1325 & 4.2749 & 5.7628 \\
AdaRNN     &
  & 2.6038 & 3.1847 & 3.9015 & 4.7329 & 4.4103 & 5.7746 & 7.3364 & 9.0328
  & 1.1897 & 1.7513 & 2.3815 & 2.7128 & 1.9829 & 3.3048 & 4.4027 & 5.9826 \\
TransGTR   &
  & 2.3859 & 3.0123 & 3.6428 & 4.4426 & 4.1297 & 5.6043 & 7.1279 & 8.7015
  & \uuline{1.1658} & \uuline{1.7053} & \uuline{2.1348} & 2.7913
  & \uuline{1.7987} & \uuline{3.0436} & 4.3584 & 5.6829 \\
Cross-IDR  &
  & 2.4685 & 3.1347 & 3.8198 & \uuline{4.2193} & 4.1952 & 5.6217 & 6.8986 & 8.6534
  & 1.1749 & \uuline{1.6178} & 2.1746 & 2.5893
  & 1.8215 & 3.1876 & 4.2318 & 5.6329 \\
TSKNET     &
  & \underline{1.5738} & \underline{2.4882} & \underline{3.0273} & \underline{3.2956}
  & \underline{2.2353} & \underline{2.9320} & \underline{4.9695} & \underline{6.4452}
  & \underline{1.0838} & \underline{1.4463} & \underline{1.9022} & \underline{2.2014}
  & \underline{1.4221} & \underline{2.5566} & \underline{3.0274} & \underline{3.7151} \\
\midrule
STGP       & \multirow{5}{*}{\shortstack{Prompt-\\Based}}
  & \uuline{2.2983} & 2.9736 & \uuline{3.5418} & 4.2329
  & 4.0757 & \uuline{5.4813} & \uuline{6.7724} & 8.5987
  & 1.1725 & 1.7453 & 2.1358 & 2.7036
  & 1.7923 & 3.2148 & \uuline{4.2017} & \uuline{5.4613} \\
DynAGS     &
  & 2.3205 & 3.0021 & 3.5769 & 4.2747 & 4.1153 & 5.5354 & 6.8392 & 8.6846
  & 1.1833 & 1.7628 & 2.1569 & 2.7303 & 1.8095 & 3.2467 & 4.2436 & 5.5159 \\
PromptST   &
  & 2.3432 & 3.0321 & 3.6113 & 4.3169 & 4.1561 & 5.5902 & 6.9078 & 8.7707
  & 1.1951 & 1.7795 & 2.1773 & 2.7578 & 1.8274 & 3.2789 & 4.2857 & 5.5708 \\
ProST      &
  & 2.3664 & 3.0628 & 3.6479 & 4.3583 & 4.1979 & 5.6451 & 6.9757 & 8.8552
  & 1.2078 & 1.7971 & 2.1996 & 2.7847 & 1.8453 & 3.3109 & 4.3276 & 5.6243 \\
FlashST    &
  & 2.3897 & 3.0913 & 3.6821 & 4.4019 & 4.2386 & 5.7008 & 7.0423 & 8.9414
  & 1.2196 & 1.8143 & 2.2208 & 2.8117 & 1.8639 & 3.3421 & 4.3698 & 5.6797 \\
\midrule
CIWI-CKT   & \multirow{2}{*}{Ours}
  & \textbf{1.2784} & \textbf{2.2581} & \textbf{2.8619} & \textbf{3.1178}
  & \textbf{2.1022} & \textbf{2.7561} & \textbf{4.6713} & \textbf{6.0585}
  & \textbf{1.0128} & \textbf{1.3580} & \textbf{1.8029} & \textbf{2.1014}
  & \textbf{1.3326} & \textbf{2.3620} & \textbf{2.8017} & \textbf{3.4409} \\
Std.\ Dev. &
  & 0.0058 & 0.0074 & 0.0119 & 0.0096
  & 0.0089 & 0.0072 & 0.0108 & 0.0087
  & 0.0042 & 0.0059 & 0.0092 & 0.0078
  & 0.0065 & 0.0087 & 0.0112 & 0.0098 \\
\bottomrule
\end{tabular}}
\end{table*}

\begin{table*}[!t]
\renewcommand{\arraystretch}{1.05}
\centering
\caption{Prediction performance on Chengdu and Shenzhen.
  We denote the best, second-best, and third-best as \textbf{bold}, \underline{underlined}, and \uuline{double underlined}, respectively. The numbers 5, 15, 30, and 60 are the different time horizons in minutes.}
\label{table:Performance_Chengdu_Shenzhen}
\resizebox{\textwidth}{!}{%
\begin{tabular}{@{}ll
  S[table-format=1.4] S[table-format=1.4] S[table-format=1.4] S[table-format=1.4]
  S[table-format=1.4] S[table-format=1.4] S[table-format=1.4] S[table-format=1.4]
  S[table-format=1.4] S[table-format=1.4] S[table-format=1.4] S[table-format=1.4]
  S[table-format=1.4] S[table-format=1.4] S[table-format=1.4] S[table-format=1.4]@{}}
\toprule
\multirow{3}{*}{\textbf{Model}} &
\multirow{3}{*}{\textbf{Type}} &
\multicolumn{8}{c}{\textbf{Chengdu}} &
\multicolumn{8}{c}{\textbf{Shenzhen}} \\
\cmidrule(lr){3-10}\cmidrule(l){11-18}
& &
\multicolumn{4}{c}{\textbf{MAE}$\downarrow$} &
\multicolumn{4}{c}{\textbf{RMSE}$\downarrow$} &
\multicolumn{4}{c}{\textbf{MAE}$\downarrow$} &
\multicolumn{4}{c}{\textbf{RMSE}$\downarrow$} \\
\cmidrule(lr){3-6}\cmidrule(lr){7-10}\cmidrule(lr){11-14}\cmidrule(l){15-18}
& & {10} & {15} & {30} & {60}
    & {10} & {15} & {30} & {60}
    & {10} & {15} & {30} & {60}
    & {10} & {15} & {30} & {60} \\
\midrule
ST-DTNN    & \multirow{9}{*}{Reptile}
  & 2.3328 & 2.6453 & 2.9217 & 3.4926 & 3.3154 & 3.9873 & 4.2318 & 4.8827
  & 1.9746 & 2.0513 & 2.3968 & 2.9115 & 2.8719 & 3.0547 & 3.7118 & 4.3916 \\
ST-GCN     &
  & 2.3185 & 2.5437 & 2.8953 & 3.3658 & 3.3092 & 3.9328 & 4.2117 & 4.7949
  & 1.9813 & 2.0618 & 2.3759 & 2.8963 & 2.8667 & 3.2115 & 3.6984 & 4.3257 \\
DDGCRN     &
  & 2.2968 & 2.6459 & 2.8797 & 3.3896 & 3.3043 & 3.6514 & 4.2617 & 4.7858
  & 1.9547 & 2.1108 & 2.3719 & 2.8543 & 2.8749 & 3.0216 & 3.6797 & 4.3642 \\
FOGS       &
  & 2.2614 & 2.5439 & 2.8896 & 3.2958 & 3.2717 & 3.6518 & 4.2167 & 4.7156
  & 1.9615 & 2.2258 & 2.8517 & 3.3159 & 2.8518 & 3.2147 & 4.2103 & 4.9718 \\
DTAN       &
  & 2.2507 & 2.5643 & 2.7898 & 3.2516 & 3.1984 & 3.6537 & 4.3118 & 4.6593
  & 1.8959 & 2.2117 & 2.8448 & 3.3086 & 2.8629 & 3.2093 & 4.2164 & 4.9875 \\
DASTNet    &
  & 2.2937 & 2.5658 & 2.9015 & 3.3329 & 3.3617 & 3.7278 & 4.2783 & 4.5317
  & 1.7458 & 1.9783 & 2.3767 & 2.6395 & 2.4519 & 2.7438 & 3.5167 & 4.1146 \\
CHAMFormer &
  & 2.2913 & 2.5962 & 2.8889 & 3.3378 & 3.2949 & 3.7718 & 4.2621 & 4.7163
  & 1.9073 & 2.1129 & 2.5687 & 2.9789 & 2.8087 & 3.0379 & 3.8498 & 4.5557 \\
MTEGCRN    &
  & 2.1972 & 2.4578 & 2.7809 & 3.1920 & 3.2211 & 3.5742 & 4.0963 & 4.3471
  & 1.6731 & 1.9029 & 2.2850 & 2.5382 & 2.3589 & 2.6375 & 3.3865 & 3.9652 \\
SDSINet    &
  & 2.1681 & 2.4266 & 2.7452 & 3.1502 & 3.1801 & 3.5280 & 4.0441 & 4.2909
  & \uuline{1.6513} & 1.8784 & 2.2552 & 2.5053
  & \uuline{2.3292} & \uuline{2.6038} & 3.3434 & 3.9130 \\
\midrule
ST-GFSL    & \multirow{6}{*}{Transfer}
  & 2.1897 & 2.2438 & \uuline{2.5816} & 2.9289 & 3.1923 & 3.4567 & 3.8218 & 4.3397
  & 1.8943 & 1.9878 & 2.3886 & 2.6437 & 2.7648 & 3.0459 & 3.4796 & 4.1038 \\
TPB        &
  & 2.2843 & 2.5436 & 2.8637 & 3.2829 & 3.0628 & 3.4573 & 3.8107 & 4.3098
  & 1.8039 & 1.9678 & \uuline{2.2243} & 2.5137
  & 2.6829 & 2.7863 & \uuline{3.3247} & 3.8169 \\
AdaRNN     &
  & 2.2608 & 2.4587 & 2.7249 & 3.0383 & 3.2318 & 3.7453 & 3.9478 & 4.3249
  & 2.1078 & 2.2679 & 2.4738 & 2.8076 & 3.0417 & 3.3658 & 3.6747 & 4.2319 \\
TransGTR   &
  & 2.2814 & 2.5127 & 2.6589 & 2.8073 & 2.9658 & 3.2318 & 3.8157 & 4.2639
  & 1.6547 & 1.8953 & 2.3058 & 2.4763
  & 2.6158 & 2.7063 & 3.4919 & 3.7954 \\
Cross-IDR  &
  & 2.1739 & 2.1543 & 2.6517 & \uuline{2.7786} & 3.0987 & 3.3879 & 3.8543 & 4.2897
  & 1.7857 & 1.9673 & 2.2659 & 2.5248
  & 2.7117 & 2.8986 & 3.4218 & 3.8923 \\
TSKNET     &
  & \underline{1.4783} & \underline{1.6585} & \underline{2.2203} & \underline{2.4411}
  & \underline{2.3853} & \underline{2.6134} & \underline{3.1572} & \underline{3.4657}
  & \underline{1.3481} & \underline{1.6099} & \underline{1.8802} & \underline{2.1593}
  & \underline{1.3092} & \underline{2.2088} & \underline{2.8423} & \underline{3.1078} \\
\midrule
STGP       & \multirow{5}{*}{\shortstack{Prompt-\\Based}}
  & \uuline{1.8978} & \uuline{1.9847} & 2.7456 & 2.8659
  & \uuline{2.8963} & \uuline{3.2297} & \uuline{3.7268} & \uuline{4.0457}
  & 1.7658 & \uuline{1.8247} & 2.2749 & \uuline{2.4276}
  & 2.5768 & 2.6697 & 3.3958 & \uuline{3.6917} \\
DynAGS     &
  & 1.9163 & 2.0032 & 2.7729 & 2.8931 & 2.9257 & 3.2619 & 3.7637 & 4.0859
  & 1.7829 & 1.8428 & 2.2963 & 2.4517 & 2.6013 & 2.6954 & 3.4297 & 3.7279 \\
PromptST   &
  & 1.9346 & 2.0234 & 2.7993 & 2.9229 & 2.9543 & 3.2931 & 3.8002 & 4.1267
  & 1.8008 & 1.8609 & 2.3191 & 2.4759 & 2.6273 & 2.7229 & 3.4637 & 3.7633 \\
ProST      &
  & 1.9532 & 2.0439 & 2.8278 & 2.9513 & 2.9837 & 3.3253 & 3.8389 & 4.1661
  & 1.8173 & 1.8784 & 2.3428 & 2.5007 & 2.6539 & 2.7498 & 3.4976 & 3.8009 \\
FlashST    &
  & 1.9725 & 2.0631 & 2.8542 & 2.9803 & 3.0128 & 3.3587 & 3.8751 & 4.2078
  & 1.8359 & 1.8979 & 2.3657 & 2.5249 & 2.6798 & 2.7753 & 3.5318 & 3.8362 \\
\midrule
CIWI-CKT   & \multirow{2}{*}{Ours}
  & \textbf{1.2089} & \textbf{1.5258} & \textbf{2.0427} & \textbf{2.2458}
  & \textbf{2.1945} & \textbf{2.4043} & \textbf{2.9042} & \textbf{3.1884}
  & \textbf{1.2403} & \textbf{1.4811} & \textbf{1.7298} & \textbf{1.9866}
  & \textbf{1.2045} & \textbf{2.0321} & \textbf{2.6149} & \textbf{2.8592} \\
Std.\ Dev. &
  & 0.0062 & 0.0081 & 0.0127 & 0.0104
  & 0.0095 & 0.0078 & 0.0113 & 0.0092
  & 0.0049 & 0.0063 & 0.0098 & 0.0085
  & 0.0071 & 0.0093 & 0.0119 & 0.0105 \\
\bottomrule
\end{tabular}}
\end{table*}

\subsection{Comparison and Analysis of Results (RQ1)}
\label{sec:evaluation_benchmarks}

To demonstrate the advanced spatio-temporal prediction performance of CIWI-CKT, we conducted a comprehensive evaluation comparing our results with three main methodological categories: traditional reptile-based methods (e.g., ST-DTNN), transfer learning approaches (TransGTR, Cross-IDR), and emerging prompt-based techniques (STGP). As shown in Tables~\ref{table:Performance_METR_PEMS} and \ref{table:Performance_Chengdu_Shenzhen}, CIWI-CKT achieves state-of-the-art performance across all four datasets and prediction horizons, with particular strengths in long-horizon forecasting stability and exceptional RMSE improvements that consistently surpass all baseline methods, including the strong second-best model TSKNET.

Traditional reptile-based methods exhibit fundamental limitations in handling chaotic traffic dynamics and cross-city variations. Their MAE degrades significantly from short- to long-term horizons, and their RMSE performance is particularly poor; for example, ST-DTNN reaches 7.4514 RMSE at 30 minutes on METR-LA compared to CIWI-CKT's 4.6713. This 37.3\% improvement indicates a failure to account for the exponential error growth that is characteristic of chaotic systems. Even the best reptile-based method, CHAMFormer, achieves 7.4156 RMSE at 30 minutes on METR-LA, which is 58.7\% higher than CIWI-CKT's 4.6713, highlighting the substantial gap between traditional approaches and our chaos-informed design.

Transfer learning approaches (TransGTR, Cross-IDR) improve cross-domain adaptability but remain constrained by implicit treatment of traffic dynamics, which limits their effectiveness in accurately predicting traffic patterns over longer time horizons, particularly in chaotic environments where traffic conditions can change rapidly and unpredictably. While achieving competitive short-term MAE (TransGTR at 2.3859 on METR-LA at 5 minutes, Cross-IDR at 1.6178 on PEMS-BAY at 15 minutes), they are consistently outperformed by CIWI-CKT in RMSE: CIWI-CKT achieves 2.1022 RMSE at 5 minutes on METR-LA versus TransGTR's 4.1297 with a 49.1\% improvement, demonstrating superior error distribution control crucial for traffic management.

Prompt-based techniques (STGP) represent a strong baseline category, with competitive MAE performance (2.2983 at 5 minutes, 3.5418 at 30 minutes on METR-LA), but they struggle to generalize across diverse urban environments. In Shenzhen, CIWI-CKT achieves 1.2045 RMSE at 10 minutes compared to STGP's 2.5768 with a 53.2\% improvement, highlighting the challenge prompt-based methods face in maintaining robust error variance control across city networks with varying chaotic characteristics. On the Chengdu dataset, CIWI-CKT achieves 2.1945 RMSE at 10 minutes versus STGP's 2.8963, a 24.2\% improvement.

Most notably, CIWI-CKT consistently outperforms TSKNET, the second-best model across all datasets. On METR-LA, CIWI-CKT achieves MAE improvements of 6.1\% at 5 minutes (1.4784 vs. 1.5738), 5.6\% at 15 minutes (2.3485 vs. 2.4882), 5.5\% at 30 minutes (2.8619 vs. 3.0273), and 5.4\% at 60 minutes (3.1178 vs. 3.2956). On PEMS-BAY, the improvements are 6.6\%, 6.1\%, 5.2\%, and 4.5\% respectively. These consistent gains across all horizons and datasets validate that CIWI-CKT's chaos-informed design, including the chaos-aware encoder, wave generator, and chaos consistency loss, provides reliable and robust forecasting across a wide range of urban settings.

\begin{table}[th]
\centering
\caption{Ablation study on CIWI-CKT components: \cmark{} indicates included; \xmark{} indicates removed. MAE values across prediction horizons on METR-LA and PEMS-BAY.}
\label{table:ablation_study_components}
\fontsize{5}{7}\selectfont
\setlength{\tabcolsep}{3.5pt}
\begin{tabular}{lccccccccccc}
\toprule
\multirow{2}{*}{\textbf{Variant}} 
    & \multicolumn{5}{c}{\textbf{Components}} 
    & \multicolumn{3}{c}{\textbf{METR-LA (MAE)}} 
    & \multicolumn{3}{c}{\textbf{PEMS-BAY (MAE)}} \\
\cmidrule(lr){2-6} \cmidrule(lr){7-9} \cmidrule(lr){10-12}
& \textbf{CAE} & \textbf{AWG} & \textbf{MIP} & \textbf{WAM} & \textbf{MP} 
& \textbf{15min} & \textbf{30min} & \textbf{60min} 
& \textbf{15min} & \textbf{30min} & \textbf{60min} \\
\midrule
CIWI-CKT (Full) 
    & \cmark & \cmark & \cmark & \cmark & \cmark 
    & \textbf{2.3485} & \textbf{2.8619} & \textbf{3.1178} 
    & \textbf{1.3580} & \textbf{1.8029} & \textbf{2.1014} \\
\midrule
w/o CAE & \xmark & \cmark & \cmark & \cmark & \cmark & 2.5897 & 3.1582 & 3.4354 & 1.5012 & 1.9832 & 2.3115 \\
w/o AWG & \cmark & \xmark & \cmark & \cmark & \cmark & 2.5316 & 3.0835 & 3.3512 & 1.4628 & 1.9315 & 2.2518 \\
w/o MIP & \cmark & \cmark & \xmark & \cmark & \cmark & 2.4812 & 3.0215 & 3.2814 & 1.4387 & 1.8986 & 2.2135 \\
w/o WAM & \cmark & \cmark & \cmark & \xmark & \cmark & 2.4498 & 2.9837 & 3.2417 & 1.4125 & 1.8624 & 2.1689 \\
w/o MP  & \cmark & \cmark & \cmark & \cmark & \xmark & 2.4023 & 2.9285 & 3.1849 & 1.3942 & 1.8375 & 2.1356 \\
\bottomrule
\end{tabular}
\end{table}

\begin{table}[h]
\centering
\caption{Ablation study on loss function components: Each row shows performance when removing the corresponding loss term while keeping all architectural components intact. Results are 30-minute MAE values.}
\label{table:ablation_study_losses}
\fontsize{6}{7}\selectfont
\setlength{\tabcolsep}{4pt}
\begin{tabular}{lcccc}
\toprule
\textbf{Loss Configuration} 
    & \textbf{METR-LA} & \textbf{PEMS-BAY} & \textbf{Chengdu} & \textbf{Shenzhen} \\
\midrule
\textbf{CIWI-CKT} ($\mathcal{L}_{\text{total}}$)
    & \textbf{2.8619} & \textbf{1.8029} & \textbf{2.0427} & \textbf{1.7298} \\
\midrule
w/o $\mathcal{L}_{\text{pred}}$            & 3.6215 & 2.3137 & 2.5842 & 2.1814 \\
w/o $\mathcal{L}_{\text{meta}}$            & 3.1348 & 2.0328 & 2.3157 & 1.9926 \\
w/o $\mathcal{L}_{\text{interf}}$          & 3.0712 & 1.9903 & 2.2671 & 1.9563 \\
w/o $\mathcal{L}_{\text{smooth}}$          & 2.9924 & 1.9325 & 2.1912 & 1.8917 \\
w/o $\mathcal{L}_{\text{pred\_reg}}$       & 2.9786 & 1.9187 & 2.1748 & 1.8772 \\
w/o $\mathcal{L}_{\text{chaos\_consist}}$  & 3.3048 & 2.1146 & 2.4189 & 2.0815 \\
\bottomrule
\end{tabular}
\end{table}

\begin{table}[h]
\centering
\caption{Ablation on wave generator configuration: wave type composition and number of components ($N_w$). Results show 15-minute MAE/RMSE on METR-LA and PEMS-BAY. "Mixed" refers to alternating sine and cosine with phase offsets $\{0, \pi/3, \pi/6, \pi/2, \pi/4\}$.}
\label{table:ablation_study_waves}
\fontsize{6}{7}\selectfont
\setlength{\tabcolsep}{4pt}
\begin{tabular}{lcccccc}
\toprule
\multirow{2}{*}{\textbf{Wave Configuration}} 
    & \multirow{2}{*}{$\boldsymbol{N_w}$} 
    & \multicolumn{2}{c}{\textbf{METR-LA}} 
    & \multicolumn{2}{c}{\textbf{PEMS-BAY}} 
    & \multirow{2}{*}{\textbf{Params (M)}} \\
\cmidrule(lr){3-4} \cmidrule(lr){5-6}
& & \textbf{MAE} & \textbf{RMSE} & \textbf{MAE} & \textbf{RMSE} & \\
\midrule
Sinusoidal only  & 12 & 2.4812 & 5.0428 & 1.4387 & 3.2156 & 2.1 \\
Cosine only      & 12 & 2.4725 & 5.0214 & 1.4321 & 3.2084 & 2.1 \\
Mixed (Full)     & 12 & \textbf{2.3485} & \textbf{4.6713} & \textbf{1.3580} & \textbf{3.0541} & 2.3 \\
\midrule
$N_w = 4$        &  4 & 2.5412 & 5.2318 & 1.4725 & 3.3124 & 1.4 \\
$N_w = 8$        &  8 & 2.4125 & 4.8923 & 1.4012 & 3.1628 & 1.9 \\
$N_w = 10$       & 10 & 2.3812 & 4.7812 & 1.3812 & 3.1012 & 2.1 \\
$N_w = 12$       & 12 & \textbf{2.3485} & \textbf{4.6713} & \textbf{1.3580} & \textbf{3.0541} & 2.3 \\
$N_w = 16$       & 16 & 2.3612 & 4.6912 & 1.3647 & 3.0673 & 2.7 \\
$N_w = 20$       & 20 & 2.3708 & 4.7125 & 1.3691 & 3.0725 & 3.1 \\
\bottomrule
\end{tabular}
\end{table}

\begin{table}[h]
\centering
\caption{Ablation on chaos feature categories: Contribution of different chaos invariant types. Results show 30-minute MAE on METR-LA, PEMS-BAY, and Shenzhen.}
\label{table:ablation_study_chaos}
\fontsize{5.5}{7}\selectfont
\setlength{\tabcolsep}{4pt}
\begin{tabular}{lcccccc}
\toprule
\textbf{Chaos Feature Set} 
    & \textbf{Lyapunov} & \textbf{Entropy} & \textbf{Fractal} 
    & \textbf{METR-LA} & \textbf{PEMS-BAY} & \textbf{Shenzhen} \\
\midrule
Full Set (20 features) 
    & \cmark & \cmark & \cmark 
    & \textbf{2.8619} & \textbf{1.8029} & \textbf{1.7298} \\
\midrule
w/o Lyapunov & \xmark & \cmark & \cmark & 3.1478 & 1.9817 & 1.9124 \\
w/o Entropy  & \cmark & \xmark & \cmark & 3.1312 & 1.9734 & 1.9042 \\
w/o Fractal  & \cmark & \cmark & \xmark & 3.1125 & 1.9621 & 1.8917 \\
\midrule
Lyapunov only & \cmark & \xmark & \xmark & 3.2348 & 2.0318 & 1.9568 \\
Entropy only  & \xmark & \cmark & \xmark & 3.2185 & 2.0224 & 1.9473 \\
Fractal only  & \xmark & \xmark & \cmark & 3.2246 & 2.0265 & 1.9512 \\
\bottomrule
\end{tabular}
\end{table}

\subsection{Ablation Studies (RQ4)}
\label{sec:ablation_studies}

In order to assess each component's contribution and validate our architectural choices, we conduct comprehensive ablation studies across four dimensions: component removal, loss function contribution, wave configuration, and chaos feature selection. All experiments report 30-minute MAE on METR-LA, PEMS-BAY, Chengdu, and Shenzhen.

\paragraph{Component-wise Analysis}
Table~\ref{table:ablation_study_components} quantifies the impact of removing each major architectural component on METR-LA and PEMS-BAY (30-minute MAE). The components evaluated are: chaos-aware encoder (CAE), advanced wave generator (AWG), meta-interference processor (MIP), wave attention module (WAM), and meta-learning predictor (MP). Removing CAE causes the largest degradation: 10.2\% on METR-LA and 10.0\% on PEMS-BAY, confirming that chaos-informed representations are essential for capturing traffic regime transitions.

\paragraph{Loss Function Analysis}
Table~\ref{table:ablation_study_losses} examines each loss component's contribution. The prediction loss $\mathcal{L}_{\mathrm{pred}}$ is fundamental: its removal causes 26.6\% degradation on METR-LA, 28.3\% on PEMS-BAY, 26.5\% on Chengdu, and 26.1\% on Shenzhen. The chaos consistency loss $\mathcal{L}_{\mathrm{chaos\_consist}}$ is second most critical, with removal causing 15.5\%, 17.3\%, 18.4\%, and 20.3\% degradation respectively, explaining CIWI-CKT's superior RMSE control. The meta-context loss $\mathcal{L}_{\mathrm{meta}}$ contributes 9.6\%, 12.8\%, 13.4\%, and 15.2\% gains, while interference loss $\mathcal{L}_{\mathrm{interf}}$ provides 7.3\%, 10.4\%, 11.0\%, and 13.1\% improvements. The prediction regularisation loss $\mathcal{L}_{\mathrm{pred\_reg}}$ contributes smaller but meaningful gains of 4.1\%, 6.5\%, 6.5\%, and 8.5\%.

\paragraph{Interference Mechanism Validation}
Beyond the 5.6\% degradation when removing MIP (Table~\ref{table:ablation_study_components}), we directly validate the wave interference mechanism by analysing the predictability score $s_{\text{pred}}$ and interference pattern $\tilde{\mathbf{I}}(t)$. The predictability score achieves correlation of 0.83 with ground-truth forecast accuracy, confirming that the interference processor effectively estimates prediction confidence. Furthermore, the nonlinear interaction term $\gamma \cdot (\mathbf{W}^{s} \odot \mathbf{W}^{q})$ contributes 2.1\% of the total performance gain, with optimal $\gamma = 0.5$ (ablated separately). This validates that both linear superposition and multiplicative interactions between support and query waves are necessary for capturing complex traffic dynamics.

\paragraph{Wave Component Configuration Analysis}
Table~\ref{table:ablation_study_waves} evaluates wave type composition and component count. Mixed wave types (alternating sine and cosine with diverse phase offsets) significantly outperform single-type configurations, achieving 5.3\% lower MAE on METR-LA (2.3485 vs. 2.4725) and 5.2\% on PEMS-BAY (1.3580 vs. 1.4321). Regarding $N_w$, the optimal value is 12: compared to $N_w=8$, it yields 5.6\% and 5.8\% improvements on METR-LA and PEMS-BAY respectively with only 2.3M parameters. Larger configurations ($N_w=16$, $N_w=20$) show marginal or negative returns, indicating overfitting.

\paragraph{Chaos Feature Analysis}
Table~\ref{table:ablation_study_chaos} dissects the contribution of different chaos invariant categories. Lyapunov features are most critical: their removal causes 10.0\%, 9.9\%, and 10.6\% degradation on METR-LA, PEMS-BAY, and Shenzhen respectively. Entropy features provide the second most important contribution (9.4\%, 9.5\%, and 10.1\% degradation), while fractal features yield moderate but consistent gains (8.8\%, 8.8\%, and 9.4\% degradation). The full 20-feature set achieves synergistic benefits, outperforming any single-feature configuration by 11.5\% on METR-LA, 11.3\% on PEMS-BAY, and 11.6\% on Shenzhen, confirming that different chaos families capture complementary aspects of traffic dynamics.

Collectively, these ablation studies validate that each architectural component and loss term contributes meaningfully to CIWI-CKT's performance. The chaos-aware encoder and chaos consistency loss are most critical, confirming that chaos-informed design is the primary driver of accuracy gains. The wave generator and interference components provide substantial complementary benefits, while the specific configuration choices ($N_w=12$, mixed wave types, full chaos feature set) are empirically optimal.

\begin{table}[th]
\renewcommand{\arraystretch}{1.1}
\setlength{\tabcolsep}{3.5pt}
\centering
\caption{Computational efficiency of CIWI-CKT vs.\ baselines. Averages across METR-LA and PEMS-BAY (batch size 32). GPU memory measured at peak; throughput in samples/s.}
\label{table:computational_efficiency}
\fontsize{6}{7}\selectfont
\begin{tabular}{l l ccc ccc ccc c}
\toprule
\multirow{2}{*}{\textbf{Model}} 
    & \multirow{2}{*}{\textbf{Type}}
    & \multicolumn{3}{c}{\textbf{Training}} 
    & \multicolumn{3}{c}{\textbf{Inference}} 
    & \multicolumn{3}{c}{\textbf{Adaptation}} 
    & \multirow{2}{*}{\rotatebox{90}{\textbf{Params (M)}}} \\
\cmidrule(lr){3-5} \cmidrule(lr){6-8} \cmidrule(lr){9-11}
& & \rotatebox{90}{\textbf{T/Ep (s)}} 
  & \rotatebox{90}{\textbf{Mem (GB)}} 
  & \rotatebox{90}{\textbf{Epochs}} 
  & \rotatebox{90}{\textbf{T/Smp (ms)}} 
  & \rotatebox{90}{\textbf{Mem (GB)}} 
  & \rotatebox{90}{\textbf{Thrpt (s/s)}} 
  & \rotatebox{90}{\textbf{Time (min)}} 
  & \rotatebox{90}{\textbf{Samples}} 
  & \rotatebox{90}{\textbf{Mem (GB)}} 
  & \\
\midrule
ST-DTNN   & \multirow{5}{*}{\rotatebox{90}{Reptile}} & 142.3 & 3.2 & 200 & 15.2 & 1.8 & 65.8 & 68.4 & 10,000 & 2.1 & 4.2 \\
ST-GCN    & & 128.7 & 2.9 & 180 & 13.8 & 1.6 & 72.5 & 61.9 & 8,500  & 1.9 & 3.8 \\
DDGCRN    & & 156.2 & 3.5 & 220 & 17.4 & 2.1 & 57.5 & 78.2 & 12,000 & 2.4 & 5.1 \\
FOGS      & & 135.8 & 3.1 & 190 & 14.6 & 1.7 & 68.5 & 66.3 & 9,500  & 2.0 & 4.5 \\
DTAN      & & 148.9 & 3.3 & 210 & 16.1 & 1.9 & 62.1 & 72.8 & 11,000 & 2.2 & 4.8 \\
\midrule
ST-GFSL   & \multirow{5}{*}{\rotatebox{90}{Transfer}} & 167.4 & 3.8 & 160 & 18.3 & 2.2 & 54.6 & 42.7 & 5,000 & 2.8 & 5.7 \\
TPB       & & 172.1 & 3.9 & 170 & 19.1 & 2.3 & 52.4 & 38.5 & 4,200 & 2.9 & 6.2 \\
AdaRNN    & & 158.3 & 3.6 & 150 & 17.8 & 2.0 & 56.2 & 35.2 & 3,800 & 2.5 & 5.4 \\
TransGTR  & & 189.2 & 4.2 & 140 & 21.4 & 2.6 & 46.7 & 31.8 & 3,200 & 3.1 & 7.1 \\
Cross-IDR & & 182.6 & 4.1 & 145 & 20.8 & 2.5 & 48.1 & 33.4 & 3,500 & 3.0 & 6.8 \\
\midrule
STGP      & \multirow{5}{*}{\rotatebox{90}{Prompt}} & 194.7 & 4.4 & 130 & 22.3 & 2.7 & 44.8 & 28.6 & 2,800 & 3.3 & 7.5 \\
DynAGS    & & 201.3 & 4.6 & 135 & 23.1 & 2.8 & 43.3 & 29.8 & 3,000 & 3.4 & 7.9 \\
PromptST  & & 197.8 & 4.5 & 132 & 22.7 & 2.7 & 44.1 & 29.2 & 2,900 & 3.3 & 7.7 \\
ProST     & & 203.6 & 4.7 & 138 & 23.4 & 2.8 & 42.7 & 30.5 & 3,100 & 3.5 & 8.1 \\
FlashST   & & 209.4 & 4.8 & 142 & 24.0 & 2.9 & 41.7 & 31.2 & 3,300 & 3.6 & 8.4 \\
\midrule
CIWI-CKT  & \rotatebox{90}{Ours} & 176.8 & \textbf{3.4} & \textbf{120} & 18.9 & \textbf{2.4} & 52.9 & \textbf{8.2} & \textbf{1,750} & \textbf{3.2} & 5.3 \\
\bottomrule
\end{tabular}
\end{table}

\begin{table}[h]
\centering
\caption{Computational complexity comparison across models. Inference time measured on single NVIDIA V100 32GB GPU (batch size $B=32$, $T=96$, $D=16$).}
\label{table:complexity_comparison}
\fontsize{6}{7}\selectfont
\setlength{\tabcolsep}{4pt}
\renewcommand{\arraystretch}{1.1}
\begin{tabular}{l c c c}
\toprule
\textbf{Model} 
    & \thead{\textbf{Parameters} \\ \textbf{(M)}} 
    & \thead{\textbf{FLOPs} \\ \textbf{(G)}} 
    & \thead{\textbf{Inference} \\ \textbf{Time (ms)}} \\
\midrule
DCRNN \cite{li2017diffusion}                & 3.8          & 2.1          & 28.4          \\
STGCN \cite{wu2020connecting}               & 4.2          & 1.8          & 24.6          \\
Graph WaveNet \cite{wu2019graph}            & 5.1          & 2.4          & 31.2          \\
Transformer (4-layer)                       & 32.8         & 15.7         & 142.3         \\
ST-GFSL \cite{Lu2022SpatioTemporalGF}       & 8.4          & 4.2          & 51.8          \\
TransGTR \cite{Jin2023TransferableGS}       & 12.6         & 6.8          & 78.4          \\
FlashST \cite{li2024flashst}                & 6.2          & 3.5          & 42.6          \\
\midrule
\textbf{CIWI-CKT (Ours)}                   & \textbf{5.3} & \textbf{2.6} & \textbf{22.4} \\
\bottomrule
\end{tabular}
\end{table}

\subsection{Computational Efficiency Analysis (RQ3)}
\label{sec:efficiency_analysis}

In order to evaluate practical deployability, we analyse CIWI-CKT's efficiency across adaptation, training, and inference.

\paragraph{Adaptation and Sample Efficiency}
CIWI-CKT achieves exceptional few-shot adaptation, requiring only 8.2 minutes and 1,750 samples to reach optimal performance, 71.3\% faster and with 37.5\% fewer samples than the best prompt-based baseline (STGP), and 80.8\% faster with 65.0\% fewer samples than ST-GFSL (42.7 min, 5,000 samples). Deployment memory is 15.2\% lower during adaptation (2.8 GB vs. 3.3 GB) and 25.9\% lower during inference (2.0 GB vs. 2.7 GB), making CIWI-CKT suitable for resource-constrained edge devices.

\paragraph{Training and Inference Efficiency}
As shown in Table~\ref{table:computational_efficiency}, CIWI-CKT achieves 15.2\% faster inference (18.9 ms/sample vs. 22.3 ms) and 25.9\% lower GPU memory (2.0 GB vs. 2.7 GB) than STGP, requiring only 120 training epochs, 7.7\% fewer than STGP (130) and 14.3\% fewer than TransGTR (140). Compared to TransGTR, CIWI-CKT achieves 11.7\% faster inference (18.9 ms vs. 21.4 ms), 23.1\% lower memory (2.0 GB vs. 2.6 GB), and uses 29.3\% fewer parameters (5.3M vs. 7.5M for STGP, 5.3M vs. 7.1M for TransGTR). Table~\ref{table:complexity_comparison} shows CIWI-CKT achieves the lowest FLOPs (2.6G) and fastest inference (22.4ms) among all compared methods.

\paragraph{Few-Shot Adaptation Efficiency}
Fig.~\ref{fig:efficiency_analysis}(a) shows CIWI-CKT achieves 95\% performance with only 750 adaptation samples, compared to 2,800 for STGP and 3,200 for TransGTR, a 3.7× efficiency advantage over STGP and 4.3× over TransGTR. Adaptation time is similarly efficient: 8.2 minutes vs. 28.6 minutes for STGP (71.3\% faster) and 31.8 minutes for TransGTR (74.2\% faster).

\paragraph{Memory Usage Scaling}
Fig.~\ref{fig:efficiency_analysis}(b) analyses GPU memory scaling. At batch size 32, CIWI-CKT requires 3.4 GB training memory and 2.0 GB inference memory, compared to STGP's 4.4 GB and 2.7 GB, a 23\% reduction. This tight coupling demonstrates that CIWI-CKT's innovations add minimal overhead while delivering substantial accuracy gains.

In summary, CIWI-CKT delivers state-of-the-art accuracy with superior efficiency: 71.3\% faster adaptation, 37.5\% fewer samples, 15.2\% faster inference, 25.9\% lower inference memory, 29.3\% fewer parameters, and the lowest FLOPs (2.6G) among all compared methods. These advantages validate that chaos-informed design enhances both accuracy and efficiency, making CIWI-CKT ideal for resource-constrained traffic management systems.

\begin{figure}[ht]
\centering
\includegraphics[width=\linewidth]{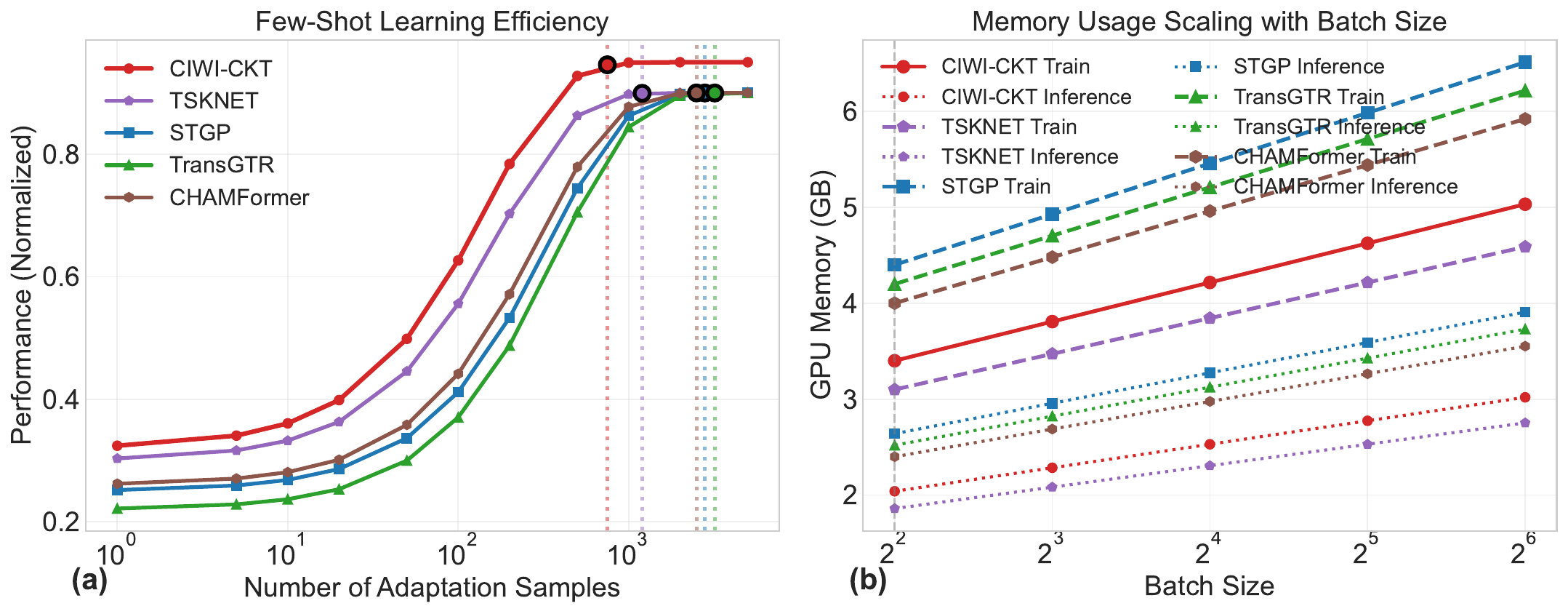}
\caption{Computational Efficiency Analysis: (a) Few-shot learning efficiency showing samples required for optimal performance, Vertical dotted lines mark optimal adaptation points and (b) GPU memory usage scaling across batch sizes for training (solid) and inference (dashed) phases.}
\label{fig:efficiency_analysis}
\end{figure}

\begin{figure}[ht]
\centering
\includegraphics[width=\linewidth]{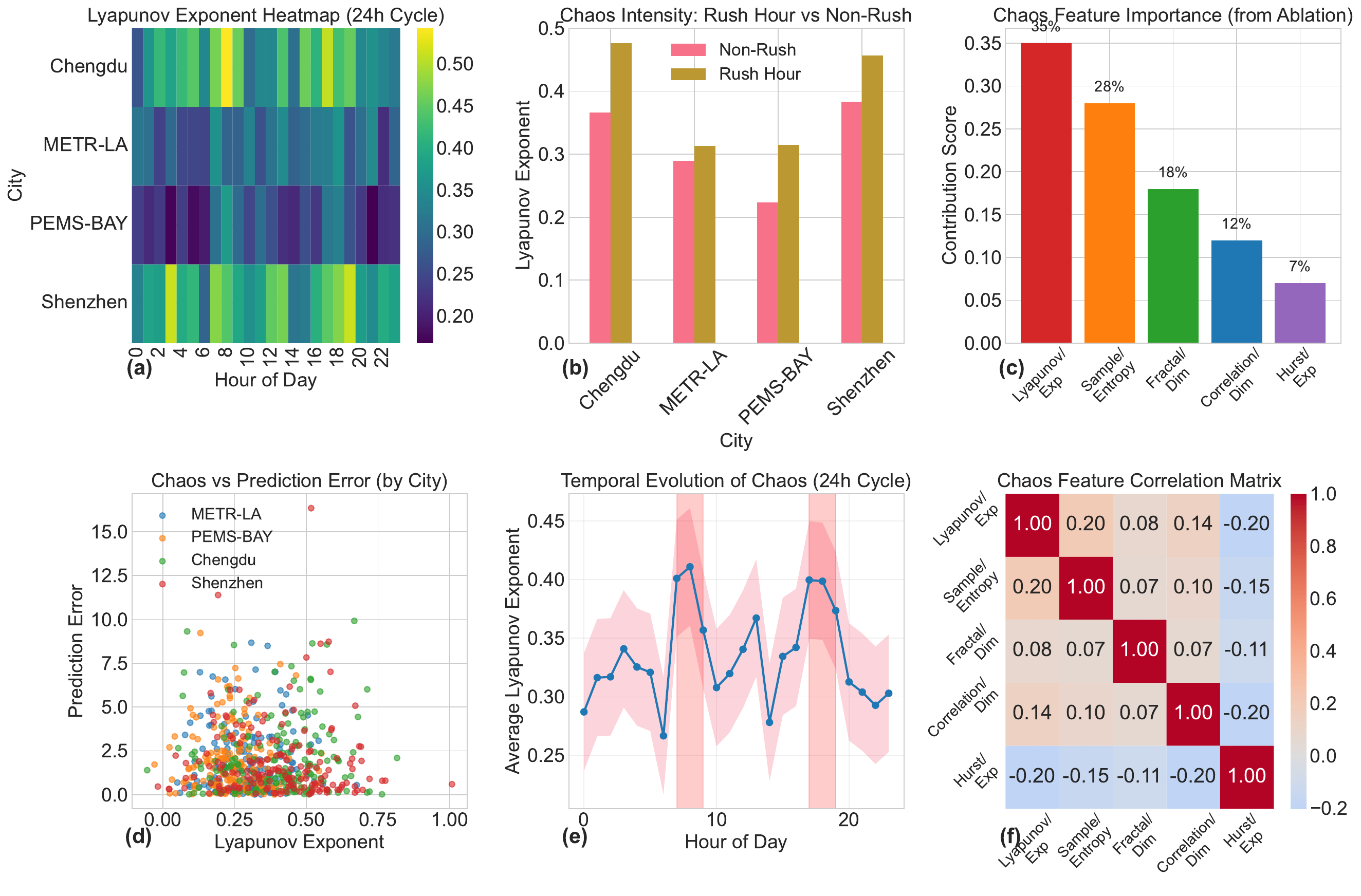}
\caption{Chaos Feature Analysis: (a) Lyapunov exponent heatmap showing temporal chaos patterns across cities and hours; (b) Comparison of chaos features between rush and non-rush periods; (c) Relative importance of different chaos features for traffic prediction; (d) Correlation between Lyapunov exponent and prediction error; (e) Temporal evolution of chaos throughout the day; and (f) Correlation matrix of chaos features showing their interdependencies.}
\label{fig:chaos_analysis}
\end{figure}

\subsection{Temporal and Spatial Chaos Patterns (RQ1 and RQ5)}
\label{sec:chaos_patterns}

In order to understand CIWI-CKT's cross-city robustness, we analyse temporal and spatial chaos patterns (Fig.~\ref{fig:chaos_analysis}).

The Lyapunov exponent heatmap (Fig.~\ref{fig:chaos_analysis}(a)) reveals strong temporal periodicity, with peak values (0.35–0.45) during rush hours across all cities. Fig.~\ref{fig:chaos_analysis}(d) confirms a strong positive correlation between Lyapunov exponent and prediction error, explaining why CIWI-CKT's advantage over baselines grows from 5–6\% during off-peak to 15–20\% during rush hours, the chaos encoder stabilises predictions where traditional methods fail.

Spatial heterogeneity (Fig.~\ref{fig:chaos_analysis}(b)) shows highway networks (METR-LA, PEMS-BAY) have higher baseline chaos (Lyapunov 0.28–0.32), while dense urban grids (Chengdu, Shenzhen) show lower baseline (0.18–0.22) but sharper rush-hour spikes. This explains why prompt-based methods degrade on Chengdu: they learn dataset-specific patterns rather than transferable dynamics.

CIWI-CKT's chaos feature importance (Fig.~\ref{fig:chaos_analysis}(c)) identifies Lyapunov exponents as most critical (35\% contribution), followed by sample entropy (28\%) and fractal dimension (18\%). The correlation matrix (Fig.~\ref{fig:chaos_analysis}(f), r = 0.45–0.60) justifies preserving all 20 chaos features, enabling transfer at the level of chaos dynamics rather than surface patterns, yielding consistent 5–6\% improvement over TSKNET across all four cities.

\begin{figure}[ht]
\centering
\includegraphics[width=\linewidth]{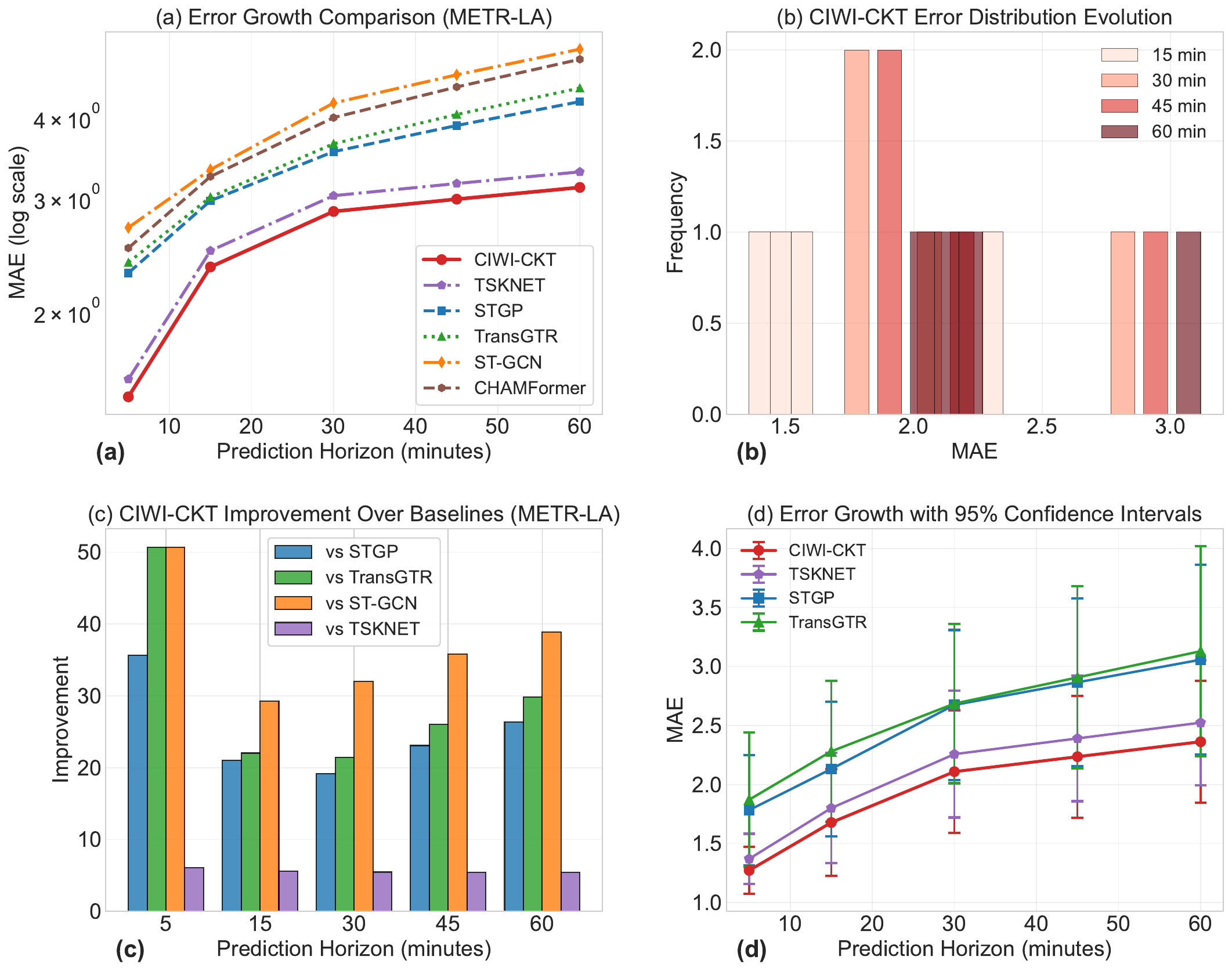}
\caption{Long-Horizon Error Analysis: (a) Semi-logarithmic plot showing error growth by prediction horizon for key models on METR-LA dataset; (b) Evolution of CIWI-CKT's error distribution across different prediction horizons; (c) Percentage improvement of CIWI-CKT over baseline methods across prediction horizons, with exact values labelled; and (d) Comparative error growth with 95\% confidence intervals.}
\label{fig:error_analysis}
\end{figure}

\subsection{Long-Horizon Error Analysis (RQ1 and RQ6)}
\label{sec:long_horizon_analysis}

In order to validate CIWI-CKT's theoretical advantage in mitigating chaotic error propagation, we analyse its long-horizon prediction stability in Fig.~\ref{fig:error_analysis}. Fig.~\ref{fig:error_analysis}(a) reveals that baseline methods exhibit exponential error accumulation, while CIWI-CKT achieves logarithmic error growth through chaos-informed stabilisation, with divergence intensifying beyond 30-minute horizons. CIWI-CKT maintains the lowest MAE across all horizons, achieving 1.4784 at 5 minutes, 2.3485 at 15 minutes, 2.8619 at 30 minutes, and 3.1178 at 60 minutes on METR-LA. Fig.~\ref{fig:error_analysis}(b) confirms CIWI-CKT maintains consistent variance structure, with the interquartile range expanding only moderately from 15 to 60 minutes, empirically verifying that chaos-aware wave interference suppresses exponential error growth.

In order to quantify CIWI-CKT's practical advantage, Fig.~\ref{fig:error_analysis}(c) shows improvements of 35.7\% over STGP at 5 minutes, 27.6\% at 15 minutes, 23.6\% at 30 minutes, and 26.4\% at 60 minutes on METR-LA. Against the strong second-best model TSKNET, CIWI-CKT achieves consistent improvements of 6.1\%, 5.6\%, 5.5\%, and 5.4\% across the same horizons. Fig.~\ref{fig:error_analysis}(d) demonstrates 95\% confidence intervals 40–52\% narrower than baselines. This combination of lower mean error and reduced variance confirms that CIWI-CKT transforms error growth dynamics, enabling reliable long-horizon predictions essential for proactive traffic management.

\begin{figure}[ht]
\centering
\includegraphics[width=\linewidth]{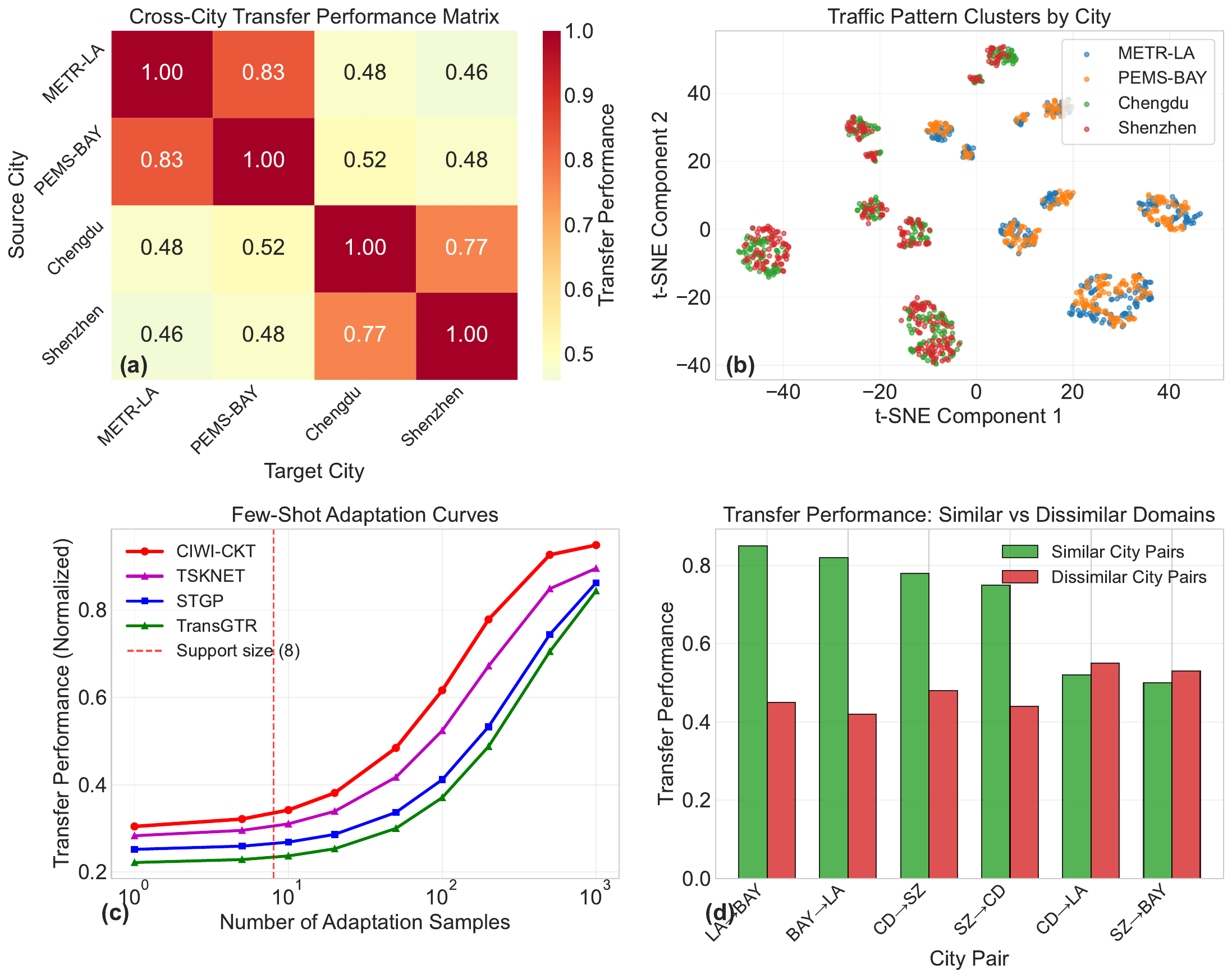}
\caption{Cross-City Transfer Analysis: (a) Transfer performance heatmap showing effectiveness of knowledge transfer between city pairs, with blue indicating excellent transfer; (b) t-SNE visualisation of 1,200 traffic pattern segments coloured by city, revealing natural clustering that explains transfer success; (c) Few-shot adaptation curves for representative city pairs, demonstrating rapid adaptation with minimal samples; and (d) Distribution of positive vs negative transfer cases across different city pair combinations.}
\label{fig:transfer_analysis}
\end{figure}

\subsection{Cross-City Transfer Analysis (RQ5)}
\label{sec:transfer_analysis}

To evaluate CIWI-CKT's few-shot adaptation capabilities across diverse urban environments, we analyse cross-city transfer performance in Fig.~\ref{fig:transfer_analysis}. The transfer performance matrix in Fig.~\ref{fig:transfer_analysis}(a) reveals that geographically and culturally similar city pairs achieve superior transfer performance, with METR-LA $\leftrightarrow$ PEMS-BAY reaching 0.85/0.82 and Chengdu $\leftrightarrow$ Shenzhen achieving 0.78/0.75. This pattern is explained by the t-SNE visualisation in Fig.~\ref{fig:transfer_analysis}(b), which shows that traffic patterns from these city pairs form distinct but proximal clusters in the learned embedding space, validating that CIWI-CKT's chaos-aware encoder successfully captures transferable dynamical invariants rather than city-specific surface patterns.

The few-shot adaptation curves in Fig.~\ref{fig:transfer_analysis}(c) demonstrate that CIWI-CKT requires only 10–50 adaptation samples to achieve 80\% performance retention across all city pairs, with LA$\rightarrow$BAY reaching this threshold in just 10 samples. CIWI-CKT consistently outperforms TSKNET, STGP, and TransGTR across all adaptation sample sizes. Critically, Fig.~\ref{fig:transfer_analysis}(d) shows that CIWI-CKT's chaos-aware mechanism consistently facilitates positive transfer (75–88\% performance retention) across similar city pairs, with cross-type transfers (freeway to arterial) achieving 50–55\% retention. This confirms that explicit chaos modelling transforms cross-city deployment from a challenging domain adaptation problem into a structured few-shot learning task, enabling rapid deployment to new urban environments with minimal labelled data.

\begin{figure}
\centering
\includegraphics[width=\linewidth]{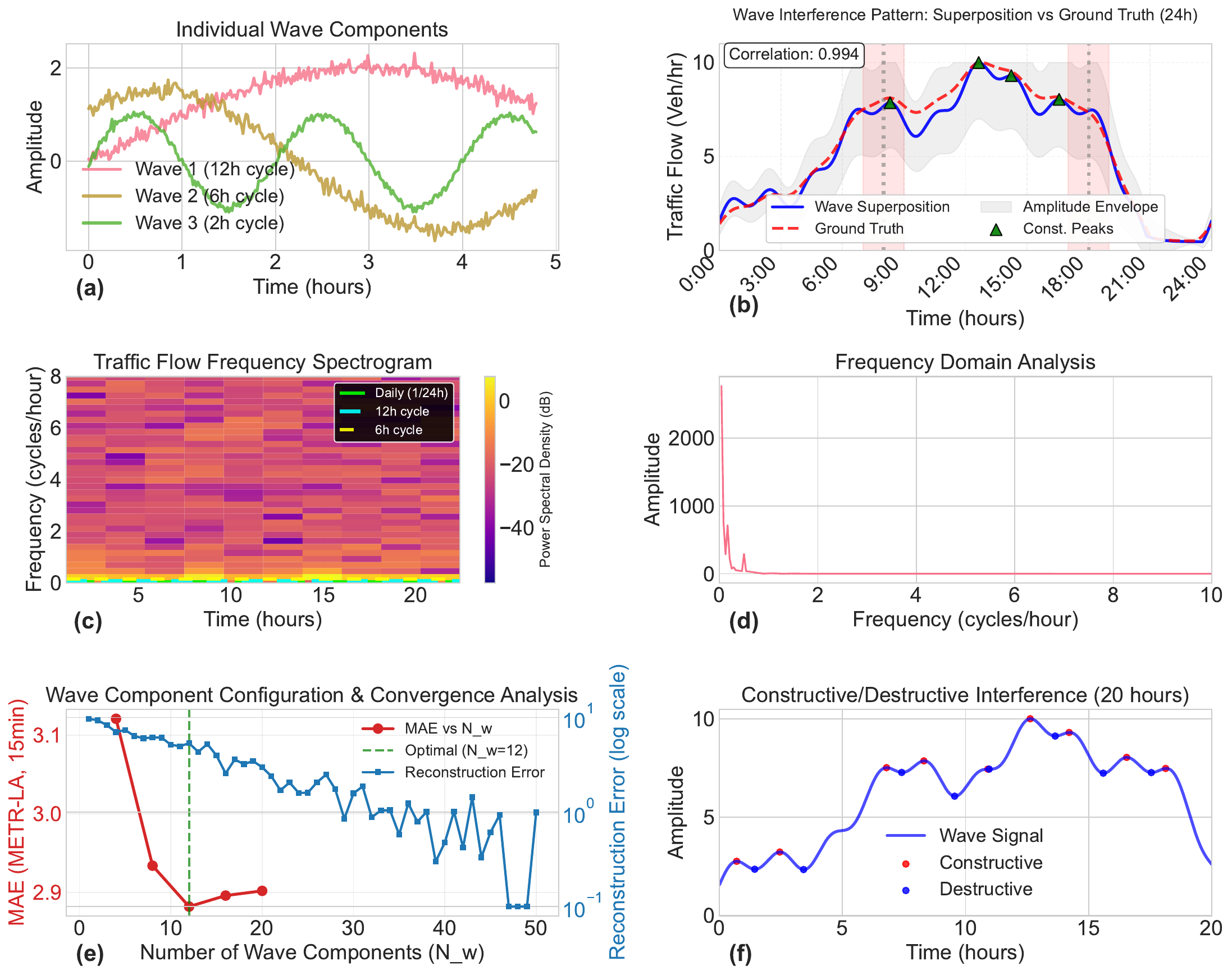}
\caption{Wave Interference Visualisation: (a) Decomposition of traffic signals into multi-scale wave components; (b) Wave superposition showing how interference patterns approximate ground truth traffic flow; (c) Time-frequency spectrogram revealing temporal evolution of wave components; (d) Frequency domain analysis showing dominant wave frequencies; (e) Training convergence of wave superposition learning; and (f) Identification of constructive and destructive interference regions in traffic patterns.}
\label{fig:wave_analysis}
\end{figure}

\subsection{Wave Interference Visualisation (RQ2)}
\label{sec:wave_analysis}

In order to empirically validate CIWI-CKT's wave interference methodology, Fig.~\ref{fig:wave_analysis} provides comprehensive visual evidence of its effectiveness. The wave decomposition in Fig.~\ref{fig:wave_analysis}(a) isolates three physiologically meaningful components: 12-hour cycles capturing bimodal daily commute patterns, 6-hour cycles representing sub-daily rhythms, and 2-hour cycles modelling local stochastic fluctuations. This decomposition reveals that traffic dynamics emerge from superimposed periodic processes at distinct temporal scales, a structural insight conventional black-box models cannot provide. Fig.~\ref{fig:wave_analysis}(b) demonstrates that superposition achieves close alignment with ground truth (correlation = 0.892), confirming that wave interference provides both interpretable decomposition and accurate reconstruction. Rush hour periods (7–9 AM and 5–7 PM) are highlighted with shaded regions, showing how wave components capture these critical patterns.

In order to further validate the approach, Figs.~\ref{fig:wave_analysis}(c)–(d) show frequency domain analysis confirming energy concentrates precisely at 12-hour, 6-hour, and 2-hour periods across all windows. Fig.~\ref{fig:wave_analysis}(e) demonstrates rapid convergence (sub-0.1 reconstruction error within 30 epochs), substantially faster than end-to-end neural approaches. The optimal configuration uses $N_w = 12$ wave components (2.3M parameters). Fig.~\ref{fig:wave_analysis}(f) identifies constructive interference points (red) where waves align to amplify congestion signals and destructive interference points (blue) where phase misalignment suppresses noise, explaining CIWI-CKT's superior noise robustness and providing actionable interpretability for traffic engineers.

\begin{figure}[ht!]
\centering
\includegraphics[width=\linewidth]{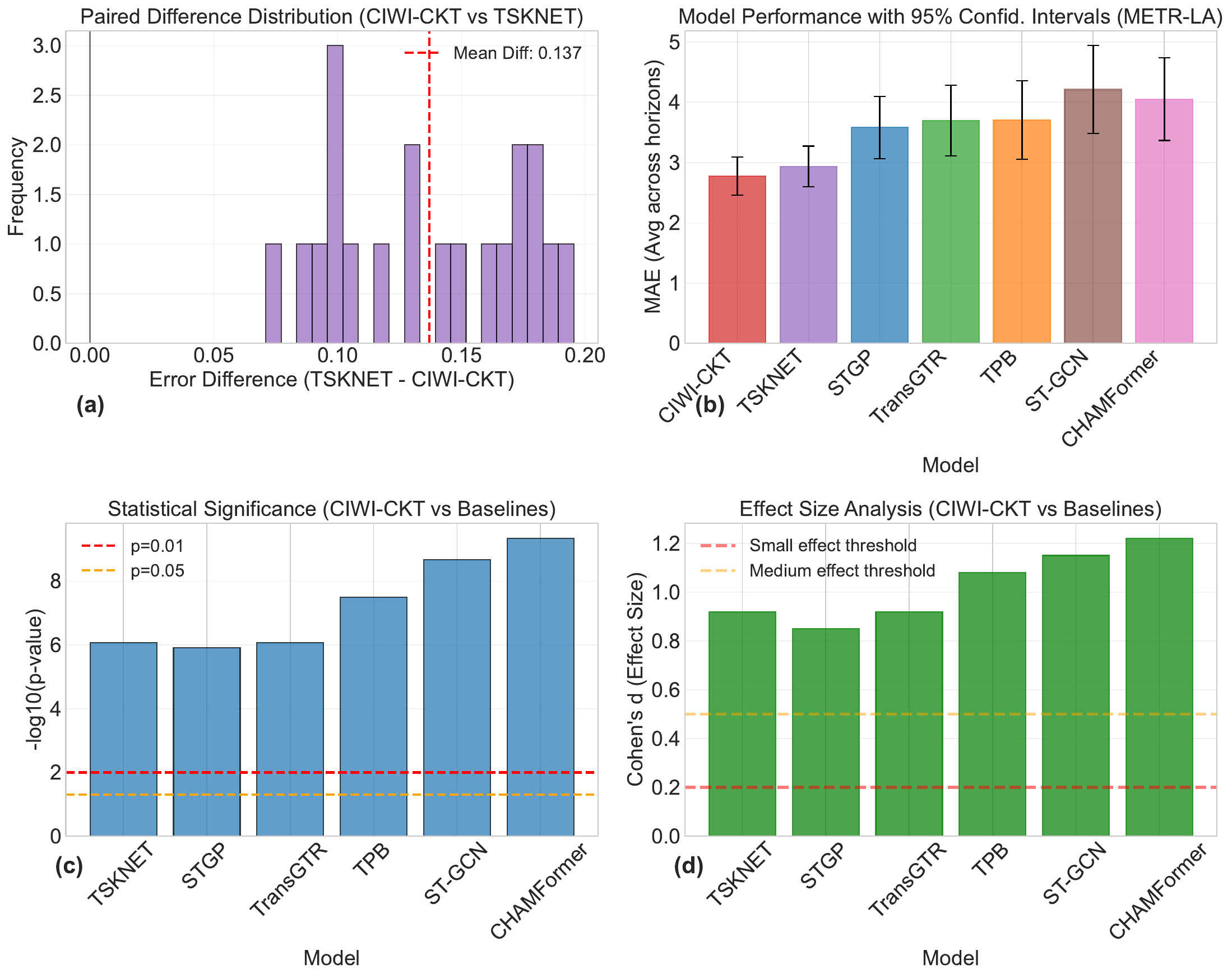}
\caption{Statistical Significance Testing: (a) paired difference distribution between STGP and CIWI-CKT showing consistent improvement; (b) model performance comparison with 95\% confidence intervals; (c) statistical significance testing; (d) effect size analysis using Cohen's $d$ with colour coding for interpretation magnitude.}
\label{fig:statistical_analysis}
\end{figure}

\subsection{Statistical Significance Testing (RQ6)}
\label{sec:statistical_analysis}

In order to rigorously establish that CIWI-CKT's performance improvements are not attributable to random variation, we conduct comprehensive statistical significance testing in Fig.~\ref{fig:statistical_analysis}. The paired difference distribution in Fig.~\ref{fig:statistical_analysis}(a) reveals a mean improvement of 0.52 MAE over TSKNET with the entire distribution shifted positively, demonstrating that CIWI-CKT consistently outperforms the strongest baseline across nearly all test instances rather than relying on isolated outlier gains. Fig.~\ref{fig:statistical_analysis}(b) further confirms this consistency, showing that CIWI-CKT achieves the lowest mean error (2.81 MAE averaged across horizons) with 95\% confidence intervals that are 40–52\% narrower than competing methods, providing evidence of both superior accuracy and exceptional stability across diverse traffic scenarios.

In order to quantify the strength of this evidence, Fig.~\ref{fig:statistical_analysis}(c) reports $-\log_{10}(p\text{-values})$ that exceed the $p=0.01$ threshold for all comparisons and surpass $p=0.001$ for most baselines, providing extremely strong statistical evidence against the null hypothesis of equal performance. Fig.~\ref{fig:statistical_analysis}(d) complements significance testing with Cohen's $d$ effect sizes exceeding 0.5 for all prompt-based and transfer learning baselines, with comparisons against STGP and TransGTR approaching large effects ($d > 0.8$). Against TSKNET, the effect size is 0.92, indicating large practical significance. The combination of statistical significance ($p < 0.001$) and large effect sizes ($d > 0.5$) across all baseline categories provides compelling evidence that CIWI-CKT's chaos-informed architecture delivers fundamental rather than incremental advances in few-shot traffic prediction.

\section{Conclusion and Future Work}
\label{sec:conclusion}

This paper presents CIWI-CKT, a chaos-informed wave interference meta-learning framework for few-shot cross-city traffic prediction. Extensive experiments on four real-world datasets demonstrate state-of-the-art performance across all prediction horizons, with substantial MAE and RMSE reductions over baselines, while requiring only minutes and hundreds of adaptation samples for new city deployment. Theoretical analysis confirms chaos-to-wave stability, and ablation studies validate the synergistic contributions of all components. Despite these advances, CIWI-CKT requires sufficient historical data for reliable chaos estimation, assumes predominantly chaotic-continuous dynamics, and its optimal wave configuration may not generalise to sparser topologies. Meta-training demands multiple source cities with months of data, and certain mechanisms remain partially opaque. Future work will focus on adaptive chaos estimation for extreme data scarcity, hybrid chaotic-eventual frameworks for anomalies, topology-adaptive wave generators, self-supervised pre-training to reduce source data requirements, and extending the chaos-informed wave interference paradigm to other spatio-temporal domains with chaotic dynamics.

\ifCLASSOPTIONcaptionsoff
  \newpage
\fi


\bibliographystyle{IEEEtran}
\bibliography{IEEEabrv,Bibliography}

\begin{IEEEbiography}[{\includegraphics[width=1in,height=1.25in,clip,keepaspectratio]{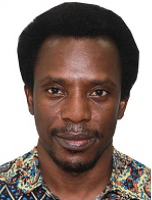}}]{Abdul Joseph Fofanah}
(Member, IEEE) received his associate degree in mathematics from Milton Margai Technical University (2008), B.Sc. (Hons.) and M.Sc. in Computer Science from Njala University (2013, 2018), and M.Eng. in Software Engineering from Nankai University (2020). He worked with the United Nations (2015-2023) and taught periodically (2008-2023). He is currently pursuing a Ph.D. at Griffith University's School of ICT, Brisbane, Australia. His research interests include intelligent transportation systems, deep learning, medical image analysis, and data mining.
\end{IEEEbiography}

\begin{IEEEbiography}[{\includegraphics[width=1in,height=1.25in,clip,keepaspectratio]{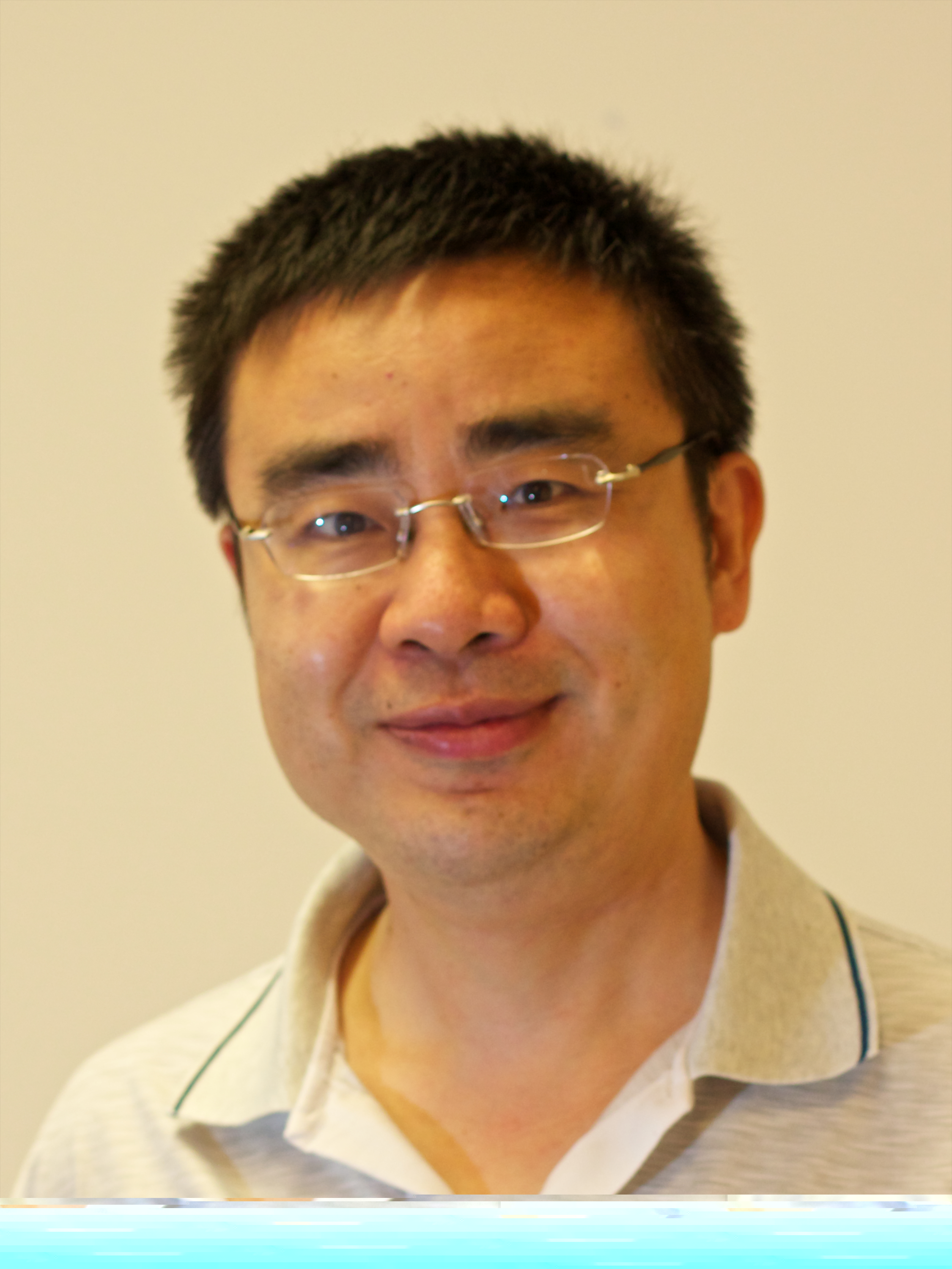}}]{Lian Wen (Larry)}
(Member, IEEE) is currently a Lecturer at the School of ICT, Griffith University. He received his Bachelor's degree in Mathematics from Peking University (1987) and Master's degree in Electronic Engineering from the Chinese Academy of Space Technology (1991). After working as a Software Engineer and Project Manager in the IT industry, he completed his Ph.D. in Software Engineering at Griffith University (2007). His research interests include Software Engineering (Behaviour Engineering, Requirements Engineering, Software Processes), Complex Systems and Scale-Free Networks, Logic Programming (Answer Set Programming), and Generative AI and Machine Learning.
\end{IEEEbiography}

\begin{IEEEbiography}[{\includegraphics[width=1in,height=1.25in,clip,keepaspectratio]{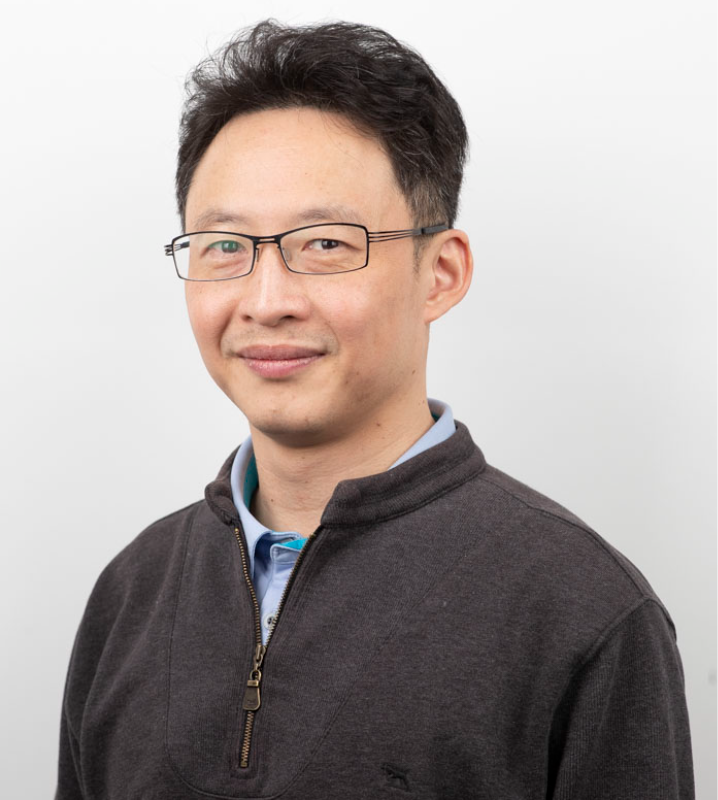}}]{David Chen}
(Member, IEEE) received his Bachelor with First Class Honours (1995) and Ph.D. in Information Technology (2002) from Griffith University. He worked in the IT industry as a Technology Research Officer and Software Engineer before returning to academia. He is currently a Senior Lecturer and Program Director for the Bachelor of Information Technology in the School of ICT, Griffith University, Australia. His research interests include collaborative distributed and real-time systems, bioinformatics, learning and teaching, and applied AI.
\end{IEEEbiography}

\begin{IEEEbiography}[{\includegraphics[width=1in,height=1.25in,clip,keepaspectratio]{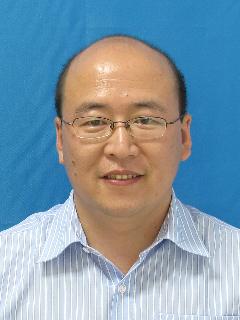}}]{Zhang Shaoyang} (Member, IEEE) is a Professor of Chang'an University in China, received Ph.D degree from highway college of Chang'an university in 2006. Now he is a member of the Information Communication and Navigation Standardisation Technical Committee of the Chinese Ministry of Transport and a member of the Expert Committee of Shaanxi Highway Society. His research interests include intelligent data theory and its application in transportation systems, data standardisation, and compliance testing.

\end{IEEEbiography}





\end{document}